\newcommand{\explain}[1]{\tag*{(#1)}}
\newcommand{\cmark}{\ding{51}}%
\newcommand{\fS}{\mathcal{S}}
\newcommand{\fA}{\mathcal{A}}
\newcommand{\fY}{\mathcal{Y}}
\newcommand{\fF}{\mathcal{F}}
\newcommand{\fO}{\mathcal{O}}
\newcommand{\R}{\mathbb{R}}
\newcommand{\E}{\mathbb{E}}
\newcommand{\ns}{{|\fS|}}
\newcommand{\bop}{\mathcal{T}}
\newcommand{\tb}[1]{{\textbf{#1}}}
\newcommand{\indot}[2]{{\left<#1, #2\right>}}
\newcommand{\tref}[1]{\text{\ref{#1}}}
\newcommand{\aref}[1]{\text{A\ref{#1}}}
\newcommand{\e}[1]{\norm{w_{t_{#1}} - w_*}_m^2}
\newcounter{assucounter}
\numberwithin{assucounter}{section}
\newtheorem{assumption}[assucounter]{Assumption}
\newenvironment{assumptionp}[1]{
  
  \assumptionalt
}{\endassumptionalt}
\begin{document}

\title{Almost Sure Convergence Rates and Concentration of Stochastic Approximation and Reinforcement Learning \\with Markovian Noise}

\author{\name Xiaochi Qian$^*$ \email xiaochi.joe.qian@gmail.com \\
\addr Department of Computer Science\\
\addr University of Oxford \\
\addr Wolfson Building, Parks Rd, Oxford, OX1 3QD, UK
\AND
\name Zixuan Xie$^*$ \email xie.zixuan@email.virginia.edu\\
  \addr Department of Computer Science\\
  \addr University of Virginia\\
  85 Engineer's Way, Charlottesville, VA, 22903, United States
\AND
\name Xinyu Liu$^*$ \email xinyuliu@virginia.edu\\
  \addr Department of Computer Science\\
  \addr University of Virginia\\
  85 Engineer's Way, Charlottesville, VA, 22903, United States
\AND
\name Shangtong Zhang \email shangtong@virginia.edu \\
  \addr Department of Computer Science\\
  \addr University of Virginia\\
  85 Engineer's Way, Charlottesville, VA, 22903, United States
  }

\editor{}

\maketitle

\begin{abstract}
  This paper establishes the first almost sure convergence rate and the first maximal concentration bound with exponential tails for general contractive stochastic approximation algorithms with Markovian noise.
  As a corollary,
  we also obtain convergence rates in $L^p$.
  Key to our successes is a novel discretization of the mean ODE of stochastic approximation algorithms using intervals with diminishing (instead of constant) length.
  As applications,
  we provide the first almost sure convergence rate for $Q$-learning with Markovian samples without count-based learning rates.
  We also provide the first concentration bound for off-policy temporal difference learning with Markovian samples.
  \let\svthefootnote\thefootnote
  \let\thefootnote\relax\footnotetext{$^*$ indicates equal contribution.}
  \let\thefootnote\svthefootnote
\end{abstract}

\begin{keywords}
  almost sure convergence rate, maximal concentration bound, contractive stochastic approximation, reinforcement learning
\end{keywords}

\section{Introduction}
Stochastic approximation refers to a class of algorithms that update some weight vector iteratively and stochastically \citep{robbins1951stochastic,benveniste1990MP,kushner2003stochastic,borkar2009stochastic}.
Notable stochastic approximation algorithms include stochastic gradient descent (SGD, \citet{kiefer1952Stochastic}), $Q$-learning \citep{watkins1989learning,watkins1992q}, and temporal difference learning (TD, \citet{sutton1988learning}).
Typically,
a stochastic approximation algorithm adopts the following form
\begin{align}
  \label{eq sa update}
  \tag{SA}
  w_{t+1} =& w_t + \alpha_t (H(w_t, Y_{t+1}) - w_t).
\end{align}
Here $\qty{w_t \in \R^d}$ is the stochastic iterates, $\qty{\alpha_t}$ is a sequence of learning rates,
$\qty{Y_t}$ is a sequence of random noise evolving in some space $\fY$,
and $H: \R^d \times \fY \to \R^d$ is the function that generates the incremental update.
The algorithm~\eqref{eq sa update} can be viewed as a generalization of the running average.
Indeed, if $H(w, y)$ is independent of $w$, the learning rate $\alpha_t$ has the form of $\alpha_t = 1/(t+1)$, and $\qty{Y_t}$ are i.i.d.,
then $w_t$ can be regarded as the running average for computing $\E\qty[H(Y_t)]$.
The law of large numbers (LLN) guarantees the almost sure convergence of a running average to the true expectation.
Similarly,
numerous works have been able to prove the almost sure convergence of $\qty{w_t}$ in~\eqref{eq sa update} \citep{benveniste1990MP,kushner2003stochastic,borkar2009stochastic,borkar2021ode,liu2024ode}.
It is well known that LLN is accompanied by a few other theorems that characterize the fluctuation of the running average,
e.g., 
the central limit theorem (CLT), 
the functional central limit theorem (FCLT),
Hoeffding's inequality,
and the law of the iterated logarithm (LIL).
Similarly,
versions of CLT and FCLT have also been well established for the stochastic iterates $\qty{w_t}$ in~\eqref{eq sa update} \citep{benveniste1990MP,borkar2009stochastic,borkar2021ode}.
But the development of a Hoeffding's inequality (cf. a maximal concentration bound with exponential tails) and a LIL (cf. an almost sure convergence rate) for~\eqref{eq sa update} is less satisfactory,
as summarized by
Tables~\ref{tbl sa works} and~\ref{tbl sa works concentration}.
We defer a detailed discussion of those works to Section~\ref{sec related}.
But in short,
previous works usually fall short in two aspects.
The first is that many previous works assume $H$ is linear and then expand their analysis based on products of Hurwitz matrices.
But in many important reinforcement learning (RL, \citet{sutton2018reinforcement}) algorithms,
the corresponding $H$ is nonlinear.
The second is that many previous works assume $\qty{Y_t}$ are i.i.d.
But in many important RL algorithms,
$\qty{Y_t}$ is a Markov chain.

\begin{table}[b]
  \centering
\begin{tabular}{c|c|c}
   \hline & $H$ & Markovian $\qty{Y_t}$ \\\hline
   \citet{szepesvari1997asymptotic} & $Q$-learning & \\ \hline
   \citet{TADIC2002455} & Linear TD & \cmark \\ \hline
   \citet{chong1999noise} & Linear & \cmark \\ \hline
   \citet{tadic2004almost} & Linear & \cmark \\ \hline
   \citet{kouritzin2015convergence} & Linear & \\ \hline
   \citet{koval2003law} & General & \\ \hline
   \citet{vidyasagar2023convergence} & General & \\ \hline
   \citet{karandikar2024convergence} & General & \\ \hline
   Theorem~\ref{thm lil} & General & \cmark \\ \hline
\end{tabular}
\caption{\label{tbl sa works} Almost sure convergence rates of stochastic approximation algorithms. 
Details are in Section~\ref{sec related}.
}
\end{table}

\begin{table}[t]
  \centering
\begin{tabular}{c|c|c|c|c|c|c}
   \hline & $H$ & $\qty{Y_t}$ & $\alpha_t$ & $\forall t$ & $\ln(\frac{1}{\delta})$ & \cancel{$\sup \norm{w_t}$} \\\hline
   \citet{even2003learning} & $Q$-learning & & \cmark & & \cmark &\\ \hline
   \citet{li2021tightening} & $Q$-learning & & & & \cmark & \\ \hline
   \citet{li2024q} & $Q$-learning & \cmark & & & \cmark & \\ \hline
   \citet{korda2015td} & Linear TD & \cmark & \cmark & \cmark & \cmark & \cmark \\ \hline
   \citet{dalal2018finite} & Linear TD & & \cmark & & \cmark & \cmark \\ \hline
   \citet{chandak2023concentration} & Linear TD & \cmark & \cmark & \cmark & & \cmark \\ \hline
   \citet{prashanth2021concentration} & Batch TD & & \cmark & \cmark & \cmark & \\ \hline
   \citet{dalal2018finitesample} & Linear & & & & \cmark & \cmark  \\ \hline
   \citet{durmus2021tight} & Linear & & \cmark & \cmark & \cmark & \cmark \\ \hline
   \citet{qu2020finite} & General & \cmark & \cmark & \cmark & \cmark&  \\ \hline
   \citet{thoppe2015concentration} & General & & \cmark & & \cmark \\ \hline
   \citet{chandak2022concentration} & General & \cmark & \cmark & \cmark & \cmark \\ \hline
   \citet{mou2022optimal} & General & & & & \cmark & \cmark\\ \hline
   \citet{chen2023concentration} & General & & \cmark & \cmark & \cmark & \cmark\\ \hline
   Theorem~\ref{thm concentration} & General & \cmark & \cmark & \cmark & \cmark & \cmark\\ \hline
\end{tabular}
\caption{\label{tbl sa works concentration} 
Concentration of stochastic approximation algorithms. 
``$\qty{Y_t}$'' is checked if Markovian noise is allowed. 
``$\alpha_t$'' is checked if there is no need to tune the learning rate during the execution of the algorithm based on runtime statistics. 
``$\forall t$'' stands for ``maximal'' concentration bound and is checked if the bound holds for all $t$ or it holds for $\forall t \geq t_0$ with $t_0$ being some deterministic constant, independent of sample path and the probability parameter $\delta$.
``$\ln\qty(\frac{1}{\delta})$'' is checked if exponential tail is obtained.
``\cancel{$\sup \norm{w_t}$}'' is checked if establishing the concentration bound does not require a priori that the iterates are bounded by some deterministic constant almost surely.
Common ways for obtaining such a priori in previous works are (1) to assume it directly or make some other stronger assumptions, (2) to use specific properties of the specific algorithms (e.g., $Q$-learning), (3) to use a projection operator.
Details are in Section~\ref{sec related}.
} 
\end{table}

\tb{Contributions.} This work makes two key contributions towards
closing the aforementioned gaps.
Namely,
this work allows $\qty{Y_t}$ to be a Markov chain
and 
$H(w, y)$ to be nonlinear,
as long as $H(w, y)$ is a noisy estimation of some contractive operator.
Under these weak assumptions,
this work establishes, for the iterates $\qty{w_t}$ generated by~\eqref{eq sa update},
\tb{(1)} the first almost sure convergence rate (Table~\ref{tbl sa works})
and \tb{(2)} the first maximal concentration bound with exponential tails (Table~\ref{tbl sa works concentration}).
By ``maximal'',
we mean that our bound holds for all $t$.
As a corollary,
we also obtain $L^p$ convergence rates.
Notably,
as it can be seen from Tables~\ref{tbl sa works} and~\ref{tbl sa works concentration},
this work is the only work that appears simultaneously in both tables.
We also note that in those tables, we do not include works that only consider SGD as many RL algorithms are not SGD \citep{sutton2018reinforcement}.
The discussion of SGD is deferred to Section~\ref{sec related}.
As applications (Section~\ref{sec rl app}),
we obtain the first almost sure convergence rate for $Q$-learning with Markovian samples without count-based learning rates and the first concentration bound for off-policy TD with Markovian samples.

\tb{Technical Innovation.}
A common approach to studying the stochastic and discrete iterates in~\eqref{eq sa update} is to relate them to deterministic and continuous trajectories of the mean ODE
  $\dv{w(t)}{t} = h(w(t)) - w(t)$ with $h(w)$ denoting the expectation of $H(w, y)$.
This ODE-based approach has seen celebrated success in establishing the desired almost sure convergence in past decades \citep{benveniste1990MP,bertsekas1996neuro,kushner2003stochastic,borkar2009stochastic,borkar2021ode,liu2024ode}.
In those works,
the non-negative real axis is divided into intervals with a fixed length $T$,
corresponding to Euler's discretization of the ODE with an interval $T$ up to some noise.
The key technical innovation in our work is to divide the non-negative real axis into intervals with diminishing length $\qty{T_m}$.
Namely, we will construct a sequence of anchors $\qty{t_m}_{m=0,1,\dots}$ such that the distance between two anchors, $\bar \alpha_m = \sum_{t=t_m}^{t_{m+1}-1} \alpha_t$, is roughly $T_m$. 
Then we examine the iterates $\qty{w_t}$ in~\eqref{eq sa update} in the timescale $\qty{t_m}$.
In other words, we rewrite~\eqref{eq sa update} as~\eqref{eq skeleton sa}
\begin{align}
  \label{eq skeleton sa}
  \tag{Skeleton SA}
  w_{t_{m+1}} = w_{t_m} + \bar \alpha_m \qty(h(w_{t_m}) - w_{t_m}) + z_m,
\end{align}
where $z_m$ is the new noise and $\qty{\bar \alpha_m}$ is the new learning rates.
Since this $z_m$ accumulates the noise between $t_m$ and $t_{m+1}$,
it exhibits some useful averaging effects that facilitate the analysis.
If we follow previous work \citet{bertsekas1996neuro} and use a constant $T$ to discretize the ODE,
then $\qty{\bar \alpha_m}$ is almost constant and
this~\eqref{eq skeleton sa} has a roughly constant learning rate.
It is well known that a constant learning rate stochastic approximation algorithm, in general, does not converge almost surely \citep{kushner2003stochastic}.
By using a diminishing $\qty{T_m}$ to discretize the ODE,
the new learning rates $\qty{\bar \alpha_m}$ also diminish, and 
we can then expect~\eqref{eq skeleton sa} to converge almost surely and further obtain an almost sure convergence rate and a concentration bound based on supermartingales.

\tb{Notations.} We use $\indot{\cdot}{\cdot}$ to denote the standard Euclidean inner product.
We use $\fO(\cdot)$ and $\Theta(\cdot)$ to hide deterministic constants for simplifying presentation.
In other words, for two non-negative sequence $\qty{a_n}$ and $\qty{b_n}$,
we say $a_n = \fO(b_n)$ if $\exists n_0, C, \forall n \geq n_0, a_n \leq C b_n$.
We say $a_n = \Theta(b_n)$ if $a_n = \fO(b_n)$ and $b_n = \fO(a_n)$.

\section{Main Results}
\begin{assumption}
  \label{assu markov chain}
  The Markov chain $\qty{Y_t}$ adopts a unique stationary distribution $d_\fY$ and mixes geometrically.
  In other words,
  let $P$ be the transition kernel of $\qty{Y_t}$ and $P^n$ be the $n$-step transition kernel.
  Then there exist some constants $\varrho \in [0, 1)$ and $C_\aref{assu markov chain}$ such that for any $y, n$
  \begin{align}
    \textstyle \int_{\fY} \abs{P^n(y, y') - d_\fY(y')} \dd y' \leq C_\aref{assu markov chain} \varrho^n.
  \end{align}
\end{assumption}
This geometrically mixing assumption is standard in analyses concerning Markovian noise, see, e.g., \citet{bertsekas1996neuro,srikant2019finite,borkar2021ode}.
Define $h(w) \doteq \E_{y\sim d_\fY}\qty[H(w, y)]$.
Then, the expected update $h(w) - w$ can be viewed as the residual of the fixed point problem of $h(w) = w$.
This motivates the following contraction assumption.
\begin{assumption}
  \label{assu contraction}
  There exist some $\kappa \in [0, 1)$ and some norm $\norm{\cdot}$ such that for any $w, w'$, 
  \begin{align}
    \norm{h(w) - h(w')} \leq \kappa \norm{w - w'}.
  \end{align}
\end{assumption}
We then use $w_*$ to denote the unique fixed point of $h$,
thanks to Banach's fixed point theorem.
Since the norm $\norm{\cdot}$ in Assumption~\ref{assu contraction} is arbitrary,
this contraction assumption is easily satisfied in many RL algorithms.
\begin{remark}[Pseudo-Contraction]
  Similar to \citet{zhang2023convergence}, it is easy to verify that all our proofs will still go through if $h$ is merely a pseudo-contraction,
  as long as the existence and uniqueness of a fixed point $w_*$ can be established.
  Example algorithms associated with pseudo-contractions include SARSA with linear function approximation \citep{de2000existence}.
\end{remark}
\begin{remark}[Linear Stochastic Approximation]
  \label{rem linear}
For the special case where $h(w) - w$ is linear, i.e., $h(w) - w = Aw + b$,
it commonly holds in RL algorithms that the matrix $A$ is Hurwitz \citep{liu2024ode}.
Then we can rewrite $h(w) - w$ as $\frac{1}{\beta} (\beta Aw + \beta b)$.
The outer $\frac{1}{\beta}$ can be absorbed into the learning rate.
So it is equivalent to just considering $h(w) - w = \beta Aw + \beta b + w - w$.
Proposition 2.4 of \citet{chen2023concentration} proves that $w \mapsto \beta Aw + \beta b + w$ is a contraction w.r.t. some weighted $\ell_2$ norm for some carefully chosen $\beta$.
This means our results apply to linear stochastic approximation \citep{lakshminarayanan2018linear,srikant2019finite} in the form of
\begin{align}
  \label{eq linear sa}
  \tag{Linear SA}
  w_{t+1} = w_t + \alpha_t (A(Y_{t+1})w_t + b(Y_{t+1})) 
\end{align}
as well,
as long as $\E\qty[A(y)]$ is Hurwitz and $\qty{Y_t}$ behaves well in the sense of Assumption~\ref{assu markov chain}.
\end{remark}
\begin{assumption}
  \label{assu Lipschitz}
  There exists some finite $L_h$ such that for any $w, w', y$, it holds that
  \begin{align}
    \label{eq G Lipschitz}
    \norm{H(w, y) - H(w', y)} \leq L_h \norm{w - w'}.
  \end{align}
  Moreover, $\sup_{y \in \fY} \norm{H(0, y)} < \infty$.
\end{assumption}


\begin{assumptionp}{LR1}
  \label{assu lr}
  The learning rate has the form 
  \begin{align}
    \alpha_t = \frac{C_\alpha}{(t+3)^{\nu}} \qq{with} \nu \in (\frac{2}{3}, 1].
  \end{align}
\end{assumptionp}
\begin{assumptionp}{LR2}
  \label{assu lr2}
  The learning rate has the form 
  \begin{align}
    \alpha_t = \frac{C_\alpha}{(t+3)\ln^\nu(t+3)} \qq{with} \nu \in (0, 1).
  \end{align}
\end{assumptionp}
The key challenge in designing the~\eqref{eq skeleton sa} is to carefully balance the choice of $\alpha_t$ and $T_m$ such that the new noise $z_m$ has the proper averaging effect. 
The lower bound $\frac{2}{3}$ and the $\ln^\nu(t+3)$ term are all artifacts from this balance. 
Assumption~\ref{assu lr} will be used for establishing the almost sure convergence rates.
Assumption~\ref{assu lr2} will be used for establishing the concentration bounds.
For Assumption~\ref{assu lr2}, a smaller $\nu$ is preferred.

\begin{theorem}[Almost Sure Convergence Rates]
  \label{thm lil}
  Let Assumptions \ref{assu markov chain} -~\ref{assu Lipschitz}  and~\ref{assu lr} hold.
  Let $\qty{w_t}$ be the iterates generated by~\eqref{eq sa update}.
If $\nu < 1$, then for any $\zeta \in (0, \frac{3}{2} \nu - 1)$, 
  \begin{align}
    \lim_{t\to\infty} t^\zeta \norm{w_t - w_*}^2 = 0 \qq{a.s.}
  \end{align}
If $\nu = 1$, then for any $\zeta \in (0, 1)$ and $\nu_1 > 0$,
  \begin{align}
    \lim_{t\to\infty} \exp(\zeta \ln^{1/(1+\nu_1)}t) \norm{w_t - w_*}^2 = 0 \qq{a.s.}
  \end{align}
\end{theorem}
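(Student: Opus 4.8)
The plan is to work on the skeleton timescale $\qty{t_m}$ of~\eqref{eq skeleton sa}, derive a one-step almost-supermartingale inequality for a reweighted version of $\norm{w_{t_m} - w_*}^2$, invoke the Robbins--Siegmund theorem, and transfer the resulting rate back to $\qty{w_t}$. Concretely, fix a tuning exponent $\rho = \rho(\nu) \in (0, \nu]$ and define the anchors greedily: $t_0 = 0$ and $t_{m+1} = \min\qty{t > t_m : \sum_{s = t_m}^{t-1} \alpha_s \ge (1 + t_m)^{-\rho}}$, so that $\bar\alpha_m = \Theta\qty((1+t_m)^{-\rho})$, the block length is $t_{m+1} - t_m = \Theta\qty((1+t_m)^{\nu - \rho})$, and, via Assumption~\ref{assu lr}, $\sum_{j < m}\bar\alpha_j = \Theta\qty((1+t_m)^{1-\nu})$ and $t_m = \Theta\qty(m^{1/(1-\nu+\rho)})$ when $\nu < 1$; when $\nu = 1$ the polynomial bookkeeping is replaced by $\sum_{j<m}\bar\alpha_j = \Theta(\ln t_m)$, so the contraction of the mean ODE accumulates only logarithmically, which is the source of the sub-polynomial rate in that case. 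These identities form the dictionary between $m$ and $t$.

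Next, the accumulated noise is split as $z_m = z_m^{\mathrm{disc}} + z_m^{\mathrm{mix}} + z_m^{\mathrm{mart}}$. The term $z_m^{\mathrm{disc}} = \sum_{s=t_m}^{t_{m+1}-1}\alpha_s\qty[(h(w_s) - h(w_{t_m})) - (w_s - w_{t_m})]$ is the error of freezing the drift at the anchor; using~\eqref{eq sa update} and Assumption~\ref{assu Lipschitz} one shows $\norm{w_s - w_{t_m}} = \fO\qty(\bar\alpha_m(1 + \norm{w_{t_m} - w_*}))$ throughout the block — the per-step expansion by $L_h$ is harmless because the block carries total step-mass $\bar\alpha_m \to 0$ — whence $\norm{z_m^{\mathrm{disc}}} = \fO\qty(\bar\alpha_m^2(1+\norm{w_{t_m}-w_*}))$. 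The genuine stochastic part $\sum_{s=t_m}^{t_{m+1}-1}\alpha_s\qty[H(w_s,Y_{s+1}) - h(w_s)]$ is treated with the solution $\hat H$ of the Poisson equation $\hat H(w,\cdot) - P\hat H(w,\cdot) = H(w,\cdot) - h(w)$ (which exists, is $\fO(1)$-Lipschitz in $w$, and satisfies $\sup_y\norm{\hat H(w,y)} = \fO(1+\norm{w})$ by Assumptions~\ref{assu markov chain} and~\ref{assu Lipschitz}) together with Abel summation, giving a martingale-difference term $z_m^{\mathrm{mart}}$ with $\E\qty[z_m^{\mathrm{mart}}\mid\fF_{t_m}]=0$ and $\E\qty[\norm{z_m^{\mathrm{mart}}}^2\mid\fF_{t_m}] = \fO\qty(\bar\alpha_m\alpha_{t_m}(1+\norm{w_{t_m}-w_*}^2))$, plus a remainder $z_m^{\mathrm{mix}}$ whose leading piece is the telescoping boundary term of size $\fO\qty(\alpha_{t_m}(1+\norm{w_{t_m}-w_*}))$ — crucially, this Markovian bias is not averaged down over the block.

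Passing to an equivalent inner-product norm (only constants change), writing $w_{t_{m+1}} - w_* = (1-\bar\alpha_m)(w_{t_m} - w_*) + \bar\alpha_m(h(w_{t_m}) - h(w_*)) + z_m$, and using Assumption~\ref{assu contraction}, Cauchy--Schwarz, and Young's inequality with parameter of order $\bar\alpha_m$ on the cross terms with $z_m^{\mathrm{disc}}, z_m^{\mathrm{mix}}$, one gets for all large $m$
\begin{align}
  \E\qty[\norm{w_{t_{m+1}} - w_*}^2 \mid \fF_{t_m}] \le \qty(1 - c\bar\alpha_m)\norm{w_{t_m} - w_*}^2 + \fO\qty(\tfrac{\alpha_{t_m}^2}{\bar\alpha_m} + \bar\alpha_m^3 + \bar\alpha_m\alpha_{t_m})\qty(1 + \norm{w_{t_m} - w_*}^2)
\end{align}
for some $c \in (0,1)$. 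Since every coefficient on the right is summable (using $\nu > \tfrac12$ and $\rho$ in a suitable range) and the dependence on $\norm{w_{t_m}-w_*}^2$ is only linear, Robbins--Siegmund with unit weight already delivers almost sure boundedness of $\qty{w_{t_m}}$ and $\norm{w_{t_m}-w_*}^2 \to 0$ a.s., so no a priori bound on the iterates is needed. For the rate, re-run the same argument on $V_m \doteq u_m\norm{w_{t_m}-w_*}^2$ with a deterministic increasing $\qty{u_m}$ chosen so that $u_{m+1}(1-c\bar\alpha_m) \le u_m(1-\tfrac{c}{2}\bar\alpha_m)$; Robbins--Siegmund then forces $V_m \to 0$ once the reweighted error sum $\sum_m u_{m+1}\qty(\frac{\alpha_{t_m}^2}{\bar\alpha_m} + \bar\alpha_m^3 + \bar\alpha_m\alpha_{t_m})$ and the reweighted conditional second moments of the martingale increments are finite. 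Substituting the first-step identities and optimizing over $\rho$ produces the largest admissible weight: $u_m = \Theta(t_m^\zeta)$ for any $\zeta < \tfrac32\nu - 1$ when $\nu < 1$, and $u_m = \exp\qty(\zeta\ln^{1/(1+\nu_1)}t_m)$ when $\nu = 1$ — with $\nu > \tfrac23$ being exactly the requirement that this threshold be positive. Finally, for $t_m \le t < t_{m+1}$ one has $\norm{w_t - w_*} \le \norm{w_{t_m}-w_*} + \fO\qty(\bar\alpha_m(1+\norm{w_{t_m}-w_*}))$ with $u_m^{1/2}\bar\alpha_m \to 0$, so $u_m\norm{w_t-w_*}^2 \to 0$; since $u_m = \Theta(u(t))$ uniformly over the $m$-th block, where $u(t) = t^\zeta$ (resp. $\exp(\zeta\ln^{1/(1+\nu_1)}t)$), this is the claim.

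\emph{Main obstacle.} The crux is the joint design of the noise bound and the almost-supermartingale step: bounding $z_m$ tightly — the non-averaging Markovian boundary term $\fO(\alpha_{t_m})$ is the bottleneck — and then choosing the block exponent $\rho$ so that \emph{all} of the competing contributions (the squared Markovian bias divided by $\bar\alpha_m$, the $\fO(\bar\alpha_m^3)$ freezing error, the $\fO(\bar\alpha_m\alpha_{t_m})$ martingale variance, and the extra reweighted martingale second moments that the almost-sure — rather than merely $L^2$ — statement demands) become simultaneously summable against the heaviest possible weight $u_m$. This single balancing act pins the rate exponent to $\tfrac32\nu-1$, forces the learning-rate assumption to exclude $\nu \le \tfrac23$, and, at the boundary $\nu = 1$, degrades the rate to sub-polynomial.
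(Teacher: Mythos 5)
Your overall architecture is the same as the paper's: shrinking blocks on the real axis, a per-block noise decomposition, an almost-supermartingale recursion at the anchors, and a transfer back to the original timescale. But there is one step that fails as written. You pass ``to an equivalent inner-product norm (only constants change)'' and then expand $\norm{(1-\bar\alpha_m)(w_{t_m}-w_*)+\bar\alpha_m(h(w_{t_m})-h(w_*))}^2$ to extract the drift $-c\bar\alpha_m\norm{w_{t_m}-w_*}^2$. Assumption~\ref{assu contraction} only gives a $\kappa$-contraction with respect to an \emph{arbitrary} norm (in the RL applications it is $\norm{\cdot}_\infty$), and a nonlinear $\kappa$-contraction in $\norm{\cdot}_\infty$ is in general not a contraction in any inner-product norm: switching to an equivalent inner-product norm multiplies the modulus by the equivalence ratio $u/l$, and once $\tfrac{u}{l}\kappa\geq 1$ the cross term $\indot{w_{t_m}-w_*}{h(w_{t_m})-h(w_*)}$ can no longer be bounded by anything smaller than $\norm{w_{t_m}-w_*}^2$, so no negative drift survives and Robbins--Siegmund gives nothing. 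This is exactly the obstruction the paper's Moreau-envelope Lyapunov function is built to remove: $M$ is smooth (so the increment can be expanded and the conditionally centered term isolated) while its equivalence constants satisfy $u_{cm}/l_{cm}\to 1$ as $\xi\to 0$, so the effective modulus $\tfrac{u_{cm}}{l_{cm}}\kappa$ stays below $1$ (Lemmas~\ref{lem property of M} and~\ref{lem bound leading term}). Note also that your two uses of the inner product are in tension: you need a norm close to $\norm{\cdot}$ for the contraction, and an inner-product structure for the orthogonality $\E\qty[\norm{\sum_s\xi_s}^2\mid\fF_{t_m}]=\sum_s\E\qty[\norm{\xi_s}^2\mid\fF_{t_m}]$; the resolution is to keep the Moreau envelope for the drift and push the martingale term into the smoothness penalty $\tfrac{L}{\xi}\norm{\cdot}_s^2$ with $\norm{\cdot}_s$ Euclidean, which is what the paper does.

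Two further remarks. First, your noise decomposition is genuinely different from the paper's: you solve the Poisson equation and Abel-sum, getting a true in-block martingale with conditional variance $\fO(\bar\alpha_m\alpha_{t_m})$ plus an $\fO(\alpha_{t_m})$ boundary bias, whereas the paper freezes $w$ at the anchor, centers conditionally on $\fF_{t_m}$, and kills the mixing bias $z_{3,m}$ using the balance $\alpha_t\leq C_{\tref{lem lr bounds}}T_m^2$ (Lemma~\ref{lem lr bounds}); both routes are viable under Assumptions~\ref{assu markov chain} and~\ref{assu Lipschitz}. Second, your final optimization is asserted rather than derived: plugging your own stated error orders $\tfrac{\alpha_{t_m}^2}{\bar\alpha_m}+\bar\alpha_m^3+\bar\alpha_m\alpha_{t_m}$ into the summability condition with $\bar\alpha_m=\Theta(t_m^{-\rho})$, $\alpha_{t_m}=\Theta(t_m^{-\nu})$ and block density $\Theta(t^{\rho-\nu})$ balances at $\rho=\nu/2$ and yields $\zeta<2\nu-1$, not $\tfrac{3}{2}\nu-1$ (positive already for $\nu>\tfrac12$). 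Since you identify this balancing act as the crux, the mismatch indicates it was not actually carried through; it does not contradict the theorem (a stronger rate would still imply it), but it means the quantitative half of the argument is unverified, and if any of your per-block bounds is optimistic the slack is hidden precisely there.
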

The proof is in Section~\ref{sec main proof}.
To our knowledge,
Theorem~\ref{thm lil} is the first almost sure convergence rate for general contractive stochastic approximation algorithms with Markovian noise (Table~\ref{tbl sa works}).

\begin{theorem}[Concentration]
  \label{thm concentration}
  Let Assumptions~\ref{assu markov chain} -~\ref{assu Lipschitz}  and~\ref{assu lr2} hold.
  Then for any $\nu \in (0, 1)$,
  there exist some deterministic constants $\bar C_\alpha, C_{Th\tref{thm concentration}}, C_{Th\tref{thm concentration}}',$ and integer $C_{\tref{lem concentration in m}}$ such that
  the iterates $\qty{w_t}$ generated by~\eqref{eq sa update} satisfy
  the following concentration property whenever $C_\alpha \geq \bar C_\alpha$:
  for any $\delta \in (0, 1)$,
  it holds,
  with probability at least $1 - \delta$,
  that for all $t \geq 0$,
  \begin{align}
    \norm{w_t - w_*}^2 \leq C_{\text{Th}\tref{thm concentration}}\exp(\frac{-\ln^{1-\nu} (t+1)}{1-\nu})\left[\log(\frac{1}{\delta}) + C_{\text{Th}\tref{thm concentration}}' + \frac{\ln^{1-\nu} (t+1)}{1-\nu}\right]^{C_{\tref{lem concentration in m}}}.
  \end{align}
  
\end{theorem}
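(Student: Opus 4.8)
The plan is to work in the diminishing-interval timescale $\{t_m\}$ introduced before~\eqref{eq skeleton sa} and to establish a high-probability bound for the skeleton iterates $\{w_{t_m}\}$ first, then transfer it to the full sequence $\{w_t\}$. So the first step is to fix a discretization: choose interval lengths $\{T_m\}$ (slowly decaying, compatible with Assumption~\ref{assu lr2}) and define anchors $t_m$ so that $\bar\alpha_m = \sum_{t=t_m}^{t_{m+1}-1}\alpha_t \asymp T_m$. Summing~\eqref{eq sa update} over a block and using Assumption~\ref{assu contraction} (with a one-step expansion bound from Assumption~\ref{assu Lipschitz}) gives the recursion~\eqref{eq skeleton sa}, $w_{t_{m+1}} = w_{t_m} + \bar\alpha_m(h(w_{t_m}) - w_{t_m}) + z_m$, where $z_m$ collects (i) the Markovian-noise fluctuations $H(w_t,Y_{t+1}) - h(w_t)$ accumulated over the block and (ii) the discretization error coming from $w_t$ drifting away from $w_{t_m}$ within the block. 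The contraction gives $\norm{w_{t_m} + \bar\alpha_m(h(w_{t_m})-w_{t_m}) - w_*} \le (1 - (1-\kappa)\bar\alpha_m)\norm{w_{t_m}-w_*}$, so that
\begin{align}
  \norm{w_{t_{m+1}} - w_*} \le (1 - (1-\kappa)\bar\alpha_m)\norm{w_{t_m} - w_*} + \norm{z_m}.
\end{align}

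The second step is to control $z_m$ with exponential tails. Here I would split $z_m = z_m^{\mathrm{mart}} + z_m^{\mathrm{bias}}$ where $z_m^{\mathrm{mart}}$ is the martingale-difference part (after conditioning on the state at the block start) and $z_m^{\mathrm{bias}}$ is the mixing bias, which by Assumption~\ref{assu markov chain} is $\fO(\bar\alpha_m \varrho^{(\text{block length})}) $ plus a term $\fO(L_h \bar\alpha_m^2 \cdot(\text{block iterate variation}))$; with the chosen $\{T_m\}$ the bias is summably smaller than $\bar\alpha_m^2$. For the martingale part, a Freedman/Azuma-type maximal inequality applied to the supermartingale built from $\exp(\lambda_m \norm{w_{t_m}-w_*}^2)$ (or a weighted running sum of $\norm{z_k^{\mathrm{mart}}}^2$) yields, simultaneously for all $m$, a bound of the form $\sum_{k \le m}(\prod_{j=k+1}^{m}(1-(1-\kappa)\bar\alpha_j))\norm{z_k} \le$ (deterministic decaying envelope)$\times(\log(1/\delta) + \text{const} + m$-dependent polylog term$)^{1/2}$. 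Iterating the one-step contraction and bounding the product $\prod_j(1-(1-\kappa)\bar\alpha_j) \le \exp(-(1-\kappa)\sum_j\bar\alpha_j)$, and recalling $\sum_{j\le m}\bar\alpha_j \asymp \sum_{t \le t_{m+1}}\alpha_t \asymp \ln^{1-\nu}(t_{m+1})/(1-\nu)$ under Assumption~\ref{assu lr2}, gives exactly the $\exp(-\ln^{1-\nu}(t+1)/(1-\nu))$ decay factor and the bracketed polylog factor raised to some integer power $C_{\tref{lem concentration in m}}$ appearing in the statement. This is where the condition $C_\alpha \ge \bar C_\alpha$ enters: we need $(1-\kappa)\bar\alpha_m$ large enough that the geometric contraction dominates the residual noise growth per block.

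The third step is the transfer from $\{w_{t_m}\}$ back to $\{w_t\}$: for $t \in [t_m, t_{m+1})$, a crude Grönwall/Lipschitz bound using Assumption~\ref{assu Lipschitz} shows $\norm{w_t - w_*} \le e^{L_h T_m}\norm{w_{t_m}-w_*} + (\text{within-block noise})$, and since $T_m \to 0$ the factor $e^{L_h T_m}$ is bounded, so the skeleton bound propagates to all $t$ with only a constant loss — this is what makes the bound "maximal" (holds for all $t \ge 0$, folding finitely many initial blocks into $C_{\text{Th}\tref{thm concentration}}'$). The main obstacle is the second step: getting a genuinely exponential (in $\log(1/\delta)$) maximal concentration bound for the accumulated Markovian noise $z_m$ while the iterates $\{w_t\}$ are not assumed bounded a priori — the noise magnitude depends on $\norm{w_t}$, so the supermartingale argument and the iterate-norm control must be run simultaneously (a self-bounding / bootstrap argument), carefully choosing the free parameter $\lambda_m$ in the exponential supermartingale so that the quadratic-in-$z_m$ terms are absorbed by the contraction slack $(1-\kappa)\bar\alpha_m$. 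Balancing the block lengths $\{T_m\}$ against $\{\alpha_t\}$ so that the mixing bias, the discretization error, and the martingale variance are all simultaneously controlled is the delicate bookkeeping that the lower bound $2/3$ and the $\ln^\nu$ factor in Assumptions~\ref{assu lr}–\ref{assu lr2} are designed to accommodate.
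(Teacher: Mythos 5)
Your outline follows the same architecture as the paper's proof: skeleton iterates on a diminishing timescale, a three-way decomposition of the block noise into discretization drift, martingale part, and mixing bias, an exponential supermartingale with a maximal inequality, and a G\"onwall-type transfer from $\{w_{t_m}\}$ back to $\{w_t\}$. You also correctly locate where $C_\alpha \geq \bar C_\alpha$ enters. However, there is a genuine gap at exactly the point you flag as ``the main obstacle'': you name the self-bounding/bootstrap issue but do not resolve it, and the resolution is the actual content of the proof. The paper's mechanism is (a) an induction basis, namely a deterministic almost-sure polynomial bound $\norm{w_{t_m}-w_*}_m^2 \leq C' m^{C}$ obtained from the discrete Gronwall inequality (Lemma~\ref{lem inductive basis}) --- this is what substitutes for the a priori boundedness you cannot assume; and (b) an inductive step (Lemma~\ref{lem inductive step}) in which the tunable parameter of the exponential supermartingale is set to $\lambda_m = \theta T_m^{-1} B_m(\delta)^{-1}$, where $B_m(\delta)$ is the \emph{current} high-probability envelope, so that each round improves the envelope by exactly one factor of $T_m \asymp 1/m$. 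Running this exactly $C+1$ times (and no more, or the envelope stops being non-decreasing) turns $m^{C}$ into $\Theta(1/m)$ and, crucially, is what produces the integer exponent $C_{\tref{lem concentration in m}}$ on the bracketed logarithmic factor in the statement. You instead attribute both the $\exp(-\ln^{1-\nu}(t+1)/(1-\nu))$ decay and the exponent to a single Freedman-type bound combined with the product $\prod_j(1-(1-\kappa)\bar\alpha_j)$; that direct route does not close because the variance proxy of the martingale part is $T_m^2\norm{w_{t_m}-w_*}_m^2$, which is not controlled until the bootstrap has already run.

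Two further points. First, your displayed one-step recursion is in the unsquared norm, where the zero-mean noise $z_{2,m}$ enters additively at order $T_m$; summing $\sum_k \prod_{j>k}(1-cT_j)\,T_k$ gives $\fO(1)$ and no decay. The paper avoids this by working with a smooth Moreau-envelope surrogate of $\norm{\cdot}^2$ (Lemma~\ref{lem property of M}), so that $z_{2,m}$ appears only through the inner product $\indot{\nabla\norm{w_{t_m}-w_*}_m^2}{z_{2,m}}$, which has zero conditional mean and is amenable to Hoeffding's lemma, while the quadratic remainder is $\fO(T_m^2)$; since the contraction norm is arbitrary, the smoothing step is not optional. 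Second, the mixing bias is not of the form $\bar\alpha_m\varrho^{(\text{block length})}$: it is $\sum_{t=t_m}^{t_{m+1}-1}\alpha_t\varrho^{t-t_m} \leq \alpha_{t_m}/(1-\varrho)$, and the reason this is $\fO(T_m^2)$ is precisely Lemma~\ref{lem lr bounds} ($\alpha_{t_m}\leq C T_m^2$), which is the quantitative balance between $\alpha_t$ and $T_m$ that forces the $\ln^{\nu}$ factor in Assumption~\ref{assu lr2}; this needs to be verified, not assumed.
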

The proof is in Section~\ref{sec main proof}.
To our knowledge,
Theorem~\ref{thm concentration} is the first maximal concentration bound with exponential tails for general contractive stochastic approximation algorithms with Markovian noise (Table~\ref{tbl sa works concentration}).
Integrating the exponential tails in Theorem~\ref{thm concentration} immediately gives $L^p$ convergence rates.

\begin{corollary}[$L^p$ Convergence Rates]
  \label{cor lp}
  Let conditions of Theorem~\ref{thm concentration} hold.
  Then
  the iterates $\qty{w_t}$ generated by~\eqref{eq sa update} satisfy for any $p \geq 2$ and any $t \geq 0$,
  \begin{align}
      \mathbb{E}\qty[\norm{w_t-w_*}^{2p}] \leq \qty(a(t) b(t)^{C_{\tref{lem concentration in m}}})^p +  C_{\tref{lem concentration in m}}p a(t)^p \exp(b(t)) ((C_{\tref{lem concentration in m}}p)!),
  \end{align}
  where $a(t) \doteq C_{{Th}\tref{thm concentration}}\exp\qty(-\frac{\ln^{1-\nu}(t+1)}{1-\nu})$, $b(t) \doteq C_{{Th}\tref{thm concentration}}' + \frac{\ln^{1-\nu}(t+1)}{1-\nu}$.
\end{corollary}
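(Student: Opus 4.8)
The plan is to obtain the moment bound directly from the exponential tail of Theorem~\ref{thm concentration} via the classical tail-integration identity $\E[X^p] = \int_0^\infty p u^{p-1}\mathbb{P}(X > u)\,\dd u$, valid for any nonnegative random variable $X$ and any $p \geq 1$. Fix $t \geq 0$ and write $X \doteq \norm{w_t - w_*}^2$, $a \doteq a(t)$, $b \doteq b(t)$, and $C \doteq C_{\ref{lem concentration in m}}$; since $X^p = \norm{w_t - w_*}^{2p}$, it suffices to bound $\E[X^p]$. Theorem~\ref{thm concentration} guarantees, for each $\delta \in (0,1)$, an event of probability at least $1-\delta$ on which the stated inequality holds simultaneously for all times; this event is contained in the time-$t$ event, so
\[
  \mathbb{P}\qty(X > a\qty(\log\tfrac1\delta + b)^{C}) \leq \delta, \qquad \delta \in (0,1).
\]
Parametrizing the threshold by $u = a\qty(\log\tfrac1\delta + b)^{C}$ and solving for $\delta$ gives $\delta = \exp\qty(b - (u/a)^{1/C})$, which lies in $(0,1)$ precisely when $u > a b^{C}$. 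Combining this with the trivial bound $\mathbb{P}(X > u) \leq 1$ for $u \leq a b^{C}$ yields a two-regime tail estimate.

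Next I would split the tail integral at $u = ab^{C}$. The low-threshold part contributes $\int_0^{a b^{C}} p u^{p-1}\,\dd u = \qty(a b^{C})^{p}$, which is exactly the first term $\qty(a(t) b(t)^{C_{\ref{lem concentration in m}}})^{p}$ in the claim. For the high-threshold part, $\int_{a b^{C}}^{\infty} p u^{p-1}\exp\qty(b - (u/a)^{1/C})\,\dd u$, the substitution $v = (u/a)^{1/C}$ (so $u = a v^{C}$, $\dd u = aC v^{C-1}\,\dd v$) turns the integral into $pC a^{p} e^{b}\int_{b}^{\infty} v^{Cp-1} e^{-v}\,\dd v$; enlarging the domain to $[0,\infty)$ bounds this by $pC a^{p} e^{b}\,\Gamma(Cp) \leq C p\, a^{p}\exp(b)\,\qty((Cp)!)$, where the last step uses $\Gamma(Cp) \leq \Gamma(Cp+1)$ for $Cp \geq 1$ (equivalently the convention $x! \doteq \Gamma(x+1)$). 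Summing the two contributions and substituting back $a = a(t)$, $b = b(t)$ gives precisely the asserted bound.

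Since Theorem~\ref{thm concentration} is already in hand, there is no genuine analytic obstacle; the statement is a direct consequence of the exponential tail, and the work is essentially bookkeeping. The points that need care are: (i) passing from the ``for all $t$'' simultaneous statement to the pointwise tail at a fixed $t$, which is immediate by monotonicity of probability; (ii) the fact that the parametrization $\delta = \exp\qty(b-(u/a)^{1/C})$ is admissible only for $u > ab^{C}$, which is exactly why the integral must be split and the small-$u$ range handled by the trivial bound; and (iii) interpreting the factorial of the possibly non-integer quantity $Cp$ through the Gamma function, so that $\Gamma(Cp)\le\Gamma(Cp+1)$ closes the estimate.
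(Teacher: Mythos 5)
Your proposal is correct and follows essentially the same route as the paper: the tail-integration identity, reparametrizing the concentration bound as $\Pr\qty(\norm{w_t-w_*}^2 \geq \epsilon) \leq \exp\qty(-(\epsilon/a(t))^{1/C_{\tref{lem concentration in m}}} + b(t))$ for $\epsilon \geq a(t)b(t)^{C_{\tref{lem concentration in m}}}$, splitting the integral at that threshold, and the substitution reducing the tail piece to $C_{\tref{lem concentration in m}}\, p\, a(t)^p e^{b(t)}\Gamma(C_{\tref{lem concentration in m}}p)$. Your added remark that the split point is forced by the admissible range of $\delta$, and the $\Gamma(C_{\tref{lem concentration in m}}p) \leq (C_{\tref{lem concentration in m}}p)!$ step, match the paper's argument exactly.
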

The proof is in \ref{sec proof cor lp}.
Notably, the condition $p \geq 2$ ensures that $a(t)^p$ will dominate $\exp(b(t))$.

\section{Reinforcement Learning Applications}
\label{sec rl app}
We now apply our results to generate new analyses of RL algorithms.
In particular,
we consider a Markov Decision Process \citep{bellman1957markovian,puterman2014markov} with a finite state space $\fS$,
a finite action space $\fA$,
a reward function $r: \fS \times \fA \to \R$,
a transition function $p: \fS \times \fS \times \fA \to [0, 1]$,
an initial distribution $p_0$,
and a discount factor $\gamma \in [0, 1)$.
At time step 0, an initial state $S_0$ is sampled from $p_0$.
At time step $t$,
an agent at the state $S_t$ takes an action $A_t \sim \mu(\cdot | S_t)$,
where $\mu$ is the behavior policy.
A reward $R_{t+1} \doteq r(S_t, A_t)$ is emitted, and the agent proceeds to a successor state $S_{t+1} \sim p(\cdot | S_t, A_t)$.

\paragraph*{$Q$-Learning.}
The goal of $Q$-learning is to find the optimal action value function $q_* \in \R^{\fS\times \fA}$ satisfying
  $q_* = \bop_* q_*$,
where $\bop_*: \R^{\fS \times \fA} \to \R^{\fS \times \fA}$ is the Bellman optimality operator defined as
  $(\bop_* q)(s, a) \doteq r(s, a) + \gamma \sum_{s'} p(s'|s, a) \max_{a'} q(s' ,a')$.
It is well-known that $\bop_*$ is a $\gamma$-contraction w.r.t. $\norm{\cdot}_\infty$ \citep{puterman2014markov}.
As a result, $q_*$ is well-defined and unique.
$Q$-learning is the most powerful and widely used method to find this $q_*$.
In its simplest and most straightforward form,
$Q$-learning performs the following updates to get the stochastic iterates $\qty{q_t}$ as
\begin{align}
  q_{t+1}(s, a) =&\textstyle  q_t(s, a) + \alpha_t (R_{t+1} + \gamma \max_{a'} q_t(S_{t+1}, a') - q_t(s, a)) &\qq{if $(s, a) = (S_t, A_t)$} \\
  q_{t+1}(s, a) =& q_t(s, a) &\qq{otherwise.}
\end{align}
This update rule can be written in a compact form as
\begin{align}
  \label{eq q learning}
  \tag{$Q$-learning}
  q_{t+1} =& q_t + \alpha_t(H(q_t, (S_t, A_t, S_{t+1})) - q_t) \\
  \qty[H(q, (s_0, a_0, s_1))](s, a) \doteq& \textstyle q(s, a) + \mathbbm{1}_{(s, a) = (s_0, a_0)}(r(s_0, a_0) + \gamma \max_{a_1} q(s_1, a_1) - q(s_0, a_0)),
\end{align}
where $\mathbbm{1}$ is the indicator function.
Let $d_\mu$ be the stationary state distribution under the behavior policy $\mu$.
Define $h(q) \doteq \E_{s\sim d_\mu, a\sim \mu(\cdot|s), s' \sim p(\cdot | s, a)}\qty[H(q, (s, a, s'))]$.
Proposition 3.1 of \citet{chen2021lyapunov} proves that $h(q)$ is a $\gamma'$-contraction w.r.t. $\norm{\cdot}_\infty$ for some $\gamma' \in [0, 1)$ and has $q_*$ as its unique fixed point,
which immediately allows us to invoke Theorems~\ref{thm lil} and~\ref{thm concentration}.
\begin{theorem}
  \label{thm q learning}
  Let the chain $(S_t, A_t, S_{t+1})$ induced by the behavior policy $\mu$ be irreducible and aperiodic.
  If Assumption~\ref{assu lr} holds,
  then the conclusion of Theorem~\ref{thm lil} holds with $\qty{w_t}$ identified as $\qty{q_t}$ generated by~\eqref{eq q learning}.
  If Assumption~\ref{assu lr2} holds,
  then the conclusion of Theorem~\ref{thm concentration} holds with $\qty{w_t}$ identified as $\qty{q_t}$ generated by~\eqref{eq q learning}.
\end{theorem}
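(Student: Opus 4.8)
The plan is to show that the $Q$-learning iteration~\eqref{eq q learning} is an instance of~\eqref{eq sa update} whose data satisfy all of Assumptions~\ref{assu markov chain}--\ref{assu Lipschitz}, so that Theorems~\ref{thm lil} and~\ref{thm concentration} apply verbatim with $\qty{w_t}$ identified as $\qty{q_t}$ and $\qty{Y_t}$ identified as the triple process $\qty{(S_{t-1}, A_{t-1}, S_t)}$. Since the learning-rate hypotheses (Assumption~\ref{assu lr} or~\ref{assu lr2}) are assumed directly in the statement, only Assumptions~\ref{assu markov chain}--\ref{assu Lipschitz} remain to be checked.

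First I would verify Assumption~\ref{assu markov chain}. The key observation is that $Y_{t+1} \doteq (S_t, A_t, S_{t+1})$ is itself a Markov chain: conditioned on the past, the next triple $Y_{t+2} = (S_{t+1}, A_{t+1}, S_{t+2})$ has a distribution depending only on $S_{t+1}$, since $A_{t+1} \sim \mu(\cdot \mid S_{t+1})$ and $S_{t+2} \sim p(\cdot \mid S_{t+1}, A_{t+1})$, and $S_{t+1}$ is a component of $Y_{t+1}$. This chain lives on the finite set $\qty{(s,a,s') : \mu(a\mid s) > 0,\ p(s'\mid s,a) > 0}$, and by hypothesis it is irreducible and aperiodic. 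A finite irreducible aperiodic Markov chain is geometrically ergodic (Perron--Frobenius), which furnishes the unique stationary distribution $d_\fY$ and constants $\varrho \in [0,1)$, $C_\aref{assu markov chain}$ for which the total-variation bound in Assumption~\ref{assu markov chain} holds, the integral there collapsing to a finite sum. I would also record the consistency $d_\fY(s,a,s') = d_\mu(s)\mu(a\mid s)p(s'\mid s,a)$, so that $\E_{y\sim d_\fY}\qty[H(q,y)]$ coincides with the $h(q)$ defined in the $Q$-learning paragraph.

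Next I would verify Assumptions~\ref{assu contraction} and~\ref{assu Lipschitz}. For Assumption~\ref{assu contraction}, Proposition 3.1 of \citet{chen2021lyapunov}, already quoted above, gives directly that $h(q)$ is a $\gamma'$-contraction w.r.t. $\norm{\cdot}_\infty$ with unique fixed point $q_*$; since Assumption~\ref{assu contraction} permits an arbitrary norm, this suffices. For Assumption~\ref{assu Lipschitz}, a direct computation on the explicit form of $H$ shows that, coordinatewise, $[H(q,(s_0,a_0,s_1)) - H(q',(s_0,a_0,s_1))](s,a)$ equals $q(s,a) - q'(s,a)$ off the coordinate $(s_0,a_0)$ and equals $\gamma\qty(\max_{a_1} q(s_1,a_1) - \max_{a_1} q'(s_1,a_1))$ on it; since $q \mapsto \max_{a_1} q(s_1, a_1)$ is $1$-Lipschitz in $\norm{\cdot}_\infty$ and $\gamma < 1$, this yields $\norm{H(q,y) - H(q',y)}_\infty \leq \norm{q - q'}_\infty$, i.e. $L_h = 1$. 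Finally $H(0,(s_0,a_0,s_1))(s,a) = \mathbbm{1}_{(s,a)=(s_0,a_0)}\, r(s_0,a_0)$, whose $\norm{\cdot}_\infty$ is at most $\max_{s,a}\abs{r(s,a)} < \infty$ because the MDP is finite, so $\sup_{y}\norm{H(0,y)}_\infty < \infty$.

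With all assumptions in place, the two conclusions follow by applying Theorem~\ref{thm lil} (under Assumption~\ref{assu lr}) and Theorem~\ref{thm concentration} (under Assumption~\ref{assu lr2}). I do not anticipate a serious obstacle: the only point requiring genuine care is the first step --- making precise that the driving noise of $Q$-learning is an honest finite, irreducible, aperiodic Markov chain (not merely a deterministic function of one), so that the geometric-mixing bound demanded in Assumption~\ref{assu markov chain} is literally the statement of geometric ergodicity for that chain; everything else is a routine unwinding of definitions plus the cited contraction result.
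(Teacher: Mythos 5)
Your proposal is correct and follows essentially the same route as the paper: verify Assumption~\ref{assu markov chain} from finiteness plus irreducibility and aperiodicity of the triple chain, get Assumption~\ref{assu contraction} from Proposition 3.1 of \citet{chen2021lyapunov}, check Assumption~\ref{assu Lipschitz} directly for $H$, and invoke Theorems~\ref{thm lil} and~\ref{thm concentration}. Your explicit computation of the Lipschitz constant $L_h = 1$ and the bound on $\sup_y \norm{H(0,y)}_\infty$ simply spells out what the paper dismisses as ``trivially Lipschitz continuous.''
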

\begin{proof}
  Assumption~\ref{assu markov chain} is immediately implied by the finiteness of $\fS\times \fA \times \fS$ and the irreducibility and aperiodicity of the chain $(S_t, A_t, S_{t+1})$.
  The function $H$ is trivially Lipschitz continuous.
  Invoking Theorem~\ref{thm lil} then completes the proof.
\end{proof}
To better understand the contribution of Theorem~\ref{thm q learning},
we briefly discuss three variants of~\eqref{eq q learning}.
The first is \tb{synchronous $Q$-learning},
where $q_t$ is updated for each $(s, a)$, not just $(S_t, A_t)$, at time step $t$. 
This requires drawing a sample $s'$ from $p(\cdot | s, a)$ for each $(s, a)$, and the noise is, therefore, essentially i.i.d.
Moreover,
in the typical RL setup \citep{sutton2018reinforcement}, only one Markovian data stream $(\dots, S_t, A_t, R_{t+1}, S_{t+1}, \dots)$ is available so implementing synchronous $Q$-learning is impractical.
The second variant considers \tb{i.i.d. samples}.
In other words,
at each time step $t$,
a tuple $(S_t, A_t)$ is drawn independently from some distribution and the $S_{t+1}$ in~\eqref{eq q learning} is replaced with $S_t' \sim p(\cdot | S_t, A_t)$.
These i.i.d. samples are again not immediately available in the canonical RL setup.
The third variant is \tb{count-based learning rates}
where the $\alpha_t$ in~\eqref{eq q learning} is replaced by $\alpha_{n(t, S_t, A_t)}$.
Here, $n(t, s, a)$ counts the number of visits to the state action pair $(s, a)$ until time $t$.
To our knowledge,
this count-based learning rate is rarely used by RL practitioners and is mostly an artifact for theoretical analysis.
We, therefore, argue that the form of~\eqref{eq q learning} we consider in this section is the most straightforward and practical one.

Despite there have been extensive analyses of $Q$-learning (and its three variants) in various aspects,
e.g., almost sure convergence, 
high probability concentration bound,
$L^2$ convergence rates \citep{watkins1989learning,watkins1992q,jaakkola1993convergence,tsitsiklis1994asynchronous,kearns1998finite,even2003learning,azar2011speedy,beck2012error,lee2019unified,shah2018q,wainwright2019stochastic,qu2020finite,li2020sample,chen2021lyapunov,chen2023concentration,li2024q},
to our knowledge,
\tb{Theorem~\ref{thm q learning} is the first almost sure convergence rate for $Q$-learning in the form of~\eqref{eq q learning}}.
It turns out that \citet{szepesvari1997asymptotic} is the only almost sure convergence rate of $Q$-learning prior to this work.
It,
however,
uses count-based learning rates and consider i.i.d. samples\footnote{It is discussed in \citet{szepesvari1997asymptotic} that their methodology would also work for Markovian samples, but there is no formal statement and no formal proof.}.

\paragraph*{Off-Policy TD.}
Instead of finding the optimal action value function $q_*$,
another fundamental task in RL is policy evaluation,
where the goal is to estimate the value function $v_\pi$ for some policy $\pi$, called the target policy.
Here, $v_\pi$ is the unique fixed point of the Bellman operator $\bop_\pi: \R^\ns \to \R^\ns$ defined as $(\bop_\pi v)(s) = \sum_a \pi(a|s) \qty[r(s, a) + \gamma \sum_{s'} p(s'|s, a) v(s)]$.
It is well known that $\bop_\pi$ is also a $\gamma$-concentration w.r.t. $\norm{\cdot}_\infty$ \citep{puterman2014markov}.
When the data is obtained via a behavior policy $\mu$ (i.e., $A_t \sim \mu(\cdot | S_t)$),
off-policy TD computes estimates $\qty{v_t}$ of $v_\pi$ as
\begin{align}
  \label{eq td}
  \tag{off-policy TD}
  v_{t+1} =& v_t + \alpha_t(H(v_t, (S_t, A_t, S_{t+1})) - v_t) \\
  \qty[H(v, (s_0, a_0, s_1))](s) \doteq& \textstyle v(s) + \mathbbm{1}_{s = s_0}\frac{\pi(a_0|s_0)}{\mu(a_0|s_0)}(r(s_0, a_0) + \gamma v(s_1) - v(s_0)).
\end{align}
Similarly, define $h(v) \doteq \E_{s\sim d_\mu, a\sim \mu(\cdot|s), s' \sim p(\cdot | s, a)}\qty[H(v, (s, a, s'))]$.
It can then be easily proved (omitted here for simplifying presentation) that $h(v)$ is also a $\gamma$-contraction w.r.t. $\norm{\cdot}_\infty$ and adopts $v_\pi$ as its unique fixed point.
We then have
\begin{theorem}
  \label{thm td}
  Let the chain $(S_t, A_t, S_{t+1})$ induced by the behavior policy $\mu$ be irreducible and aperiodic.
  Let the behavior policy $\mu$ cover the target policy $\pi$ in the sense that $\mu(a|s) = 0 \implies \pi(a|s) = 0$.
  If Assumption~\ref{assu lr} holds,
  then the conclusion of Theorem~\ref{thm lil} holds with $\qty{w_t}$ identified as $\qty{v_t}$ generated by~\eqref{eq td}.
  If Assumption~\ref{assu lr2} holds,
  then the conclusion of Theorem~\ref{thm concentration} holds with $\qty{w_t}$ identified as $\qty{v_t}$ generated by~\eqref{eq td}.
\end{theorem}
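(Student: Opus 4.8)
The plan is to follow the template of the proof of Theorem~\ref{thm q learning}: verify that~\eqref{eq td} meets Assumptions~\ref{assu markov chain}--\ref{assu Lipschitz}, and then quote Theorems~\ref{thm lil} and~\ref{thm concentration} directly with $\qty{w_t}$ replaced by $\qty{v_t}$. Assumption~\ref{assu markov chain} I would dispatch immediately: the chain $(S_t, A_t, S_{t+1})$ takes values in the finite set $\fS \times \fA \times \fS$, so irreducibility plus aperiodicity give a unique stationary distribution and geometric mixing by standard finite-state Markov chain theory. Assumption~\ref{assu Lipschitz} I would also handle quickly: for each fixed $y = (s_0, a_0, s_1)$ the map $v \mapsto H(v,y)$ is affine, and its coefficients range over a finite set of numbers fixed by the MDP and the two policies --- in particular the importance ratios $\pi(a_0|s_0)/\mu(a_0|s_0)$, which are finite on the support of $\mu$ (and may be set to $0$ off it by convention) --- so one constant $L_h$ serves for all $y$; the bound $\sup_y \norm{H(0,y)}_\infty < \infty$ is clear since $H(0,(s_0,a_0,s_1))(s) = \mathbbm{1}_{s=s_0}\frac{\pi(a_0|s_0)}{\mu(a_0|s_0)}r(s_0,a_0)$.

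The substantive step is Assumption~\ref{assu contraction}, which the excerpt states without proof. Here I would first compute $h$ explicitly. Expanding the expectation over $s_0 \sim d_\mu$, $a_0 \sim \mu(\cdot|s_0)$, $s_1 \sim p(\cdot|s_0,a_0)$, the factor $\mu(a_0|s_0)\cdot\frac{\pi(a_0|s_0)}{\mu(a_0|s_0)}$ collapses to $\pi(a_0|s_0)$ for pairs with $\mu(a_0|s_0)>0$, and the coverage hypothesis $\mu(a|s)=0 \implies \pi(a|s)=0$ guarantees the omitted pairs would have contributed nothing, so one lands on
\begin{align}
  h(v) = (I - D_\mu)v + D_\mu \bop_\pi v, \qquad D_\mu \doteq \mathrm{diag}(d_\mu).
\end{align}
Since $D_\mu$ is invertible (irreducibility forces $d_\mu(s)>0$ for every $s$), $h(v)=v$ is equivalent to $\bop_\pi v = v$, so $v_\pi$ is the unique fixed point. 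For the contraction modulus, for any $v, v'$ and any $s$,
\begin{align}
  \abs{h(v)(s) - h(v')(s)} \le (1 - d_\mu(s))\norm{v-v'}_\infty + d_\mu(s)\gamma\norm{v-v'}_\infty = \qty(1 - d_\mu(s)(1-\gamma))\norm{v-v'}_\infty,
\end{align}
using the $\gamma$-contractivity of $\bop_\pi$ in $\norm{\cdot}_\infty$; taking $\max_s$ and setting $d_{\min} \doteq \min_s d_\mu(s) > 0$ yields $\norm{h(v)-h(v')}_\infty \le \kappa\norm{v-v'}_\infty$ with $\kappa \doteq 1 - d_{\min}(1-\gamma) \in [0,1)$. (The modulus here is a $\kappa$ near $1$ rather than literally $\gamma$, but any modulus in $[0,1)$ suffices for Assumption~\ref{assu contraction}.)

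Having verified Assumptions~\ref{assu markov chain}--\ref{assu Lipschitz}, I would conclude by invoking Theorem~\ref{thm lil} under Assumption~\ref{assu lr} and Theorem~\ref{thm concentration} under Assumption~\ref{assu lr2}. I expect the contraction step to be the only real obstacle, and within it the single place where the hypotheses bite: irreducibility is what makes $d_{\min}>0$, hence what turns the otherwise merely nonexpansive map $v \mapsto (I-D_\mu)v + D_\mu\bop_\pi v$ into a genuine contraction; and coverage is what aligns $h$ with the on-policy Bellman operator (up to the harmless $D_\mu$ reweighting), so that the fixed point is the intended $v_\pi$ rather than a biased target. Everything else is bookkeeping.
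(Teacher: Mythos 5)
Your proposal is correct and follows essentially the same route as the paper: verify Assumptions~\ref{assu markov chain}--\ref{assu Lipschitz} for~\eqref{eq td} and then invoke Theorems~\ref{thm lil} and~\ref{thm concentration}, exactly as in the proof of Theorem~\ref{thm q learning} (the paper omits the contraction verification as ``easily proved''). Your explicit computation $h(v) = (I - D_\mu)v + D_\mu \bop_\pi v$, giving the modulus $\kappa = 1 - d_{\min}(1-\gamma) \in [0,1)$ and the fixed point $v_\pi$ via $d_\mu(s) > 0$, correctly supplies that omitted detail (and is in fact sharper than the paper's loose statement that $h$ is a $\gamma$-contraction, since any modulus in $[0,1)$ is all that Assumption~\ref{assu contraction} requires).
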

The proof is the same as that of Theorem~\ref{thm q learning} and is thus omitted.
To our knowledge,
\tb{Thereom~\ref{thm td} is the first concentration bound for~\eqref{eq td} with Markovian samples}.
\citet{chen2023concentration} provide a concentration bound for \eqref{eq td} but they assume i.i.d. samples.
We also note that~\eqref{eq td} adopts many variants, including~\citet{precup:2000:eto:645529.658134,munos2016safe,harutyunyan2016q,de2018multi}.
The results in this paper apply to those variants as well, but we omit them to simplify the presentation.
We refer the reader to \citet{chen2023concentration} for details of the contraction property of those variants.

\begin{remark}
\label{rem soft q vs q}
To better understand the contribution of Theorem~\ref{thm td},
we highlight a fact that analyzing~\eqref{eq td} is much harder than~\eqref{eq q learning}.
For~\eqref{eq q learning},
it can be easily computed that
  $\norm{q_{t+1}}_\infty \leq (1-\alpha_t) \norm{q_t}_\infty + \alpha_t \gamma \norm{q_t}_\infty + \alpha_t \max_{s, a} \abs{r(s, a)}$.
Then it can be easily proved that $\sup_t \norm{q_t}_\infty$ is upper bounded by some deterministic constant almost surely \citep{gosavi2006boundedness}.
This bound is the basis for many analyses of $Q$-learning, e.g., \citet{even2003learning,li2021tightening,li2024q,qu2020finite}.
Such a bound is possible essentially because $\gamma \max_{a_1} q_t(s_1, a_1) \leq \gamma \norm{q_t}_\infty$.
But for~\eqref{eq td},
we only have $\gamma \frac{\pi(a_0|s_0)}{\mu(a_0|s_0)} v_t(s_1) \leq \gamma \rho_{\max} \norm{v_t}_\infty$
where $\rho_{\max} \doteq \max_{s, a} \pi(a|s) / \mu(a|s)$.
The multiplier $\gamma \rho_{\max}$ can be larger than 1.
Therefore, 
we cannot get an almost sure bound for~\eqref{eq td} following \cite{gosavi2006boundedness}.
Without having such a bound,
the concentration analyses of~\eqref{eq q learning} will not apply to~\eqref{eq td}.
\end{remark}

\paragraph*{RL with Linear Function Approximation.} In view of Remark~\ref{rem linear}, Theorems~\ref{thm lil} and~\ref{thm concentration} trivially apply to  RL with linear function approximation when the expected update matrices are Hurwitz.
Example algorithms include linear TD \citep{sutton1988learning,tsitsiklis1997analysis},
gradient TD \citep{sutton2009convergent,sutton2009fast,maei2011gradient,zhang2020average},
and density ratio learning methods \citep{nachum2019dualdice,zhang2019provably,zhang2020gradientdice}.
We, however, omit them for simplifying the presentation.

We additionally note that prior to this work,
\citet{szepesvari1997asymptotic,TADIC2002455} are, to our knowledge, the only two works that establish almost sure convergence rates for RL algorithms.
By contrast,
our Theorem~\ref{thm lil} can be applied to a variety of RL algorithms as discussed above.

\section{Related Works}
\label{sec related}
\tb{Almost Sure Convergence Rates.}
For almost sure convergence rates,
a large body of prior works focus on a special case of~\eqref{eq sa update},
the stochastic gradient descent (SGD),
where the update $H(w_t, Y_{t+1}) - w_t$ is the noisy gradient estimation of some objective and $\qty{Y_t}$ are i.i.d. \citep{pelletier1998almost,godichon2019lp,sebbouh2021almost,liu2022almost,liang2023almost,liu2024almost,weissmann2024sureconvergenceratesstochastic,karandikar2024convergence}.
Despite that SGD has enjoyed celebrated success \citep{lecun2015deep},
in many RL algorithms,
the corresponding update is not a noisy gradient estimation of any objective, and $\qty{Y_t}$ is not i.i.d.
Instead, in RL,
$H(w_t, Y_{t+1})$ is usually a noisy estimation of some contractive operator.
Moreover, in RL, $\qty{Y_t}$ is usually a Markov chain.

Beyond the special SGD case,
the almost sure convergence rate of~\eqref{eq sa update} is previously studied in \citet{chong1999noise,tadic2004almost,kouritzin2015convergence}.
Those works,
however, all assume that $H$ is linear.
Nonlinear $H$ is previously investigated in \citet{koval2003law,vidyasagar2023convergence,karandikar2024convergence}.
However, \citet{koval2003law} require the strong assumption that $\qty{Y_t}$ are i.i.d. Gaussian noise.
\citet{vidyasagar2023convergence} does not require $\qty{Y_t}$ being Gaussian, but it still needs to be i.i.d.
\citet{karandikar2024convergence} do allow non-i.i.d. noise, but the 
noise needs to diminish sufficiently fast.
To be more specific,
the noise term in~\eqref{eq sa update} can be regarded as $H(w_t, Y_{t+1}) - h(w_t)$.
If $\qty{Y_t}$ is i.i.d., 
then the noise is simply a martingale difference sequence, and the conditional expectation is 0.
Consequently, the results in \citet{vidyasagar2023convergence} apply.
\citet{karandikar2024convergence} lift this zero conditional expectation assumption and assume $\norm{\E\qty[H(w_t, Y_{t+1}) - h(w_t) | w_0, Y_0, \dots, Y_t]} \leq C_t(\norm{w_t} + 1)$.
But this $\qty{C_t}$ needs to diminish sufficiently fast such that $\sum_{t=0}^\infty \alpha_t C_t < \infty$ (see Theorem~5 of \citet{vidyasagar2023convergence}).
Unfortunately, when $\qty{Y_t}$ is a Markov chain,
the best we can get is that $C_t$ is a constant.
So the results in \citet{karandikar2024convergence} do not apply for Markovian $\qty{Y_t}$.


\tb{High Probability Concentration Bounds.}
For high probability concentration bounds, SGD is also a main focus of prior works, including \citet{rakhlin2011making,duchi2012ergodic,hazan2014beyond,harvey2019simple,zhu2022beyond,telgarsky2022stochastic}.
We refer the reader to \citet{chen2023concentration} for a more detailed review of those works since SGD is not the main focus of this work.

For general stochastic approximation and reinforcement learning, \citet{even2003learning} is perhaps the earliest work that establishes a concentration bound with exponential tails.
In particular,
\citet{even2003learning} study synchronous $Q$-learning, so their noise is essentially i.i.d. (martingale difference sequence).
Furthermore, their concentration bound holds for only one iterate, and they require using specific properties of $Q$-learning to establish the boundedness of iterates and noise first (Remark~\ref{rem soft q vs q}).
\citet{qu2020finite} study general stochastic approximation and use $Q$-learning as an application.
In their general results,
it is assumed that the iterates are bounded.
In their $Q$-learning application,
this assumption is fulfilled via using specific properties of $Q$-learning (Remark~\ref{rem soft q vs q}).
\citet{li2021tightening} also study synchronous $Q$-learning.
So their noise is also i.i.d.
Furthermore, they also require specific properties of $Q$-learning to establish the boundedness first (Remark~\ref{rem soft q vs q}).
\citet{li2024q} extend the analysis of $Q$-learning in \citet{li2021tightening} from synchronous setup to asynchronous setup and thus allow Markovian noise.
However,
their concentration bound only holds for the last iterate, and their constant learning rate depends on the prefixed number of iterates.
\citet{prashanth2021concentration} study a batch version of TD with linear function approximation,
using a projection operator to ensure the boundedness of iterates.
Furthermore,
their algorithm runs on a given dataset, so their noise is essentially i.i.d.
\citet{thoppe2015concentration} establish concentration bound for general stochastic approximation algorithms under i.i.d. noise. 
But their bound holds only when $t \geq t_0$ with $t_0$ depending on some unknown random event (the iterates run into the domain of attraction of $w_*$).
This work is improved by \citet{chandak2022concentration} to allow both i.i.d. and Markovian noise and to remove the dependence on the unknown event.
However,
it relies on a priori that the Lipschitz constant of $w \mapsto H(w, y)$ (i.e., $L_h$ in Assumption~\ref{assu Lipschitz}) is strictly smaller than 1.
Although requiring $h(w)$ to be contractive is standard,
requiring $H(w, y)$ to be contractive for each $y$ is restrictive.
The follow-up work \citet{chandak2023concentration} lifts this contraction requirement of $H(w, y)$ in the context of linear TD.
However,
to make the tail of their results exponential,
they have to rely on \citet{korda2015td} to bound a term.
\citet{dalal2018finite} study linear TD with i.i.d. noise.
Their bound holds only for $t \geq t_0$ with $t_0$ in the order of $\ln(1/\delta)$.
\citet{dalal2018finitesample} extend \citet{dalal2018finite} and study general two timescale linear stochastic approximation under i.i.d. noise.
Similarly, their $t_0$ also depends on $\delta$.
\citet{durmus2021tight} study linear stochastic approximation with i.i.d. noise
using a constant learning rate. 
\citet{mou2022optimal} apply variance reduction techniques to general stochastic approximation algorithms with i.i.d. noise 
(i.e., instead of using $H(w_t, Y_{t+1})$, they use an empirical average of $H(w_t, Y_{t+1})$).
Their bound holds only for the last iterate and requires a constant learning rate.
\citet{chen2023concentration} is the closest work to this paper.
In fact,
we will apply the techniques therein on~\eqref{eq skeleton sa} to obtain concentration bounds for Markovian $\qty{Y_t}$.
By contrast,
\citet{chen2023concentration} work on the original iterates~\eqref{eq sa update} directly and need to assume $\qty{Y_t}$ are i.i.d.

\tb{$L^p$ Convergence Rates.} 
$L^2$ convergence rates for stochastic approximation with Markovian noise is a very large research area and is not the main topic of this work.
So we skip a detailed survey and only mention a few works such as
\citet{bhandari2018finite,zou2019finite,srikant2019finite,chen2021lyapunov,DBLP:journals/corr/abs-2111-02997}.
Since $L^p$ convergence can be derived via integrating exponential tails once a high probability concentration is obtained for each $t$,
here we only survey works that obtain $L^p$ convergence rates in other more direct methods.
For $L^p$ convergence rates of SGD,
we refer the reader to \citet{godichon2016estimating,godichon2019lp}.
The literature on general $L^p$ convergence for general nonlinear stochastic approximation with Markovian noise is rather thin.
To our knowledge, \citet{lauand2024revisiting} is the only work that falls into this category.
However, \citet{lauand2024revisiting} only establish asymptotic convergence in $L^p$, and there is no convergence rate.

\tb{Stochastic Approximation.}
In this work,
the Markovian noise $\qty{Y_t}$ is assumed to mix geometrically (Assumption~\ref{assu markov chain}), akin to~\citet{borkar2021ode}.
In \citet{liu2024ode},
this geometrically mixing assumption is lifted and \citet{liu2024ode} only need LLN to hold for $\qty{Y_t}$ to establish the almost sure convergence.
To establish an almost sure convergence rate and concentration bound under these weakened assumptions on $\qty{Y_t}$ is left for future work. 
Furthermore,
\citet{borkar2021ode,liu2024ode} also allow the Lipschitz constant $L_h$ in Assumption~\ref{assu Lipschitz} to depend on $y$ for their almost sure convergence.
To establish an almost sure convergence rate and a concentration bound under this weakened assumption on the Lipschitz continuity is also left for future work.

Our~\eqref{eq skeleton sa} technique is inspired by the proof of Proposition 4.8 of \citet{bertsekas1996neuro},
where the almost sure convergence of~\eqref{eq linear sa} is studied.
That proof uses a \tb{constant} $T$ to discretize the ODE and assume $b(y)$ in~\eqref{eq linear sa} is always 0.
By contrast,
we use a diminishing $\qty{T_m}$ to discretize the ODE and study general contractive stochastic approximation algorithms,
establishing both almost sure convergence rate (and thus almost sure convergence) and maximal concentration bound with exponential tails.
The diminishing nature of $\qty{T_m}$ is vital to our success.
To see this,
we note that their proof is conducted under the assumption that $b(y) = 0$ 
and at the beginning of their proof,
it is claimed that assuming $b(y) = 0$ is only for simplifying presentations and the proof for a general $b(y)$ is similar (the proof with a general $b(y)$ is not provided).
We argue that this is not the case.
In particular,
their almost sure convergence in Proposition 4.8 is essentially obtained by a supermartingale convergence theorem (Proposition 4.2 of \citet{bertsekas1996neuro} or Lemma~\ref{lem almost smg}).
When $b(y) = 0$ holds,
their proof is able to identify
the $\qty{B_t}$ in Lemma~\ref{lem almost smg} as 0.
So $\sum B_t = 0 < \infty$ almost surely (Condition (ii) of Lemma~\ref{lem almost smg}). 
But with a constant $T$ and a general $b(y)$,
the $\qty{B_t}$ in Lemma~\ref{lem almost smg} will be identified as some small positive constant in the order of $T^2$.
Although this constant can be arbitrarily small,
it is always positive,
indicating $\sum B_t = \infty$, so the supermartingale convergence theorem will not apply.
By using a diminishing $\qty{T_m}$,
we are able to identify $\qty{B_t}$ in Lemma~\ref{lem almost smg} as $T_m^2$ and have $\sum B_t < \infty$.


\section{Proofs of Theorem~\ref{thm lil} and Theorem~\ref{thm concentration}}
\label{sec main proof}
Both proofs start with the same skeleton iterates technique and deviate after we obtain certain recursive bounds. 

\subsection{Skeleton Iterates}
We first formally define the anchors for constructing the~\eqref{eq skeleton sa}.
We define
\begin{align}
  \label{eq def big tm}
  \textstyle T_m = \frac{C_\alpha \ln^{\nu_1} (m+3)}{(m+3)^{\nu_2}}.
\end{align}
We set $\nu_1$ and $\nu_2$ depending on the $\nu$ in $\alpha_t$.
In particular, we set
\begin{align}
    \label{eq def eta}
  \begin{cases}
    \textstyle 0 < \nu_1 < 1,\, \nu_2 = 1 &\qq{for Assumption~\ref{assu lr} with $\nu = 1$ } \\
    \textstyle \nu_1 = 0,\, \frac{1}{2} < \nu_2 < \frac{\nu}{2 - \nu} &\qq{for Assumption~\ref{assu lr} with $\nu \in (2/3, 1)$} \\
    \textstyle \nu_1 = 0, \nu_2 = 1 &\qq{for Assumption~\ref{assu lr2}}
  \end{cases}.
\end{align}
For all three setups, $\qty{T_m}$ is always monotonically decreasing. We then define a sequence $\qty{t_m}$ as $t_0 = 0$,
\begin{align}
  \label{eq def tm}
  \textstyle t_{m+1} \doteq \min\qty{k | \sum_{t=t_m}^{k-1} \alpha_t \geq T_m}, \, m=0,1,\dots
\end{align}
Under this definition, we have $t_{m+1} \geq t_m + 1$ so $\lim_{m\to\infty} t_m = \infty$. More importantly, 
\begin{align}
  \label{eq bar alpha m lower bound}
  \textstyle \bar \alpha_m \doteq \sum_{t=t_m}^{t_{m+1} - 1}\alpha_t \geq T_m.
\end{align}
We have now divided the real axis into intervals of length $\qty{\bar \alpha_m}$,
using $\qty{t_m}$ as anchors.
We now investigate the iterates $\qty{w_t}$ interval by interval.
To simplify the presentation,
from now on, we define shorthand
\begin{align}
  G(w, y) \doteq& H(w, y) - w, \quad g(w) \doteq h(w) - w.
\end{align}
Clearly $w \mapsto G(w, y)$ is $L_h + 1$ Lipschitz.
Telescoping~\eqref{eq sa update} yields
\begin{align}
  w_{t_{m+1}} =& \textstyle w_{t_m} + \sum_{t=t_m}^{t_{m+1}-1} \alpha_t G(w_t, Y_{t+1}) \\
  =& \textstyle w_{t_m} + \sum_{t=t_m}^{t_{m+1}-1} \alpha_t \qty(g(w_{t_m}) + G(w_t, Y_{t+1}) - g(w_{t_m})) \\
  =& w_{t_m} + \bar \alpha_m g(w_{t_m}) + \underbrace{\textstyle \sum_{t=t_m}^{t_{m+1}-1} \alpha_t \qty(G(w_t, Y_{t+1}) - g(w_{t_m}))}_{z_m}.
\end{align}
In other words,
$\qty{w_{t_m}}$ can be regarded as iterates generated by a new stochastic approximation algorithm \eqref{eq skeleton sa} running in the timescale $\qty{t_m}$ while the original algorithm~\eqref{eq sa update} runs in the timescale $\qty{t}$.
The following properties of the intervals will be used repeatedly.
\begin{lemma}
    \label{lem lr bounds}
    There exists some $C_\tref{lem lr bounds}$ and $m_0$ such that 
    for all $m \ge m_0$ and $t \ge t_m$, we have $\alpha_t \le C_\tref{lem lr bounds}T_m^2$.
\end{lemma}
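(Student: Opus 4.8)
The plan is to convert the defining inequality \eqref{eq bar alpha m lower bound} into a lower bound on the anchor $t_m$ as a function of $m$, and then exploit that $\qty{\alpha_t}$ is monotonically decreasing, so that $\sup_{t\ge t_m}\alpha_t=\alpha_{t_m}$ and it suffices to establish $\alpha_{t_m}\le C_{\tref{lem lr bounds}}T_m^2$ for all large $m$. Summing \eqref{eq bar alpha m lower bound} over $j=0,\dots,m-1$ and using that the intervals $[t_j,t_{j+1})$ partition $[0,t_m)$ gives
\begin{align}
  \textstyle \sum_{t=0}^{t_m-1}\alpha_t \;=\; \sum_{j=0}^{m-1}\bar\alpha_j \;\ge\; \sum_{j=0}^{m-1}T_j .
\end{align}
By the standard integral comparison for monotone summands I would bound the left-hand side from above by an explicit function of $t_m$ and the right-hand side from below by an explicit function of $m$, and then invert the increasing function of $t_m$ to extract a lower bound on $t_m$ (or on $\ln t_m$).

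Carrying this out regime by regime following \eqref{eq def eta}: for Assumption~\ref{assu lr} with $\nu\in(2/3,1)$ (so $\nu_1=0$) the left side is $\fO(t_m^{1-\nu})$ while the right side is $\Theta(m^{1-\nu_2})$, which forces $t_m\ge c\,m^{(1-\nu_2)/(1-\nu)}$ for some constant $c>0$; for $\nu=1$ the left side is $\fO(\ln t_m)$ and the right side is $\Theta(\ln^{1+\nu_1}m)$, forcing $\ln t_m\ge c\ln^{1+\nu_1}m$; for Assumption~\ref{assu lr2} the left side is $\fO(\ln^{1-\nu}t_m)$ and the right side is $\Theta(\ln m)$, forcing $\ln t_m\ge c\ln^{1/(1-\nu)}m$. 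Substituting these bounds into $\alpha_{t_m}$ and comparing with $T_m^2$: in the first regime $\alpha_{t_m}=\fO(m^{-\nu(1-\nu_2)/(1-\nu)})$ and $T_m^2=\Theta(m^{-2\nu_2})$, so $\alpha_{t_m}/T_m^2=\fO(m^{2\nu_2-\nu(1-\nu_2)/(1-\nu)})$ and the exponent is strictly negative precisely because $\nu_2<\nu/(2-\nu)$, which is exactly the upper bound imposed in \eqref{eq def eta} (and the interval for $\nu_2$ is nonempty exactly because $\nu>2/3$); in the $\nu=1$ regime $\alpha_{t_m}=\fO(\exp(-c\ln^{1+\nu_1}m))$ decays faster than $T_m^2=\Theta(\ln^{2\nu_1}m\cdot m^{-2})$ because $\nu_1>0$ makes $\ln^{1+\nu_1}m$ dominate $\ln m$; in the Assumption~\ref{assu lr2} regime $\alpha_{t_m}=\fO(\exp(-c\ln^{1/(1-\nu)}m))$ decays faster than $T_m^2=\Theta(m^{-2})$ because $1/(1-\nu)>1$. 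In all three cases $\alpha_{t_m}/T_m^2\to 0$, so the lemma holds with, e.g., $C_{\tref{lem lr bounds}}=1$ and $m_0$ large enough that this ratio is below $1$ and all the integral estimates above are valid (the only genuine monotonicity subtlety is in the $\nu=1$, $\nu_1>0$ subcase, where $\ln^{\nu_1}(x+3)/(x+3)$ is only eventually decreasing).

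I expect the conceptual content to be light and the main obstacle to be bookkeeping: applying the integral comparisons only on ranges where the summands are monotone, tracking the implicit constants (which depend on $C_\alpha,\nu,\nu_1,\nu_2$) through the inversion step, and verifying the exponent arithmetic in the $\nu\in(2/3,1)$ regime, where the conditions $\nu>2/3$ and $\nu_2<\nu/(2-\nu)$ are exactly what make the dominant power of $m$ in $\alpha_{t_m}/T_m^2$ have a negative exponent.
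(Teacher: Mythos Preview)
Your proposal is correct and follows essentially the same route as the paper: sum \eqref{eq bar alpha m lower bound} to get $\sum_{t<t_m}\alpha_t\ge\sum_{j<m}T_j$, use integral comparison on each side to extract a lower bound on $t_m$ (or $\ln t_m$) in the three regimes, plug this into $\alpha_{t_m}$, and verify in each case that the resulting decay beats $T_m^2$ thanks to the parameter constraints in \eqref{eq def eta}. The only cosmetic differences are that the paper works with $t_{m+1}$ rather than $t_m$ and, in the Assumption~\ref{assu lr2} case, keeps the $\ln^\nu t_m$ factor in $\alpha_{t_m}$ explicitly (though, as your argument shows, dropping it already suffices).
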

The proof is in Section~\ref{sec proof lem lr bounds}.
Notably,
Lemma~\ref{lem lr bounds} is the key result to balance $\alpha_t$ and $T_m$.

\begin{lemma}
    \label{lem lr bounds 2}
    There exists some $m_0$ such that
    for all $m \ge m_0$, $\bar \alpha_m \le T_m + C_\tref{lem lr bounds}T_m^2 \leq 2T_m$.
\end{lemma}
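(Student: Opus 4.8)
The plan is to bound $\bar\alpha_m = \sum_{t=t_m}^{t_{m+1}-1}\alpha_t$ by splitting off the last term. By the minimality in the definition~\eqref{eq def tm} of $t_{m+1}$, the partial sum up to $t_{m+1}-2$ is strictly less than $T_m$, i.e. $\sum_{t=t_m}^{t_{m+1}-2}\alpha_t < T_m$. Adding the single remaining term $\alpha_{t_{m+1}-1}$ gives $\bar\alpha_m < T_m + \alpha_{t_{m+1}-1}$. So the whole statement reduces to controlling that last learning rate $\alpha_{t_{m+1}-1}$.

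Now I invoke Lemma~\ref{lem lr bounds}: there exist $C_{\tref{lem lr bounds}}$ and $m_0$ such that for all $m \ge m_0$ and all $t \ge t_m$ we have $\alpha_t \le C_{\tref{lem lr bounds}}T_m^2$. Since $t_{m+1}-1 \ge t_m$ (this follows from $t_{m+1}\ge t_m+1$, noted just after~\eqref{eq def tm}), the index $t_{m+1}-1$ is in the admissible range, so $\alpha_{t_{m+1}-1}\le C_{\tref{lem lr bounds}}T_m^2$. Combining, $\bar\alpha_m < T_m + C_{\tref{lem lr bounds}}T_m^2$ for all $m\ge m_0$, which is the first claimed inequality (with $\le$ in place of the strict $<$).

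For the second inequality $T_m + C_{\tref{lem lr bounds}}T_m^2 \le 2T_m$, I need $C_{\tref{lem lr bounds}}T_m^2 \le T_m$, i.e. $T_m \le 1/C_{\tref{lem lr bounds}}$. Since $\{T_m\}$ is monotonically decreasing (as remarked right after~\eqref{eq def eta}) and $T_m = C_\alpha \ln^{\nu_1}(m+3)/(m+3)^{\nu_2} \to 0$ (because $\nu_2 \ge 1/2 > 0$ and the log factor is subpolynomial), there is some $m_1$ such that $T_m \le 1/C_{\tref{lem lr bounds}}$ for all $m \ge m_1$. Enlarging $m_0$ to $\max\{m_0, m_1\}$ gives both inequalities simultaneously for all $m \ge m_0$, completing the proof. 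There is no real obstacle here — the only thing to be slightly careful about is that the constant $C_{\tref{lem lr bounds}}$ and the threshold $m_0$ are inherited from Lemma~\ref{lem lr bounds}, and $m_0$ may need to be increased to absorb the tail condition $T_m \le 1/C_{\tref{lem lr bounds}}$; this is harmless since $m_0$ is only required to be a finite deterministic constant.
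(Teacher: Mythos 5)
Your proof is correct and follows essentially the same route as the paper: use the minimality in the definition~\eqref{eq def tm} to get $\bar\alpha_m < T_m + \alpha_{t_{m+1}-1}$, bound the leftover learning rate via Lemma~\ref{lem lr bounds}, and take $m_0$ large enough that $C_{\tref{lem lr bounds}} T_m \leq 1$. The only cosmetic difference is that you treat the case $t_{m+1}=t_m+1$ uniformly via the empty partial sum and apply Lemma~\ref{lem lr bounds} at $t_{m+1}-1$ rather than at $t_m$, whereas the paper splits into two cases and uses monotonicity of $\alpha_t$; both are fine.
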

The proof is in Section~\ref{sec proof lem lr bounds 2}. 
Lemma~\ref{lem lr bounds 2} and~\eqref{eq bar alpha m lower bound} confirm that $\bar \alpha_m = \Theta\qty(T_m)$.
Notably, many lemmas below will need ``$m \geq m_0$''.
For simplifying notations, we assume the current $m_0$ is large enough such that all previous lemmas hold.
For example, the $m_0$ in Lemma~\ref{lem lr bounds 2} is assumed to be large enough such that Lemma~\ref{lem lr bounds} also holds.
But $m_0$ always remains deterministic,
independent of sample path and probability parameter $\delta$.

\subsection{Moreau Envelop Based Laypunov Function}

We now construct a Laypunov function to study the behavior of $\qty{w_{t_m}}$.
The most straightforward candidate would be $\norm{w - w_*}^2$.
Unfortunately, the function $\norm{w - w_*}^2$ is not necessarily smooth,
rendering the analysis challenging.
Following \citet{chen2020finite,chen2021lyapunov,DBLP:journals/corr/abs-2111-02997},
we then construct the Moreau envelop of $\norm{\cdot}^2$ as the Laypunov function.
We say a function $f: \R^d \to \R$ is $L$-smooth w.r.t. some norm $\norm{\cdot}_s$ if 
\begin{align}
  \label{eq smooth def}
 \textstyle  f(w') \leq f(w) + \indot{\nabla f(w)}{w - w'} + \frac{L}{2} \norm{w'-w}_s^2.
\end{align}
Fix $\norm{\cdot}_s$ to be an arbitrary norm such that $\frac{1}{2}\norm{\cdot}_s^2$ is $L$-smooth, e.g., $\ell_p$ norm with $p \geq 2$.
The Moreau envelop for $\frac{1}{2}\norm{\cdot}$ w.r.t. $\frac{1}{2}\norm{\cdot}_s^2$ is defined as
\begin{align}
 \textstyle  M(w) \doteq \inf_{u \in \R^d}\qty{\frac{1}{2} \norm{u}^2 + \frac{1}{2 \xi} \norm{w - u}_s^2},
\end{align}
where $\xi > 0$ is a constant to be tuned.
Due to the equivalence between norms,
there exist positive constants $l_{cs}$ and $u_{cs}$ such that 
  $l_{cs} \norm{w}_s \leq \norm{w} \leq u_{cs} \norm{w}_s$.
The properties of $M(w)$ are summarized below.
\begin{lemma}
  (Proposition A.1 and Section A.2 of \citet{chen2021lyapunov})
  \label{lem property of M}
  \begin{enumerate}[(i).]
      \item $M(w)$ is $\frac{L}{\xi}$-smooth w.r.t. $\norm{\cdot}_s$.
      \item There exists a norm $\norm{\cdot}_m$ such that $M(w) = \frac{1}{2}\norm{w}_m^2$.
      \item Define
          $l_{cm} = \sqrt{(1 + \xi l_{cs}^2)}$,
          $u_{cm} = \sqrt{(1 + \xi u_{cs}^2)}$,
      then 
          $l_{cm}\norm{w}_m \leq \norm{w} \leq u_{cm} \norm{w}_m$.
      \item $\indot{\nabla M(w)}{w'} \leq \norm{w}_m \norm{w'}_m, \, \indot{\nabla M(w)}{w} \geq \norm{w}_m^2$.
  \end{enumerate}
\end{lemma}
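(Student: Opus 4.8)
The plan is to establish the four properties by directly appealing to the standard theory of Moreau envelopes, specialized to the particular inf-convolution $M(w) = \inf_u\{\tfrac12\norm{u}^2 + \tfrac{1}{2\xi}\norm{w-u}_s^2\}$. The key structural fact is that $M$ is the Moreau envelope (with parameter $\xi$, relative to the smooth reference norm $\tfrac12\norm{\cdot}_s^2$) of the convex function $\tfrac12\norm{\cdot}^2$, and standard results tell us that such an envelope is finite-valued, convex, differentiable, and inherits smoothness from the reference function.

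First, for (i): since $\tfrac12\norm{\cdot}_s^2$ is $L$-smooth by our choice of $\norm{\cdot}_s$, the Moreau envelope taken with respect to it with prox-parameter $\xi$ is $\tfrac{L}{\xi}$-smooth w.r.t.\ $\norm{\cdot}_s$ --- this is the usual envelope-smoothing estimate, whose proof amounts to bounding $M(w') - M(w) - \indot{\nabla M(w)}{w'-w}$ by plugging the optimal $u$ for $w$ into the definition of $M(w')$ and using the $L$-smoothness of $\tfrac12\norm{\cdot}_s^2$ to control the resulting quadratic. Second, for (ii): because $M$ is the infimal convolution of two positively homogeneous-degree-2 convex functions, $M$ is itself convex and satisfies $M(\lambda w) = \lambda^2 M(w)$; together with $M(w)\ge 0$ and $M(w)=0 \iff w=0$, this forces $M(w) = \tfrac12\norm{w}_m^2$ for the gauge-type functional $\norm{w}_m \doteq \sqrt{2M(w)}$, which one checks is a genuine norm (symmetry from $\norm{-u}=\norm{u}$, $\norm{-w}_s=\norm{w}_s$; triangle inequality from convexity plus homogeneity). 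Third, for (iii): the two-sided bound follows by testing the infimum. Taking $u = w$ gives $M(w)\le \tfrac12\norm{w}^2$; taking $u=0$ gives $M(w)\le \tfrac{1}{2\xi}\norm{w}_s^2 \le \tfrac{1}{2\xi l_{cs}^2}\norm{w}^2$, and for the lower bound one uses $\tfrac12\norm{u}^2 + \tfrac{1}{2\xi}\norm{w-u}_s^2 \ge \tfrac{1}{2u_{cm}^2}\norm{w}^2$ after converting everything to $\norm{\cdot}$ and optimizing; matching constants gives $l_{cm}^2 = 1+\xi l_{cs}^2$ and $u_{cm}^2 = 1+\xi u_{cs}^2$. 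Fourth, for (iv): writing $\nabla M(w)$ via the envelope theorem as $\tfrac{1}{\xi}\nabla(\tfrac12\norm{\cdot}_s^2)(w - u^\star(w))$ where $u^\star(w)$ is the (unique) minimizer, and using that $u^\star(w) = \arg\min$ satisfies the first-order condition, one obtains $\indot{\nabla M(w)}{w} = \norm{w}_m^2$ and, by Cauchy--Schwarz in the norm $\norm{\cdot}_m$ (which is legitimate since $M = \tfrac12\norm{\cdot}_m^2$ implies $\nabla M$ is the duality map), $\indot{\nabla M(w)}{w'} \le \norm{w}_m\norm{w'}_m$.

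Since this lemma is attributed verbatim to \citet{chen2021lyapunov} (Proposition A.1 and Section A.2), the cleanest route is simply to cite that reference for the full argument rather than reproduce it; the sketch above indicates how each clause is obtained. The main obstacle, were one to write it out in full, is part (iv), and specifically justifying that $\nabla M$ acts as the duality map of $\norm{\cdot}_m$ so that a Cauchy--Schwarz-type inequality holds in that norm --- this requires combining the envelope theorem for the gradient with the homogeneity identity $M(\lambda w)=\lambda^2 M(w)$ (which gives $\indot{\nabla M(w)}{w} = 2M(w) = \norm{w}_m^2$ by Euler's relation) and then deducing the off-diagonal bound from convexity of $M$; everything else is routine envelope bookkeeping.
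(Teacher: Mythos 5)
Your proposal is correct and takes essentially the same route as the paper, which likewise gives no proof of this lemma and simply defers to Proposition A.1 and Section A.2 of \citet{chen2021lyapunov}; your sketch of the underlying Moreau-envelope arguments (envelope smoothing for (i), $2$-homogeneity plus convexity for (ii), testing the infimum for (iii), and Euler's relation plus convexity for (iv)) is the standard one used there. One small caveat: in (iii) the test points $u=w$ and $u=0$ alone only give $M(w) \leq \min\{\tfrac{1}{2}\norm{w}^2, \tfrac{1}{2\xi l_{cs}^2}\norm{w}^2\}$, which is weaker than the stated constant; to obtain $l_{cm}=\sqrt{1+\xi l_{cs}^2}$ one should take $u=\theta w$ and optimize over $\theta\in[0,1]$ (and analogously optimize along $\norm{u}$ for the lower bound).
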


\subsection{Noise Decomposition and Bounds}
In view of Lemma~\ref{lem property of M}, realizing $f, w', w$ in~\eqref{eq smooth def} as $f = M$, $w' = w_{t_{m+1}} - w_*$, and $w = w_{t_m} - w_*$
yields
\begin{align}
  \label{eq main l smooth}
  &\norm{w_{t_{m+1}} - w_*}_m^2 \\
  \leq& \textstyle \norm{w_{t_m} - w_*}_m^2 + \indot{\nabla \norm{w_{t_m} - w_*}_m^2}{\bar \alpha_m g(w_{t_m}) + z_m} + \frac{L}{\xi} \norm{\bar \alpha_m g(w_{t_m}) + z_m}_s^2.
\end{align}
Let $\fF_t \doteq \sigma(w_0, Y_1, \dots, Y_t)$ be the filtration until time $t$ and use $\E_m\qty[\cdot] \doteq \E\qty[\cdot \mid \fF_{t_m}]$ to denote the conditional expectation given $\fF_{t_m}$.
We then perform decomposition 
\begin{align}
  \textstyle z_m = \sum_{t=t_m}^{t_{m+1}-1} \alpha_t \qty(G(w_t, Y_{t+1}) - g(w_{t_m})) = z_{1, m} + z_{2, m} + z_{3, m}
\end{align}
where
\begin{align}
 z_{1, m} \doteq& \textstyle \sum_{t=t_m}^{t_{m+1}-1} \alpha_t \qty(G(w_t, Y_{t+1}) - G(w_{t_m}, Y_{t+1})), \\
 z_{2, m} \doteq& \textstyle  \sum_{t=t_m}^{t_{m+1}-1} \alpha_t \qty(G(w_{t_m}, Y_{t+1}) - \E_m\qty[G(w_{t_m}, Y_{t+1})]), \\
 z_{3, m} \doteq& \textstyle   \sum_{t=t_m}^{t_{m+1}-1} \alpha_t \qty(\E_m\qty[G(w_{t_m}, Y_{t+1})] - g(w_{t_m})).
\end{align}
Notably, $\E_m\qty[z_{2, m}] = 0$.
We then rearrange~\eqref{eq main l smooth} as
\begin{align}
  \label{eq main smooth ineq}
  &\norm{w_{t_{m+1}} - w_*}_m^2 \\
  \leq& \textstyle  \norm{w_{t_m} - w_*}_m^2 + \bar \alpha_m \indot{\nabla \norm{w_{t_m} - w_*}_m^2}{g(w_{t_m})} + \indot{\nabla \norm{w_{t_m} - w_*}_m^2}{z_{2, m}} \\
  & \textstyle + \indot{\nabla \norm{w_{t_m} - w_*}_m^2}{z_{1, m} + z_{3, m}} + \frac{L}{\xi} \norm{\bar \alpha_m g(w_{t_m}) + z_{1, m} + z_{2, m} + z_{3, m}}_s^2.
\end{align}
We now bound the terms of the RHS one by one.

\begin{lemma}
  \label{lem bound leading term}
    $\indot{\nabla \norm{w_{t_m} - w_*}_m^2}{g(w_{t_m})} \leq -2(1 - \frac{u_{cm}}{l_{cm}}\kappa) \norm{w_{t_m} - w_*}_m^2$.
\end{lemma}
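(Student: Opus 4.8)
The plan is to leverage part (iv) of Lemma~\ref{lem property of M}, the contraction Assumption~\ref{assu contraction}, and the norm-equivalence constants $l_{cm}, u_{cm}$ from part (iii). First I would recall that $g(w) = h(w) - w$, so $g(w_{t_m}) = h(w_{t_m}) - w_{t_m}$. The key algebraic move is to add and subtract $w_*$: writing $h(w_{t_m}) - w_{t_m} = \big(h(w_{t_m}) - h(w_*)\big) - \big(w_{t_m} - w_*\big)$, where I have used $h(w_*) = w_*$ since $w_*$ is the fixed point of $h$. This decomposes $g(w_{t_m})$ into a ``contractive'' part and a ``negative identity'' part.

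Next I would apply the inner product $\indot{\nabla \norm{w_{t_m} - w_*}_m^2}{\cdot}$ to this decomposition. Note $\norm{w_{t_m} - w_*}_m^2 = 2 M(w_{t_m} - w_*)$, so $\nabla \norm{w_{t_m} - w_*}_m^2 = 2\nabla M(w_{t_m} - w_*)$. For the negative identity part, part (iv) of Lemma~\ref{lem property of M} with $w = w_{t_m} - w_*$ gives $\indot{\nabla M(w_{t_m}-w_*)}{-(w_{t_m}-w_*)} \leq -\norm{w_{t_m}-w_*}_m^2$. For the contractive part, part (iv) gives $\indot{\nabla M(w_{t_m}-w_*)}{h(w_{t_m}) - h(w_*)} \leq \norm{w_{t_m}-w_*}_m \norm{h(w_{t_m}) - h(w_*)}_m$. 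Then I bound $\norm{h(w_{t_m}) - h(w_*)}_m \leq \frac{1}{l_{cm}}\norm{h(w_{t_m}) - h(w_*)} \leq \frac{\kappa}{l_{cm}}\norm{w_{t_m}-w_*} \leq \frac{u_{cm}}{l_{cm}}\kappa \norm{w_{t_m}-w_*}_m$, using part (iii) twice (once to pass from $\norm{\cdot}_m$ to $\norm{\cdot}$ and back) together with Assumption~\ref{assu contraction}. Combining, $\indot{\nabla M(w_{t_m}-w_*)}{h(w_{t_m}) - h(w_*)} \leq \frac{u_{cm}}{l_{cm}}\kappa \norm{w_{t_m}-w_*}_m^2$.

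Putting the two pieces together and multiplying by the factor $2$ from $\nabla \norm{\cdot}_m^2 = 2\nabla M(\cdot)$ yields
\begin{align}
  \indot{\nabla \norm{w_{t_m} - w_*}_m^2}{g(w_{t_m})} \leq 2\left(\frac{u_{cm}}{l_{cm}}\kappa - 1\right)\norm{w_{t_m}-w_*}_m^2 = -2\left(1 - \frac{u_{cm}}{l_{cm}}\kappa\right)\norm{w_{t_m}-w_*}_m^2,
\end{align}
which is exactly the claim. There is no serious obstacle here; the only subtlety worth checking is the direction of each inequality after the sign flip from the $-(w_{t_m}-w_*)$ term, and making sure the norm-equivalence constants are applied in the correct order so that the resulting coefficient is $\frac{u_{cm}}{l_{cm}}\kappa$ rather than its reciprocal. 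One should also implicitly note that $\xi$ will later be tuned small enough that $\frac{u_{cm}}{l_{cm}}\kappa = \sqrt{\frac{1+\xi u_{cs}^2}{1+\xi l_{cs}^2}}\,\kappa < 1$, so that the leading coefficient $1 - \frac{u_{cm}}{l_{cm}}\kappa$ is strictly positive; this positivity is not needed for the lemma statement itself but motivates its form.
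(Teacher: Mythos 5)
Your proof is correct and follows essentially the same route as the paper: decompose $g(w_{t_m}) = \bigl(h(w_{t_m}) - h(w_*)\bigr) - \bigl(w_{t_m} - w_*\bigr)$ using $h(w_*) = w_*$, bound the contractive part via Lemma~\ref{lem property of M}(iv), (iii) and Assumption~\ref{assu contraction} to get the $\frac{u_{cm}}{l_{cm}}\kappa$ coefficient, and bound the identity part via $\indot{\nabla M(w)}{w} \geq \norm{w}_m^2$. The constants are applied in the correct order and the factor of $2$ from $\nabla \norm{\cdot}_m^2 = 2\nabla M(\cdot)$ is handled properly, so nothing is missing.
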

The proof is in Section~\ref{sec proof lem bound leading term}.
By Lemma~\ref{lem property of M},
it is easy to see $\lim_{\xi \to 0} \frac{u_{cm}}{l_{cm}} = 1$.
From now on,
we fix a sufficiently small $\xi$ such that 
\begin{align}
 \textstyle  \kappa' \doteq 1 - \frac{u_{cm}}{l_{cm}}\kappa \in (0, 1).
\end{align}
This also allows us to identify the $\bar C_\alpha$ required in Theorem~\ref{thm concentration} as 
$\bar C_\alpha = 1/\kappa'$.


\begin{lemma}
  \label{lem bound z1}
  There exists some deterministic $C_\tref{lem bound z1}$ and $m_0$ such that for all $m \geq m_0$, 
  \begin{align}
  \norm{z_{1, m}}_m \leq T_m^2 C_\tref{lem bound z1} \qty(\norm{w_{t_m} - w_*}_m + 1).
  \end{align}
\end{lemma}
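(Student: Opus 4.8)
The plan is to unfold the definition $z_{1,m} = \sum_{t=t_m}^{t_{m+1}-1} \alpha_t (G(w_t, Y_{t+1}) - G(w_{t_m}, Y_{t+1}))$ and exploit the Lipschitz continuity of $w \mapsto G(w, y)$, which was noted right after the definition of $G$ to have constant $L_h + 1$. Using Lemma~\ref{lem property of M}(iii) to pass between $\norm{\cdot}$ and $\norm{\cdot}_m$, each summand is bounded by $\alpha_t \frac{1}{l_{cm}}(L_h+1)\norm{w_t - w_{t_m}}_m$, reducing the task to controlling the within-interval drift $\norm{w_t - w_{t_m}}$ for $t_m \le t \le t_{m+1}-1$.

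First I would bound $\norm{w_t - w_{t_m}}$ by telescoping~\eqref{eq sa update} over $[t_m, t]$: $\norm{w_t - w_{t_m}} \le \sum_{s=t_m}^{t-1} \alpha_s \norm{G(w_s, Y_{s+1})}$. Then I would bound $\norm{G(w_s, Y_{s+1})}$ via the Lipschitz property and Assumption~\ref{assu Lipschitz} (finite $\sup_y \norm{H(0,y)}$): $\norm{G(w_s, Y_{s+1})} \le \norm{H(w_s, Y_{s+1})} + \norm{w_s} \le (L_h+1)\norm{w_s} + \sup_y\norm{H(0,y)} \le C(\norm{w_s - w_*} + 1)$ for a suitable deterministic $C$ absorbing $\norm{w_*}$. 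So the drift is controlled by $\sum_{s=t_m}^{t-1}\alpha_s$ times a bound on $\max_{t_m \le s \le t} \norm{w_s - w_*}$. Next, using $\sum_{s=t_m}^{t-1}\alpha_s \le \bar\alpha_m \le 2T_m$ (Lemma~\ref{lem lr bounds 2}) and a discrete Grönwall argument, one gets $\max_{t_m \le s \le t_{m+1}-1}\norm{w_s - w_*} \le e^{2 C T_m}(\norm{w_{t_m} - w_*} + 1)$, and since $T_m \to 0$ the exponential is bounded by a constant for $m \ge m_0$; hence $\norm{w_t - w_{t_m}} \le C' T_m (\norm{w_{t_m} - w_*}_m + 1)$ after converting norms. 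Plugging this back, $\norm{z_{1,m}}_m \le \sum_{t=t_m}^{t_{m+1}-1}\alpha_t \cdot C'' T_m(\norm{w_{t_m}-w_*}_m+1) \le 2T_m \cdot C'' T_m(\norm{w_{t_m}-w_*}_m+1)$, which is the claimed bound with $C_\tref{lem bound z1} = 2C''$.

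The main obstacle is making the Grönwall step fully rigorous for \emph{all} starting points $w_{t_m}$ without presupposing boundedness of the iterates: the recursion $\norm{w_t - w_*} \le \norm{w_{t_m}-w_*} + \sum_{s=t_m}^{t-1}\alpha_s C(\norm{w_s-w_*}+1)$ must be handled so that the growth constant depends only on $\sum_s \alpha_s \le 2T_m$ and not on the (random) magnitude of $w_{t_m}$. This works precisely because the total learning-rate mass over a single skeleton interval is $\Theta(T_m) \to 0$, so the discrete Grönwall factor $\prod_{s}(1 + C\alpha_s) \le \exp(C\sum_s \alpha_s) \le \exp(2CT_m)$ is uniformly bounded for $m \ge m_0$; the $+1$ inside is handled by the standard trick of running Grönwall on $\norm{w_s - w_*} + 1$. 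Everything else (norm equivalences, summation bounds) is routine and only contributes to the deterministic constant $C_\tref{lem bound z1}$.
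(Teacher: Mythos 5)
Your proposal is correct and follows essentially the same route as the paper: the paper also unfolds $z_{1,m}$, applies the $(L_h+1)$-Lipschitzness of $G$, and invokes its Lemma~\ref{lem bound gronwall} (itself a telescoping-plus-discrete-Gr\"onwall bound giving $\norm{w_t - w_{t_m}} \leq \bar\alpha_m C_{\tref{lem bound gronwall}}(\norm{w_{t_m}-w_*}+1)$), then uses $\bar\alpha_m \leq 2T_m$ from Lemma~\ref{lem lr bounds 2} and norm equivalence to conclude. Your within-interval drift estimate is exactly that lemma, and your handling of the Gr\"onwall factor (uniform boundedness since $\sum_{s=t_m}^{t_{m+1}-1}\alpha_s = \Theta(T_m)$ is bounded) matches the paper's choice of constant.
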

The proof is in Section~\ref{sec proof lem bound z1}.

\begin{lemma}
  \label{lem bound z2}
  There exists some deterministic $C_\tref{lem bound z2}$ and $m_0$ such that for all $m \geq m_0$, 
  \begin{align}
  \norm{z_{2, m}}_m \leq T_m C_\tref{lem bound z2} \qty(\norm{w_{t_m} - w_*}_m + 1).
  \end{align}
\end{lemma}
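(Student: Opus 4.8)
The plan is to bound the martingale-difference-like sum $z_{2,m}=\sum_{t=t_m}^{t_{m+1}-1}\alpha_t\bigl(G(w_{t_m},Y_{t+1})-\E_m[G(w_{t_m},Y_{t+1})]\bigr)$ by exploiting the geometric mixing of $\qty{Y_t}$ (Assumption~\ref{assu markov chain}) together with the crude length bound $\bar\alpha_m=\Theta(T_m)$ from Lemmas~\ref{lem lr bounds} and~\ref{lem lr bounds 2}. The key observation is that each summand is a deviation of $G(w_{t_m},Y_{t+1})$ from its conditional mean, and since $w_{t_m}$ is $\fF_{t_m}$-measurable, $\E_m[G(w_{t_m},Y_{t+1})]=\sum_{t}\alpha_t\int G(w_{t_m},y')P^{t-t_m+1}(Y_{t_m},y')\dd y'$-type quantities. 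First I would write $z_{2,m}=\sum_{t=t_m}^{t_{m+1}-1}\alpha_t\,\Delta_t$ with $\Delta_t\doteq G(w_{t_m},Y_{t+1})-\E_m[G(w_{t_m},Y_{t+1})]$, and bound $\norm{\Delta_t}$ pathwise.

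The pathwise bound on $\norm{\Delta_t}$ follows from the Lipschitz structure: $\norm{G(w_{t_m},y)}\le\norm{G(w_{t_m},y)-G(w_*,y)}+\norm{G(w_*,y)}\le (L_h+1)\norm{w_{t_m}-w_*}+\sup_y\norm{G(w_*,y)}$, where the last supremum is finite by Assumption~\ref{assu Lipschitz} (it equals $\sup_y\norm{H(w_*,y)}+\norm{w_*}<\infty$). Hence $\norm{G(w_{t_m},y)}=\fO(\norm{w_{t_m}-w_*}+1)$ uniformly in $y$, and the same holds for its conditional expectation, so $\norm{\Delta_t}\le C(\norm{w_{t_m}-w_*}+1)$ for a deterministic $C$. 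Multiplying by $\alpha_t$ and summing over the interval, $\norm{z_{2,m}}\le C(\norm{w_{t_m}-w_*}+1)\sum_{t=t_m}^{t_{m+1}-1}\alpha_t=C\bar\alpha_m(\norm{w_{t_m}-w_*}+1)\le 2CT_m(\norm{w_{t_m}-w_*}+1)$ using Lemma~\ref{lem lr bounds 2}. Converting from $\norm{\cdot}$ to $\norm{\cdot}_m$ via Lemma~\ref{lem property of M}(iii) costs only another constant factor, yielding the claimed bound with $C_{\tref{lem bound z2}}$ absorbing all constants, for $m\ge m_0$ large enough that Lemma~\ref{lem lr bounds 2} holds.

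The main subtlety — and the step where I expect to spend the most care — is whether the mixing assumption is actually needed for this particular lemma, or whether a purely pathwise bound (not using $\E_m[z_{2,m}]=0$ or geometric decay at all) already gives the $T_m$ rate. From the structure above it seems the crude pathwise argument suffices, since each $\alpha_t\norm{\Delta_t}$ is already $\fO(\alpha_t)$ and the number of terms times $\alpha_t$ is $\Theta(T_m)$; the averaging/mixing will be exploited elsewhere (presumably in controlling $z_{3,m}$, the bias term $\E_m[G(w_{t_m},Y_{t+1})]-g(w_{t_m})$, or in a finer second-moment bound on $z_{2,m}$). So I would first check that no cancellation is required here, present the clean pathwise chain of inequalities, and only invoke mixing if the target constant cannot otherwise be made deterministic. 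The one place to be careful is that $w_t$ for $t>t_m$ does \emph{not} appear in $z_{2,m}$ (that discrepancy is isolated in $z_{1,m}$, already handled by Lemma~\ref{lem bound z1}), so $w_{t_m}$ is frozen throughout the sum and the Lipschitz bounds apply uniformly — this is exactly why the decomposition into $z_{1,m},z_{2,m},z_{3,m}$ was set up the way it was.
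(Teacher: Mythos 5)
Your proposal is correct and matches the paper's own argument: the paper likewise bounds $\norm{z_{2,m}}$ purely pathwise via $\sum_{t}\alpha_t\cdot 2\sup_y\norm{G(w_{t_m},y)}\le 2\bar\alpha_m C_\tref{lem bound G}(\norm{w_{t_m}-w_*}+\norm{w_*}+1)\le 4T_mC_\tref{lem bound G}(\cdot)$ using Lemmas~\ref{lem bound G} and~\ref{lem lr bounds 2}, then norm equivalence, without invoking mixing or $\E_m[z_{2,m}]=0$. Your observation that the cancellation in $z_{2,m}$ is only exploited later (in the supermartingale/Hoeffding steps) is exactly how the paper proceeds.
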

The proof is in Section~\ref{sec proof lem bound z2}.

\begin{lemma}
  \label{lem bound z3}
  There exists some deterministic $C_\tref{lem bound z3}$ and $m_0$ such that for all $m \geq m_0$, 
  \begin{align}
    \norm{z_{3, m}}_m \leq T_m^2C_\tref {lem bound z3} (\norm{w_{t_m} - w_*}_m + 1).
  \end{align}
\end{lemma}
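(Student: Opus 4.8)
The plan is to read $z_{3,m}$ as the \emph{accumulated Markovian bias}: under $\E_m$ the noise $Y_{t+1}$ has not yet mixed to its stationary law. Concretely, since $w_{t_m}$ is $\fF_{t_m}$-measurable and $\qty{Y_t}$ is a Markov chain with kernel $P$, we have $\E_m\qty[G(w_{t_m}, Y_{t+1})] = \int_\fY G(w_{t_m}, y)\, P^{t+1-t_m}(Y_{t_m}, \dd y)$, whereas $g(w_{t_m}) = h(w_{t_m}) - w_{t_m} = \int_\fY G(w_{t_m}, y)\, d_\fY(\dd y)$ by definition of $h$. Hence the $t$-th summand of $z_{3,m}$ equals $\alpha_t \int_\fY G(w_{t_m}, y)\,\qty(P^{t+1-t_m}(Y_{t_m}, \dd y) - d_\fY(\dd y))$, and I would bound its norm by $\alpha_t\,\big(\sup_{y\in\fY}\norm{G(w_{t_m}, y)}\big)\int_\fY \abs{P^{t+1-t_m}(Y_{t_m}, y') - d_\fY(y')}\,\dd y'$, the last factor being at most $C_\aref{assu markov chain}\varrho^{\,t+1-t_m}$ by Assumption~\ref{assu markov chain}.

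\textbf{Key steps.} First I would establish a uniform-in-$y$ linear growth bound $\sup_{y\in\fY}\norm{G(w, y)} \le C\qty(\norm{w - w_*} + 1)$ for some deterministic $C$: since $w \mapsto G(w, y)$ is $(L_h+1)$-Lipschitz, $\norm{G(w, y)} \le (L_h+1)\norm{w - w_*} + \norm{G(w_*, y)}$, and $\norm{G(w_*, y)} = \norm{H(w_*, y) - w_*} \le L_h\norm{w_*} + \sup_{y}\norm{H(0, y)} + \norm{w_*}$, which is finite and deterministic by Assumption~\ref{assu Lipschitz}. Second, I would assemble the bound: using Lemma~\ref{lem property of M}(iii) to pass to $\norm{\cdot}_m$ via $\norm{z_{3,m}}_m \le \frac{1}{l_{cm}}\norm{z_{3,m}}$, the triangle inequality, and the fact that $t_{m+1}$ is deterministic, one gets $\norm{z_{3,m}}_m \le \frac{C\,C_\aref{assu markov chain}}{l_{cm}}\qty(\norm{w_{t_m}-w_*}+1)\sum_{t=t_m}^{t_{m+1}-1}\alpha_t\varrho^{\,t+1-t_m}$. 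Since $\qty{\alpha_t}$ is nonincreasing, $\sum_{t=t_m}^{t_{m+1}-1}\alpha_t\varrho^{\,t+1-t_m} \le \alpha_{t_m}\sum_{j\ge 1}\varrho^{\,j} = \alpha_{t_m}\frac{\varrho}{1-\varrho}$, a fixed multiple of $\alpha_{t_m}$. Finally, for $m \ge m_0$, Lemma~\ref{lem lr bounds} applied at $t = t_m$ yields $\alpha_{t_m} \le C_\tref{lem lr bounds}T_m^2$; collecting all deterministic constants into one $C_\tref{lem bound z3}$ and converting $\norm{w_{t_m}-w_*}$ to $\norm{w_{t_m}-w_*}_m$ by norm equivalence completes the proof.

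\textbf{Main obstacle.} I do not anticipate a genuine difficulty: the argument is routine once Assumption~\ref{assu markov chain} and Lemma~\ref{lem lr bounds} are in hand. The one point deserving care is conceptual rather than computational — the bound must produce $T_m^2$, not merely $T_m$, since this is exactly the decay that later makes the telescoped noise $z_m$ summable. This $T_m^2$ arises from the interplay of two ingredients: geometric mixing collapses the interval sum $\sum_{t}\alpha_t\varrho^{\,t+1-t_m}$ to the scale of its first term $\alpha_{t_m}$, and Lemma~\ref{lem lr bounds} then upgrades $\alpha_{t_m}$ to $T_m^2$. The only technical subtlety is applying the Markov property correctly, i.e.\ treating the $\fF_{t_m}$-measurable $w_{t_m}$ as frozen under $\E_m$ while $Y_{t+1}$ is integrated against $P^{t+1-t_m}(Y_{t_m},\cdot)$.
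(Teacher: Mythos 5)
Your proposal is correct and follows essentially the same route as the paper: rewrite $\E_m\qty[G(w_{t_m}, Y_{t+1})] - g(w_{t_m})$ as an integral against $P^{t+1-t_m}(Y_{t_m},\cdot) - d_\fY$, bound $\sup_y \norm{G(w_{t_m},y)}$ linearly in $\norm{w_{t_m}-w_*}$, invoke geometric mixing, and use Lemma~\ref{lem lr bounds} plus the geometric series to extract the $T_m^2$ factor before converting norms. The only cosmetic difference is that you bound $\alpha_t \le \alpha_{t_m}$ first and apply Lemma~\ref{lem lr bounds} once at $t_m$, whereas the paper applies $\alpha_t \le C_\tref{lem lr bounds}T_m^2$ to every summand; both yield the same bound.
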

The proof is in Section~\ref{sec proof lem bound z3}.
Assembling the above bounds yields
\begin{lemma}
  \label{lem bound all}
  There exists some deterministic $C_\tref{lem bound all}$ and $m_0$ such that for all $m \geq m_0$, 
  \begin{align}
    \norm{w_{t_{m+1}} - w_*}_m^2 \leq (1 - T_m \kappa')\norm{w_{t_m} - w_*}_m^2+ \indot{\nabla \norm{w_{t_m} - w_*}_m^2}{z_{2,m}} + C_\tref{lem bound all} T_m^2.
  \end{align}
\end{lemma}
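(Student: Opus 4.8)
The plan is to assemble Lemmas~\ref{lem bound leading term}--\ref{lem bound z3} into the single recursive inequality asserted in Lemma~\ref{lem bound all}, starting from the rearranged smoothness bound~\eqref{eq main smooth ineq}. First I would take $m_0$ large enough that Lemmas~\ref{lem lr bounds}, \ref{lem lr bounds 2}, \ref{lem bound z1}, \ref{lem bound z2}, \ref{lem bound z3} all hold, and keep $\bar\alpha_m = \Theta(T_m)$ with $\bar\alpha_m \le 2T_m$ in mind throughout. The term $\bar\alpha_m \indot{\nabla\norm{w_{t_m}-w_*}_m^2}{g(w_{t_m})}$ is handled by Lemma~\ref{lem bound leading term}: since $\nabla\norm{\cdot}_m^2 = 2\nabla M(\cdot)$, it is at most $-2\bar\alpha_m\kappa'\norm{w_{t_m}-w_*}_m^2$, and because $\bar\alpha_m \ge T_m$ this is at most $-T_m\kappa'\norm{w_{t_m}-w_*}_m^2$ (the constant $2$ provides slack absorbing the fact we only keep $T_m$ instead of $\bar\alpha_m$). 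The $z_{2,m}$ inner product is kept verbatim, since it is the mean-zero martingale term to be exploited later. Everything else — the inner product against $z_{1,m}+z_{3,m}$ and the $\frac{L}{\xi}\norm{\bar\alpha_m g(w_{t_m}) + z_{1,m}+z_{2,m}+z_{3,m}}_s^2$ term — must be shown to be $\fO(T_m^2)(\norm{w_{t_m}-w_*}_m^2 + 1)$, so that the $\norm{w_{t_m}-w_*}_m^2$ part can be absorbed into the leading $-T_m\kappa'$ coefficient for $m$ large, and the constant part becomes $C_\tref{lem bound all}T_m^2$.

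Concretely, for the cross term $\indot{\nabla\norm{w_{t_m}-w_*}_m^2}{z_{1,m}+z_{3,m}}$ I would apply part (iv) of Lemma~\ref{lem property of M} (with the factor $2$ from $\nabla\norm{\cdot}_m^2 = 2\nabla M$) to bound it by $2\norm{w_{t_m}-w_*}_m(\norm{z_{1,m}}_m + \norm{z_{3,m}}_m)$, then plug in Lemmas~\ref{lem bound z1} and~\ref{lem bound z3} to get $\fO(T_m^2)\norm{w_{t_m}-w_*}_m(\norm{w_{t_m}-w_*}_m + 1)$, and finally use $ab \le \frac{1}{2}(a^2+b^2)$ to split the $\norm{w_{t_m}-w_*}_m \cdot 1$ piece into $\fO(T_m^2)(\norm{w_{t_m}-w_*}_m^2 + 1)$. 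For the quadratic term I would first bound $\norm{\cdot}_s$ by $\norm{\cdot}_m$ up to norm-equivalence constants, use $\norm{a+b+c+d}_m^2 \le 4(\norm{a}_m^2+\norm{b}_m^2+\norm{c}_m^2+\norm{d}_m^2)$, and then bound each of the four pieces: $\norm{\bar\alpha_m g(w_{t_m})}_m^2 = \fO(T_m^2)\norm{w_{t_m}-w_*}_m^2$ using that $g$ is Lipschitz with $g(w_*)=0$ and $\bar\alpha_m \le 2T_m$; $\norm{z_{1,m}}_m^2$ and $\norm{z_{3,m}}_m^2$ are $\fO(T_m^4)(\norm{w_{t_m}-w_*}_m+1)^2 = \fO(T_m^2)(\norm{w_{t_m}-w_*}_m^2+1)$ via Lemmas~\ref{lem bound z1} and~\ref{lem bound z3} (since $T_m^4 \le T_m^2$ for large $m$); and $\norm{z_{2,m}}_m^2 = \fO(T_m^2)(\norm{w_{t_m}-w_*}_m^2+1)$ via Lemma~\ref{lem bound z2}.

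Collecting, the right-hand side of~\eqref{eq main smooth ineq} is bounded by
\begin{align}
  \qty(1 - T_m\kappa' + C T_m^2)\norm{w_{t_m}-w_*}_m^2 + \indot{\nabla\norm{w_{t_m}-w_*}_m^2}{z_{2,m}} + C T_m^2
\end{align}
for some deterministic $C$ and all $m \ge m_0$. Since $T_m \to 0$ monotonically, by enlarging $m_0$ once more we can ensure $C T_m^2 \le \frac{1}{2}T_m\kappa' \cdot T_m$... more carefully, we just need $C T_m \le$ something that keeps the coefficient in the stated form; the cleanest route is to note $1 - T_m\kappa' + CT_m^2 \le 1 - T_m\kappa'(1 - CT_m/\kappa')$ and absorb the $CT_m/\kappa'$ factor by redefining $\kappa'$ slightly smaller, or — matching the statement exactly — observe that the lemma only claims the coefficient $1 - T_m\kappa'$ with the original $\kappa'$, so I would instead keep a sharper version of Lemma~\ref{lem bound leading term}'s slack: the leading term is $\le -2T_m\kappa'\norm{w_{t_m}-w_*}_m^2$, and $-2T_m\kappa' + CT_m^2 \le -T_m\kappa'$ once $T_m \le \kappa'/C$, which holds for $m \ge m_0$ after enlarging $m_0$. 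This yields exactly the claimed inequality with $C_\tref{lem bound all} = C$.

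The main obstacle is purely bookkeeping rather than conceptual: tracking the norm-equivalence constants $l_{cm}, u_{cm}, l_{cs}, u_{cs}$ and the factor $L/\xi$ through the quadratic term while making sure every contribution genuinely carries a $T_m^2$ (not merely $T_m$) factor — this is exactly where Lemma~\ref{lem lr bounds}'s bound $\alpha_t \le C_\tref{lem lr bounds}T_m^2$ and the resulting $T_m^2$ factors in Lemmas~\ref{lem bound z1} and~\ref{lem bound z3} are essential, since a single stray $T_m^1$ term multiplying $\norm{w_{t_m}-w_*}_m^2$ would not be absorbable into the $-T_m\kappa'$ coefficient and the whole scheme would collapse. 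The $z_{2,m}$ term is deliberately left intact because its averaging/martingale structure is what the subsequent supermartingale arguments (for both Theorem~\ref{thm lil} and Theorem~\ref{thm concentration}) rely on.
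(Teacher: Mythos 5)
Your proposal is correct and follows essentially the same route as the paper: bound the $z_{1,m}+z_{3,m}$ cross term via Lemma~\ref{lem property of M}(iv) plus Lemmas~\ref{lem bound z1},~\ref{lem bound z3} and $ab\le\frac{1}{2}(a^2+b^2)$, bound the quadratic $\norm{\cdot}_s^2$ term by norm equivalence and the four-term splitting with Lemmas~\ref{lem bound G},~\ref{lem bound z1}--\ref{lem bound z3}, and then absorb the resulting $\fO(T_m^2)\norm{w_{t_m}-w_*}_m^2$ contribution into the slack of the $-2T_m\kappa'$ leading term (from Lemma~\ref{lem bound leading term} and $\bar\alpha_m\ge T_m$) by enlarging $m_0$ so that $C T_m\le\kappa'$, exactly as in the paper. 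Your final resolution of the coefficient (keeping the factor $2$ and requiring $T_m\le\kappa'/C$) is precisely the paper's choice of $m_0$.
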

The proof is in Section~\ref{sec proof lem bound all}.

Notably, our error decomposition of $z_m$ is significantly different from previous works with Markovian noise like \citet{bhandari2018finite,zou2019finite,chen2021lyapunov,DBLP:journals/corr/abs-2111-02997}.
In those works,
they work on the original timescale $\qty{t}$ in~\eqref{eq sa update} and
$G(w_t, Y_{t+1})$ is usually decomposed as
\begin{align}
  G(w_t, Y_{t+1}) - g(w_t) =& G(w_{t-\tau_t}, Y_{t+1}) - g(w_{t-\tau_t}) \\
  &+G(w_t, Y_{t+1}) - G(w_{t-\tau_t}, Y_{t+1}) \\
  &+ g(w_{t-\tau_t}) - g(w_t),
\end{align}
where $\tau_t$ is a function of $t$ depending on the mixing rate $\varrho$ in Assumption~\ref{assu markov chain} and the learning rate $\alpha_t$.
The last two terms in the RHS can be easily bounded via Lipschitz continuity.
To bound the first term in the RHS,
they then have to take $\E\qty[\cdot \mid \fF_{t-\tau_t}]$ on their counterparts of~\eqref{eq main smooth ineq}.
This will relate $\E\qty[\norm{w_{t+1} - w_*}^2_m \mid \fF_{t-\tau_t}]$ to $\E\qty[\norm{w_t - w_*}^2_m \mid \fF_{t - \tau_t}]$.
But this is not a recursion since the conditional expectations are conditioned on the same $\fF_{t-\tau_t}$.
And $w_t$ is not adapted to $\fF_{t-\tau_t}$.
So they have to take another expectation over $\fF_{t-\tau_t}$ and thus get recursions between $\E\qty[\norm{w_{t+1} - w_*}^2_m]$ and $\E\qty[\norm{w_t - w_*}^2_m]$.
This recursion is enough to obtain an $L^2$ convergence rate but does not help obtain an almost sure convergence rate.
By contrast,
our Lemma~\ref{lem bound all} will be used to connect $\E\qty[\norm{w_{t_{m+1}} - w_*}^2_m \mid \fF_{t_m}]$ with $\norm{w_{t_m} - w_*}^2_m$ for Theorem~\ref{thm lil} and to connect $\E\qty[\exp(\norm{w_{t_{m+1}} - w_*}^2_m) \mid \fF_{t_m}]$ with $\exp(\norm{w_{t_m} - w_*}^2_m)$ for Theorem~\ref{thm concentration},
both of which resemble a supermartingale.

\subsection{Almost Sure Convergence Rates (Proof of Theorem~\ref{thm lil})}
We recall that we proceed under Assumption~\ref{assu lr}
and $\E_m\qty[z_{2, m}] = 0$.
Taking conditional expectation on both sides of Lemma~\ref{lem bound all} then yields
\begin{align}
  \label{eq lil martingale}
  \E_m\qty[\norm{w_{t_{m+1}} - w_*}_m^2] \leq (1 - T_m \kappa')\norm{w_{t_m} - w_*}_m^2 + C_\tref{lem bound all} T_m^2.
\end{align}
This ensures that $\qty{\norm{w_{t_m} - w_*}_m^2}$ is ``almost'' a supermartingale w.r.t. $\qty{\fF_{t_m}}$.
As a result,
a canonical result in \citet{robbins1971convergence} regarding the convergence of ``almost'' supermartingale confirms that $\lim_{m\to\infty} \norm{w_{t_m} - w_*}_m^2 = 0$ almost surely.
A finer version of this canonical result in \citet{liu2024almost,karandikar2024convergence} (included as Lemma~\ref{lem almost smg} for completeness)
further yields an almost sure convergence rate.
\begin{lemma}
  \label{lem as rate skeleton sa}
  For any $\nu \in (2/3, 1]$ and $\epsilon \in (0, 2\nu_2 - 1)$, it holds that
\begin{align}
\lim_{m\to\infty} m^\epsilon \norm{w_{t_m} - w_*}_m^2 = 0 \quad \text{a.s.}
\end{align}
\end{lemma}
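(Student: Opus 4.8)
The plan is to read \eqref{eq lil martingale} as an ``almost supermartingale'' recursion and to extract a rate from it by a suitable reweighting, then invoke the Robbins--Siegmund-type Lemma~\ref{lem almost smg}. Abbreviate $V_m \doteq \norm{w_{t_m} - w_*}_m^2$, so that \eqref{eq lil martingale} reads $\E_m[V_{m+1}] \le (1 - T_m\kappa')V_m + C_{\tref{lem bound all}} T_m^2$. Fix $\epsilon \in (0, 2\nu_2 - 1)$, set the deterministic weights $u_m \doteq (m+3)^\epsilon$, and put $U_m \doteq u_m V_m$. Multiplying \eqref{eq lil martingale} by $u_{m+1}$ (deterministic, hence outside $\E_m$) gives
\begin{align}
  \E_m[U_{m+1}] \le \frac{u_{m+1}}{u_m}(1 - T_m\kappa')\,U_m + C_{\tref{lem bound all}}\,u_{m+1} T_m^2 .
\end{align}
Everything then reduces to verifying, for all $m$ past some deterministic index, that \textbf{(a)} $\frac{u_{m+1}}{u_m}(1 - T_m\kappa') \le 1$, so $\{U_m\}$ is a nonnegative almost supermartingale with additive term $C_{\tref{lem bound all}}u_{m+1}T_m^2$, and \textbf{(b)} $\sum_m u_{m+1}T_m^2 < \infty$; Lemma~\ref{lem almost smg} then gives $U_m \to 0$ a.s., i.e.\ $\lim_m m^\epsilon V_m = 0$ a.s.\ after dividing by $(m+3)^\epsilon/m^\epsilon \to 1$.

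To see \textbf{(a)}, note $\epsilon < 1$ (because $\epsilon < 2\nu_2 - 1 \le 1$), so concavity of $x\mapsto(1+x)^\epsilon$ gives $\frac{u_{m+1}}{u_m} - 1 = (1 + \frac{1}{m+3})^\epsilon - 1 \le \frac{\epsilon}{m+3}$, whereas $T_m\kappa' = \Theta\big((m+3)^{-\nu_2}\ln^{\nu_1}(m+3)\big)$ with, in each case of \eqref{eq def eta}, either $\nu_2 < 1$ or ($\nu_2 = 1$ and $\nu_1 \in (0,1)$); hence $(m+3)T_m\kappa' \to \infty$, so $T_m\kappa' - \frac{\epsilon}{m+3} \ge \tfrac12 T_m\kappa'$ for large $m$. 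Therefore $\frac{u_{m+1}}{u_m}(1 - T_m\kappa') \le (1 + \frac{\epsilon}{m+3})(1 - T_m\kappa') \le 1 - (T_m\kappa' - \frac{\epsilon}{m+3}) \le 1$; writing $\E_m[U_{m+1}] \le U_m - D_m + C_{\tref{lem bound all}}u_{m+1}T_m^2$ with $D_m \ge 0$, the decrement moreover satisfies $D_m \ge \tfrac12 T_m\kappa'\,U_m$ eventually.

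To see \textbf{(b)}, substitute \eqref{eq def big tm}: when $\nu_2 < 1$ (so $\nu_1 = 0$), $u_{m+1}T_m^2 = \Theta\big((m+3)^{\epsilon - 2\nu_2}\big)$, which is summable precisely because $\epsilon < 2\nu_2 - 1$; when $\nu_2 = 1$ (so $\nu_1\in(0,1)$), $u_{m+1}T_m^2 = \Theta\big((m+3)^{\epsilon-2}\ln^{2\nu_1}(m+3)\big)$, summable since $\epsilon < 1$. This is the only place the bound $2\nu_2 - 1$ on $\epsilon$ is used, and it is also why $\nu_2 > \tfrac12$ (equivalently $\nu > \tfrac23$ in Assumption~\ref{assu lr}) is needed for the interval to be nonempty. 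If one uses Lemma~\ref{lem almost smg} only in its plain Robbins--Siegmund form, it gives that $U_m$ converges a.s.\ to some $U_\infty\ge 0$ and $\sum_m D_m < \infty$ a.s.; one then promotes this to $U_\infty = 0$ a.s.\ by contradiction, since $U_\infty > 0$ on a positive-probability event would force $U_m \ge \tfrac12 U_\infty$ eventually there, hence $\sum_m D_m \ge \tfrac14 \kappa' U_\infty \sum_m T_m = \infty$ (as $\sum_m T_m = \infty$ in every regime of \eqref{eq def eta}).

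The main obstacle is not any isolated estimate but making \textbf{(a)} and \textbf{(b)} hold together with as aggressive a weight as possible: a larger $\epsilon$ needs a faster contraction, yet the contraction available from \eqref{eq lil martingale} is only $T_m\kappa' = \Theta(\ln^{\nu_1}m/m)$ when $\nu_2 = 1$ -- beating the weight's relative growth $\Theta(1/m)$ by just a slowly varying factor -- while the Markovian noise contributes the $T_m^2$ term, whose weighted tail is summable only up to $\epsilon = 2\nu_2 - 1$. This is exactly the $\alpha_t$--$T_m$ balance secured by Lemma~\ref{lem lr bounds} and the choices in \eqref{eq def eta}; in particular, it is the diminishing nature of $\{T_m\}$ that makes $\sum_m u_{m+1}T_m^2 < \infty$, whereas a constant $T$ would produce a nonsummable perturbation and the almost-supermartingale argument would collapse.
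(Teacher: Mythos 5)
Your proof is correct, and the computations at its core (the summability threshold $\epsilon < 2\nu_2 - 1$ from weighting $T_m^2$ by $(m+3)^\epsilon$, and the domination of $\epsilon/(m+3)$ by $T_m\kappa'$ because $(m+3)T_m \to \infty$ in both regimes of \eqref{eq def eta} relevant to Assumption~\ref{assu lr}) are exactly the verifications the paper performs. The difference is in how the rate is extracted from the recursion \eqref{eq lil martingale}. The paper applies the refined almost-supermartingale lemma (Lemma~\ref{lem almost smg}, i.e.\ Theorem~2 of \citet{karandikar2024convergence}) directly to $V_m = \norm{w_{t_m}-w_*}_m^2$, checking its conditions (i)--(iv); the conclusion $\lim_m m^\epsilon V_m = 0$ is then read off as a black box. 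You instead reweight first, setting $U_m = (m+3)^\epsilon V_m$, show $\{U_m\}$ is an almost supermartingale with summable perturbation, and then need only the classical Robbins--Siegmund theorem of \citet{robbins1971convergence} (convergence of $U_m$ plus $\sum_m D_m < \infty$ a.s.) together with the contradiction argument via $\sum_m T_m = \infty$ to force the limit to zero. In effect you inline a proof of the refined lemma for this particular weight; this buys self-containedness at the cost of a slightly longer argument, and note that your first paragraph's claim that ``Lemma~\ref{lem almost smg} gives $U_m \to 0$'' is not literally licensed by that lemma as stated (its conclusion is $t^\epsilon Z_t \to 0$ with $\epsilon \in (0,1)$, not plain convergence to zero) --- but your third paragraph correctly supplies the plain Robbins--Siegmund route that closes this, so the argument as a whole is complete.
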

The proof is in Section~\ref{sec proof lem as rate skeleton sa}. 
We recall that $\nu_2$ is defined in~\eqref{eq def big tm}.
For $\nu = 1$,
we have $\nu_2 = 1$.
For $\nu \in (2/3, 1)$,
we have $\nu_2 \to 1$ when $\nu \to 1$.
So under Assumption~\ref{assu lr},
$\epsilon$ can be arbitrarily close to $1$. 
Then the almost sure convergence rate in Lemma~\ref{lem as rate skeleton sa} can be arbitrarily close to $\sqrt{m} \norm{w_{t_m} - w_*}_m \to 0$,
which is the optimal rate given by the LIL up to a ${\sqrt{\log\log m}}$ term.
We, therefore,
argue that our analysis of~\eqref{eq skeleton sa} here is reasonably tight.

Mapping the almost sure convergence rate from $m$ back to $t$ (Section~\ref{sec proof lem connect skeleton sa and sa}) then completes the proof of Theorem~\ref{thm lil}.


\subsection{Concentration (Proof of Theorem~\ref{thm concentration})}
In this section, we proceed under Assumption~\ref{assu lr2}.
We will follow \citet{chen2023concentration} to establish the concentration bound,
which is an inductive (bootstrapping / recursive) argument.
We first lay out the inductive basis.
\begin{lemma}
  \label{lem inductive basis}
  There exist some deterministic integers $C_\tref{lem inductive basis}$ and $C_\tref{lem inductive basis}'$
  such that for any $m \geq 0$,
  \begin{align}
      \norm{w_{t_m} - w_*}_m^2 \leq C_\tref{lem inductive basis}'m^{C_\tref{lem inductive basis}} \qq{a.s.}
  \end{align}
\end{lemma}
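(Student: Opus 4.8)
The plan is to establish a crude almost sure polynomial-in-$m$ bound on $\norm{w_{t_m} - w_*}_m^2$ by iterating the one-step recursion in Lemma~\ref{lem bound all}, controlling the martingale term $\indot{\nabla \norm{w_{t_m}-w_*}_m^2}{z_{2,m}}$ by the growth bounds already proven for $z_{2,m}$ in Lemma~\ref{lem bound z2}. Concretely, starting from Lemma~\ref{lem bound all}, I first use Lemma~\ref{lem property of M}(iv) together with Lemma~\ref{lem bound z2} to bound
$\indot{\nabla \norm{w_{t_m}-w_*}_m^2}{z_{2,m}} \le 2\norm{w_{t_m}-w_*}_m \norm{z_{2,m}}_m \le 2 T_m C_\tref{lem bound z2}\norm{w_{t_m}-w_*}_m(\norm{w_{t_m}-w_*}_m+1)$, and then apply Young's inequality to absorb the cross term. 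This yields, for all $m \ge m_0$, a deterministic (pathwise, no expectation needed) recursion of the form
\begin{align}
  \norm{w_{t_{m+1}} - w_*}_m^2 \le (1 + c_1 T_m)\norm{w_{t_m} - w_*}_m^2 + c_2 T_m
\end{align}
for deterministic constants $c_1, c_2$ — crucially this uses nothing probabilistic, since $z_{2,m}$ is bounded pathwise by Lemma~\ref{lem bound z2}.

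Next I would convert the $m$-indexed norm $\norm{\cdot}_m$ (which depends on $m$ through the Moreau envelope construction — wait, actually $\norm{\cdot}_m$ is a single fixed norm, not $m$-dependent; the subscript $m$ just names it) — so this step is trivial. Iterating the recursion from $m_0$ to $m$ gives
\begin{align}
  \norm{w_{t_m} - w_*}_m^2 \le \prod_{j=m_0}^{m-1}(1 + c_1 T_j)\,\norm{w_{t_{m_0}} - w_*}_m^2 + c_2\sum_{j=m_0}^{m-1} T_j \prod_{k=j+1}^{m-1}(1 + c_1 T_k).
\end{align}
Since under Assumption~\ref{assu lr2} we have $\nu_1 = 0, \nu_2 = 1$, so $T_j = C_\alpha/(j+3)$, the product $\prod_{j=m_0}^{m-1}(1+c_1 T_j) \le \exp(c_1\sum_{j=m_0}^{m-1} T_j) \le \exp(c_1 C_\alpha \ln(m+3)) = (m+3)^{c_1 C_\alpha}$, a polynomial in $m$. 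The sum term is bounded similarly by $c_2 \cdot O(\ln m) \cdot (m+3)^{c_1 C_\alpha}$, still polynomial. It remains to handle the initial value $\norm{w_{t_{m_0}} - w_*}_m^2$ and the finitely many indices $m < m_0$: since $m_0$ is a deterministic constant, I need an almost sure (but possibly path-dependent) finite bound on $w_{t_{m_0}}$. This follows because $w_{t_{m_0}}$ is a fixed measurable function of $w_0, Y_1, \dots, Y_{t_{m_0}}$ obtained by finitely many applications of~\eqref{eq sa update}, each step Lipschitz by Assumption~\ref{assu Lipschitz}; hence $\norm{w_{t_{m_0}}}$ is finite almost surely, and one can bound it by a deterministic quantity times $(\norm{w_0} + 1)$ — but $\norm{w_0}$ itself may be random.

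The main obstacle is exactly this: the statement claims a \emph{deterministic} constant $C_\tref{lem inductive basis}'$, independent of the sample path, yet $w_0$ (and the early noise) is random, so naively $\norm{w_{t_{m_0}}-w_*}_m^2$ is not deterministically bounded. I expect the resolution is that the lemma implicitly treats $w_0$ as a deterministic initialization (which is standard — the algorithm is initialized at a fixed point), in which case $w_{t_{m_0}}$ is bounded by a deterministic constant via the finitely-many-Lipschitz-steps argument, and the whole recursion closes with $C_\tref{lem inductive basis} = \lceil c_1 C_\alpha \rceil + 1$ and $C_\tref{lem inductive basis}'$ absorbing the initial value, the geometric/harmonic sums, and the max over $m < m_0$. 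So the proof is: (1) derive the pathwise recursion from Lemmas~\ref{lem bound all},~\ref{lem bound z2},~\ref{lem property of M} plus Young; (2) iterate it and bound the products/sums using $\sum T_j = O(\ln m)$; (3) bound $w_{t_{m_0}}$ deterministically via finitely many Lipschitz steps from the fixed $w_0$; (4) take $m_0$ as the max of all earlier lemmas' thresholds and enlarge $C_\tref{lem inductive basis}'$ to cover $m < m_0$. I would flag step (3) and the role of the $O(\ln m)$ bound on $\sum_{j}T_j$ (which is what forces $\nu_2 = 1$ here, consistent with Assumption~\ref{assu lr2}) as the two points deserving care.
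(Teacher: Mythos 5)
Your proposal is correct and follows essentially the same route as the paper: a pathwise (almost sure) recursion in $m$ built from the already-established noise bounds, unrolled via a discrete Gronwall argument so that $\sum_i T_i = \fO(\ln m)$ turns the exponential factor into a polynomial in $m$, with the initial segment $m \leq m_0$ handled by the crude bound~\eqref{eq ttm gronwall 3} on the raw iterates. The only cosmetic differences are that the paper runs the recursion on the unsquared norm $\norm{w_{t_m}-w_*}_m$ directly from the skeleton update using Lemmas~\ref{lem bound z1}--\ref{lem bound z3} (rather than on the squared norm via Lemma~\ref{lem bound all} plus Young), and that it, exactly as you anticipated, folds $\norm{w_0-w_*}$ into a deterministic constant, i.e., treats the initialization as fixed.
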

The proof is in Section~\ref{proof of lem inductive basis}.
Lemma~\ref{lem inductive basis} uses the discrete Gronwall inequality (Lemma~\ref{lem discrete_gronwall}) to give the worst-case almost sure bound. 
We now present the inductive step.
\begin{lemma}
  \label{lem inductive step} 
  There exists some deterministic constant $m_0$ such that the following implication relation holds.
  If for any $\delta \in (0, 1)$,
  there exists some non-decreasing sequence $\qty{B_m(\delta)}_{m=m_0}^\infty$ such that
  \begin{align}
    \label{eq prob ineq 1} 
    \Pr(\forall m \geq m_0, \e{m} \leq B_m(\delta)) \geq 1 - \delta,
  \end{align}
  then for any $\delta' \in (0, 1-\delta)$,
  there exists a new sequence $\qty{B_m(\delta, \delta')}_{m=m_0}^\infty$ such that
  \begin{align}
    \Pr(\forall m \geq m_0, \e{m} \leq B_m(\delta, \delta')) \geq 1 - \delta - \delta',
  \end{align}
  where $B_m(\delta, \delta') = C_\tref{lem inductive step} T_m B_m(\delta)(1 +  \ln(1/\delta') +  \ln (m+3))$ with $C_\tref{lem inductive step}$ being some constant depending on $m_0$ and independent of $m, \delta, \delta'$.
\end{lemma}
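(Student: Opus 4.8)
The plan is to follow the bootstrapping argument of \citet{chen2023concentration} but applied to the skeleton iterates~\eqref{eq skeleton sa} rather than the original~\eqref{eq sa update}. We work on the event $\mathcal{E} \doteq \{\forall m \geq m_0, \e{m} \leq B_m(\delta)\}$, which has probability at least $1-\delta$. On this event, the iterates are controlled, so Lemma~\ref{lem bound all} together with the Lipschitz-type bounds on $z_{2,m}$ (Lemma~\ref{lem bound z2}) shows that $\indot{\nabla \norm{w_{t_m}-w_*}_m^2}{z_{2,m}}$ is a bounded-increment quantity: conditionally on $\fF_{t_m}$ it has mean zero (since $\E_m[z_{2,m}] = 0$), and on $\mathcal{E}$ its magnitude is controlled by $T_m \sqrt{B_m(\delta)}(\sqrt{B_m(\delta)}+1)$ times the gradient norm, i.e.\ by something like $T_m(B_m(\delta)+1)$. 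This is precisely the structure needed to feed a Freedman-type / Azuma-type maximal concentration inequality for martingales with bounded (or sub-Gaussian conditionally) increments.

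The key steps, in order, are as follows. \textbf{(1)} Restrict to $\mathcal{E}$ and use Lemma~\ref{lem bound all} to write a one-step recursion $\e{m+1} \leq (1-T_m\kappa')\e{m} + \Delta_m + C_\tref{lem bound all}T_m^2$, where $\Delta_m \doteq \indot{\nabla \norm{w_{t_m}-w_*}_m^2}{z_{2,m}}$. \textbf{(2)} Unroll this recursion from $m_0$ to $m$, absorbing the contraction factors $\prod_{j}(1-T_j\kappa')$; because $\sum_m T_m = \sum_m C_\alpha/((m+3)\ln^\nu(m+3)) = \infty$ under~\eqref{eq def eta} for Assumption~\ref{assu lr2}, this product decays, and the deterministic $T_m^2$ terms sum to something of order $T_m$ (up to constants and $\ln$ factors), matching the $C_\tref{lem inductive step}T_m$ prefactor. \textbf{(3)} Bound the accumulated noise $\sum_j \big(\prod_{j<k\leq m}(1-T_k\kappa')\big)\Delta_j$ uniformly in $m$ by a maximal martingale inequality: on $\mathcal{E}$ the increments are bounded in terms of $T_j B_j(\delta)$, so the predictable quadratic variation is controlled, and a union bound / Freedman inequality gives that with probability at least $1-\delta'$ this sum is at most a constant times $T_m B_m(\delta)(1+\ln(1/\delta') + \ln(m+3))$ simultaneously for all $m \geq m_0$; the $\ln(m+3)$ arises from summing the per-scale failure probabilities (a peeling argument over dyadic blocks of $m$). \textbf{(4)} Combine: on $\mathcal{E}$ intersected with this good noise event (probability at least $1-\delta-\delta'$), we obtain $\e{m} \leq B_m(\delta,\delta')$ for all $m \geq m_0$ with the stated form, and $\{B_m(\delta,\delta')\}$ is non-decreasing because $B_m(\delta)$ is non-decreasing and, after absorbing constants, the extra $T_m$ and $\ln(m+3)$ factors do not destroy monotonicity for $m_0$ large (or one simply replaces the bound by its running maximum).

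The main obstacle I expect is step \textbf{(3)}: making the martingale concentration \emph{maximal} (uniform over all $m \geq m_0$ in a single event) while keeping only a $\ln(m+3)$ blow-up and a clean $\ln(1/\delta')$ dependence, given that the increment bounds themselves depend on the random-looking but event-conditioned quantity $B_m(\delta)$. One must be careful that $B_m(\delta)$ is deterministic (it is a fixed sequence indexed by $\delta$, not by the sample path), so the increments are genuinely bounded by deterministic quantities on $\mathcal{E}$; then a stopping-time truncation at the first exit from $\mathcal{E}$ lets one apply a standard maximal inequality to the stopped process. The delicate balancing — ensuring the contraction product $\prod(1-T_k\kappa')$ exactly cancels the growth so the final bound carries the prefactor $T_m B_m(\delta)$ rather than something larger — is the same ``balance between $\alpha_t$ and $T_m$'' that Lemmas~\ref{lem lr bounds} and~\ref{lem lr bounds 2} were set up to guarantee, and invoking those is what closes the estimate.
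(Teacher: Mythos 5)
Your overall architecture (work on the event from the hypothesis, use Lemma~\ref{lem bound all}, exploit $\E_m[z_{2,m}]=0$, a stopping/indicator truncation, and a union bound at the end) matches the paper's, but your step (3) has a genuine gap that prevents you from reaching the stated bound. After you unroll the recursion and bound the martingale increments in the worst case by roughly $T_j\qty(B_j(\delta)+1)$ on the good event, the predictable variance of the weighted noise sum $\sum_{j}\bigl(\prod_{j<k\leq m}(1-T_k\kappa')\bigr)\Delta_j$ is of order $B_m(\delta)^2\, T_m$ (the weights $\prod_{k>j}(1-T_k\kappa')\approx((j+3)/(m+3))^{C_\alpha\kappa'}$ give $\sum_j ((j+3)/(m+3))^{2C_\alpha\kappa'}T_j^2 B_j(\delta)^2=\Theta(B_m(\delta)^2 T_m)$ when $C_\alpha\kappa'>1$). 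Any Azuma/Freedman-type maximal inequality applied with these worst-case bounds therefore yields a uniform noise bound of order $B_m(\delta)\sqrt{T_m\,\bigl(\ln(1/\delta')+\ln(m+3)\bigr)}$, not $T_m B_m(\delta)\bigl(1+\ln(1/\delta')+\ln(m+3)\bigr)$; for fixed $\delta'$ and large $m$ the former dominates the latter by a factor of roughly $T_m^{-1/2}$, so the lemma as stated does not follow. This is not a cosmetic loss: the statement is later applied $C_{\tref{lem inductive basis}}+1$ times in Lemma~\ref{lem concentration in m} precisely because each application must gain a full factor $T_m$, and a $\sqrt{T_m}$-per-step gain would change the statement you are asked to prove.

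The missing idea is that the conditional ``variance'' of the noise term is itself proportional to $\norm{w_{t_m}-w_*}_m^2$ (one factor of $\norm{w_{t_m}-w_*}_m$ should be kept, only the factor inside $\norm{z_{2,m}}$ being replaced by $B_m(\delta)^{1/2}+1$ via Lemma~\ref{lem bound z2}), and hence can be absorbed into the negative drift $-T_m\kappa'\norm{w_{t_m}-w_*}_m^2$ rather than accumulated as a variance proxy. The paper does this by \emph{not} unrolling: it builds the exponential process $\exp\bigl(\lambda_m\mathbbm{1}_{E_m(\delta)}\norm{w_{t_m}-w_*}_m^2-2C_{\tref{lem bound all}}\sum_{i<m}T_i\bigr)$ with the self-normalizing choice $\lambda_m=\theta T_m^{-1}B_m(\delta)^{-1}$, applies Hoeffding's lemma to the conditional moment generating function of the $z_{2,m}$ term, tunes $\theta$ (and uses $C_\alpha\kappa''>1$, i.e. $C_\alpha\geq\bar C_\alpha$) so the quadratic term is dominated by the drift, and then gets uniformity over $m$ in one shot from Ville's maximal inequality; the $\ln(m+3)$ in the conclusion comes from the accumulated drift correction $\sum_i T_i=\fO(\ln m)$, not from a peeling union bound. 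Once you have spent the drift to create the decaying weights of the unrolled recursion, a generic maximal martingale inequality with event-wise deterministic increment bounds cannot recover the $T_m$ scaling, so you should restructure the argument around such a state-dependent exponential supermartingale (or an equivalent self-normalized concentration device).
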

The proof is in Section~\ref{proof of lem inductive step}.
In other words, by using Lemma~\ref{lem inductive step} once,
we can improve the bound by $T_m$.
Since the almost sure bound given in Lemma~\ref{lem inductive basis} is of the order $m^{C_\tref{lem inductive basis}}$,
after applying Lemma~\ref{lem inductive step} for $C_\tref{lem inductive basis} + 1$ times,
we will get the desired bound.
We cannot keep applying Lemma~\ref{lem inductive step} thereafter because then the bound $B_m$ will not be non-decreasing.
\begin{lemma}
  \label{lem concentration in m} 
  There exists some constant $C_\tref{lem concentration in m} = C_\tref{lem inductive basis} + 1$ and $C_\tref{lem concentration in m}'$ such that for any $\delta > 0$, it holds, with probability at least $1-\delta$,
  that for all $m \geq 0$,
  \begin{align}
    \|w_{t_m}-w_*\|_m^2 \leq & \textstyle
    C_{\tref{lem concentration in m}}'\frac{1}{m+3} \left[\ln(\frac{C_{\tref{lem concentration in m}}}{\delta}) + 1 + \ln (m+3) \right]^{C_{\tref{lem concentration in m}}}.
  \end{align}
\end{lemma}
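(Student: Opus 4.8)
The plan is to prove Lemma~\ref{lem concentration in m} by a finite bootstrap: starting from the crude almost-sure bound of Lemma~\ref{lem inductive basis}, I apply Lemma~\ref{lem inductive step} exactly $C_{\tref{lem inductive basis}}+1$ times, each application shrinking the bound by a factor of order $T_m$ while consuming a $\delta/(C_{\tref{lem inductive basis}}+1)$ slice of the failure-probability budget. Under Assumption~\ref{assu lr2} we have $\nu_1 = 0$ and $\nu_2 = 1$ in~\eqref{eq def big tm}, so $T_m = C_\alpha/(m+3)$; write $k \doteq C_{\tref{lem inductive basis}}+1 = C_{\tref{lem concentration in m}}$ and $B^{(0)}_m \doteq C_{\tref{lem inductive basis}}' m^{C_{\tref{lem inductive basis}}}$. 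By Lemma~\ref{lem inductive basis} the event $\{\forall m \geq m_0,\, \norm{w_{t_m}-w_*}_m^2 \leq B^{(0)}_m\}$ has probability one, so it satisfies the hypothesis of Lemma~\ref{lem inductive step} with any failure probability, and $B^{(0)}_m$ is non-decreasing in $m$.

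For $j = 1, \dots, k$ I apply Lemma~\ref{lem inductive step} with $\delta' = \delta/k$ to $B^{(j-1)}_m$, obtaining a sequence $B^{(j)}_m = C_{\tref{lem inductive step}}\, T_m\, B^{(j-1)}_m\,(1 + \ln(k/\delta) + \ln(m+3))$ for which $\Pr(\forall m \geq m_0,\, \norm{w_{t_m}-w_*}_m^2 \leq B^{(j)}_m) \geq 1 - j\delta/k$; after $k$ steps the probability is at least $1-\delta$. Unrolling the recursion,
\begin{align}
  B^{(k)}_m = (C_{\tref{lem inductive step}} C_\alpha)^{k}\, C_{\tref{lem inductive basis}}'\, \frac{m^{C_{\tref{lem inductive basis}}}}{(m+3)^{k}}\,\bigl(1 + \ln(k/\delta) + \ln(m+3)\bigr)^{k}.
\end{align}
Since $m^{C_{\tref{lem inductive basis}}} \leq (m+3)^{C_{\tref{lem inductive basis}}}$ and $k = C_{\tref{lem inductive basis}}+1 = C_{\tref{lem concentration in m}}$, this is at most $C_{\tref{lem concentration in m}}' \frac{1}{m+3}\bigl(\ln(C_{\tref{lem concentration in m}}/\delta) + 1 + \ln(m+3)\bigr)^{C_{\tref{lem concentration in m}}}$ with $C_{\tref{lem concentration in m}}' \doteq (C_{\tref{lem inductive step}}C_\alpha)^{C_{\tref{lem concentration in m}}} C_{\tref{lem inductive basis}}'$, which is the claimed form for all $m \geq m_0$. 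For the finitely many $m < m_0$, Lemma~\ref{lem inductive basis} already gives $\norm{w_{t_m}-w_*}_m^2 \leq C_{\tref{lem inductive basis}}' m_0^{C_{\tref{lem inductive basis}}}$ almost surely, and this is below the right-hand side of the claim (which is at least $C_{\tref{lem concentration in m}}'/(m_0+3)$ there) after $C_{\tref{lem concentration in m}}'$ is enlarged accordingly; intersecting this probability-one event with the probability-$(1-\delta)$ event from the bootstrap yields the statement for all $m \geq 0$.

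The analytic content sits entirely in Lemmas~\ref{lem inductive basis} and~\ref{lem inductive step}, so the only delicate point in this proof is the monotonicity bookkeeping needed to keep Lemma~\ref{lem inductive step} applicable at each stage. Since $B^{(j)}_m$ grows like $m^{C_{\tref{lem inductive basis}}-j}(\ln m)^{j}$ up to constants, after enlarging the deterministic $m_0$ it is non-decreasing on $m \geq m_0$ exactly when $j \leq C_{\tref{lem inductive basis}}$ — i.e., for the inputs $B^{(0)}_m, \dots, B^{(k-1)}_m$ of all $k$ applications, but no longer for $B^{(k)}_m$ itself. This is precisely why the iteration must stop after $C_{\tref{lem inductive basis}}+1$ steps and why $C_{\tref{lem concentration in m}} = C_{\tref{lem inductive basis}}+1$; one also has to track the $\ln(1/\delta')$ factors and the cumulative probability carefully so that the final logarithmic exponent is exactly $C_{\tref{lem concentration in m}}$ and the final failure probability is exactly $\delta$.
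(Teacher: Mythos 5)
Your proposal is correct and follows essentially the same route as the paper: seed the bootstrap with the almost-sure bound of Lemma~\ref{lem inductive basis}, apply Lemma~\ref{lem inductive step} exactly $C_{\tref{lem inductive basis}}+1$ times with the failure probability split equally as $\delta/C_{\tref{lem concentration in m}}$, use $m^{C_{\tref{lem inductive basis}}} \leq (m+3)^{C_{\tref{lem inductive basis}}}$ and $T_m = C_\alpha/(m+3)$ to collapse the unrolled product to the stated $(m+3)^{-1}$ form, and absorb the finitely many $m < m_0$ via the almost-sure bound by enlarging $C_{\tref{lem concentration in m}}'$. Your explicit check that the intermediate bounds $B^{(j)}_m$ remain non-decreasing for $j \leq C_{\tref{lem inductive basis}}$ (so each application of Lemma~\ref{lem inductive step} is legitimate) is a small point the paper leaves implicit, but otherwise the arguments coincide.
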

The proof is in Section~\ref{proof of lem concentration in m}.
We do note that the proofs of Lemmas~\ref{lem inductive step} and~\ref{lem concentration in m} are analogous to \citet{chen2023concentration}
and we include them here mostly for completeness.
The techniques therein are not our contribution.
Instead,
applying those techniques to the~\eqref{eq skeleton sa} and obtaining stronger results than~\citet{chen2023concentration} is our contribution.

\begin{remark}
  It is necessary to have $T_m = \Theta(1/m)$ here.
  The polynomial $m^{C_\tref{lem inductive basis}}$ in Lemma~\ref{lem inductive basis} results from a term $\exp({C_\tref{lem inductive basis}}\sum_{i=0}^m T_i) \leq m^{C_\tref{lem inductive basis}}$.
  If, e.g., $T_m = \Theta(1/m^\nu)$ is used for some $\nu < 1$,
  then the term would be of the order $\exp(m^{1-\nu})$.
  But each inductive step is only able to shrink it by $1/m^\nu$.
  Then, this inductive approach can never reach a diminishing rate.
  This restriction on $T_m$ further restricts our choice of $\alpha_t$.
\end{remark}
Mapping the concentration bound from $m$ back to $t$ (Section~\ref{sec map the concentration bound back}) then completes the proof of Theorem~\ref{thm concentration}.

\section{Conclusion}
This work establishes the first almost sure convergence rate and the first maximal concentration bound with exponential tails for general contractive stochastic approximation algorithms with Markovian noise,
generating state-of-the-art analysis for $Q$-learning and off-policy TD.
We envision that the skeleton iterates technique introduced in this work could serve as a new tool for analyzing stochastic approximation algorithms with Markovian noise.
A possible future work is to refine the choice of $\qty{T_m}$ to further improve the convergence rates.

\acks{This work is supported in part by the US National Science Foundation under grants III-2128019 and SLES-2331904.}

\appendix

\section{Proofs in Section~\ref{sec main proof}}

\subsection{Auxiliary Lemmas}
\begin{lemma}
  [Discrete Gronwall Inequality](Lemma 8 in Section 11.2 of \citet{borkar2009stochastic})
\label{lem discrete_gronwall}
    For non-negative real sequences $\{x_n, n \geq 0\}$ and $\{a_n, n \geq 0\}$ and scalars  $C > x_0$, $L \geq 0$,  it holds
    \begin{align}
        \textstyle x_{n+1} \leq C + L\sum_{i=0}^n a_ix_i \quad \forall n
    \implies
        \textstyle x_{n+1} \leq C\exp({L\sum_{i=0}^n a_i}) \quad \forall n.
    \end{align}
\end{lemma}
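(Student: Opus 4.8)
The plan is to prove this by a straightforward induction on $n$, after naming the partial-sum quantity that appears on the right-hand side of the hypothesis. Define, for $n \geq 0$,
\begin{align}
  S_n \doteq C + L\sum_{i=0}^n a_i x_i,
\end{align}
so that the hypothesis reads exactly $x_{m+1} \leq S_m$ for every $m \geq 0$. It therefore suffices to establish the purely deterministic claim $S_n \leq C\exp\qty(L\sum_{i=0}^n a_i)$ for all $n \geq 0$, since combining it with $x_{n+1} \leq S_n$ immediately gives the conclusion of the lemma.

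For the base case $n = 0$, I would write $S_0 = C + La_0 x_0$ and invoke the given strict inequality $C > x_0 \geq 0$, together with $L \geq 0$ and $a_0 \geq 0$, to bound $S_0 \leq C + La_0 C = C(1 + La_0)$. The elementary inequality $1 + u \leq e^u$ (applied with $u = La_0 \geq 0$) then yields $S_0 \leq C\exp(La_0)$, which is the desired estimate at $n = 0$.

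For the inductive step, assume $S_{n-1} \leq C\exp\qty(L\sum_{i=0}^{n-1} a_i)$ for some $n \geq 1$. The hypothesis at index $n-1$ gives $x_n \leq S_{n-1}$, hence
\begin{align}
  S_n = S_{n-1} + La_n x_n \leq (1 + La_n) S_{n-1} \leq \exp(La_n)\,S_{n-1} \leq C\exp\qty(L\sum_{i=0}^{n} a_i),
\end{align}
where the last inequality substitutes the inductive hypothesis. This closes the induction, and the lemma follows.

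The argument is entirely elementary, so there is no genuine obstacle; the one point that deserves attention is that the base case is precisely where the assumption $C > x_0$ is consumed — it is exactly what allows the term $La_0 x_0$ to be absorbed into a constant multiple of $C$, thereby anchoring the recursion. Everything else reduces to the telescoping multiplicative estimate assembled from $1 + u \leq e^u$.
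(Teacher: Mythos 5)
Your proof is correct, and since the paper simply cites this lemma from \citet{borkar2009stochastic} without reproducing a proof, your induction on the partial sums $S_n$ combined with $1+u \leq e^u$ is exactly the standard textbook argument for the discrete Gronwall inequality. (As a minor remark, only $C \geq x_0$ is actually consumed in your base case; the strict inequality in the statement is not essential to the argument.)
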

\begin{lemma}
  [Theorem 2 of \citet{karandikar2024convergence}]
  \label{lem almost smg}
    Let $\qty{A_t}$, $\qty{B_t}$, $\qty{Z_t}$, and $\qty{\alpha_t}$ be sequences of non-negative random variables adapted to a filtration $\qty{\fF_t}$.
    For some $\epsilon \in (0,1)$, assume it holds that:
    \begin{enumerate}[(i)]
        \item for all $t \ge 0$, $\E[Z_{t+1} \mid \fF_t] \le (1+A_t)Z_t + B_t - \alpha_t Z_t$,
        \item $\sum_{t=0}^\infty A_t < \infty$, $\sum_{t=0}^\infty B_t < \infty$, $\sum_{t=0}^\infty \alpha_t = \infty$ a.s.,
        \item there exists a $T \ge 1$ such that $\alpha_t \ge \frac{\epsilon}{t}$ a.s. for all $t \ge T$,
        \item $\sum_{t=1}^\infty (t+1)^\epsilon B_t < \infty$, $\sum_{t=1}^\infty \qty(\alpha_t - \frac{\epsilon}{t}) = \infty$ a.s.
    \end{enumerate}
    Then $\lim_{t\to\infty} t^\epsilon Z_t = 0$ a.s.
\end{lemma}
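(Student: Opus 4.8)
The plan is to reduce the claim to the classical Robbins--Siegmund almost-supermartingale convergence theorem, applied not to $Z_t$ itself but to the reweighted process $W_t \doteq t^\epsilon Z_t$ (with $W_0 \doteq 0$), and then to upgrade the Robbins--Siegmund conclusion into $W_t \to 0$ using condition (iv).

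First I would derive a supermartingale-type recursion for $\{W_t\}$. Writing $c_t \doteq t^\epsilon$, we have $c_{t+1}/c_t = (1+1/t)^\epsilon \le 1 + \epsilon/t$ for $t \ge 1$ since $\epsilon \in (0,1)$. Multiplying condition (i) by $c_{t+1}$, using $Z_t \ge 0$, and then expanding and discarding a nonpositive summand, one obtains for $t \ge 1$
\begin{align}
  \E[W_{t+1}\mid\fF_t] \;\le\; \Big(1+\tfrac{\epsilon}{t}\Big)(1+A_t-\alpha_t)W_t + (t+1)^\epsilon B_t \;\le\; (1+A_t')W_t - \alpha_t' W_t + B_t',
\end{align}
with $A_t' \doteq (1+\epsilon/t)A_t$, $\alpha_t' \doteq \alpha_t - \tfrac{\epsilon}{t}$, and $B_t' \doteq (t+1)^\epsilon B_t$. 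By condition (iii), $\alpha_t' \ge 0$ for every $t \ge T$, so from time $T$ on all of $W_t, A_t', \alpha_t', B_t'$ are nonnegative and $\fF_t$-measurable (and $W_T < \infty$ a.s.).

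Next I would verify the hypotheses of Robbins--Siegmund over the index range $t \ge T$: $\sum_t A_t' \le 2\sum_t A_t < \infty$ a.s.\ by (ii), and $\sum_t B_t' = \sum_t (t+1)^\epsilon B_t < \infty$ a.s.\ by (iv). The theorem then gives that $W_t$ converges a.s.\ to a finite limit $W_\infty \ge 0$ and that $\sum_t \alpha_t' W_t < \infty$ a.s. Finally, condition (iv) also asserts $\sum_t \alpha_t' = \sum_t(\alpha_t - \epsilon/t) = \infty$ a.s.; together with $\sum_t \alpha_t' W_t < \infty$ this forces $\liminf_t W_t = 0$ a.s., and combined with $W_t \to W_\infty$ a.s.\ we get $W_\infty = 0$, i.e.\ $t^\epsilon Z_t \to 0$ a.s.

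The main obstacle I expect is the bookkeeping that casts the recursion \emph{exactly} into Robbins--Siegmund form: the extra $\epsilon/t$ generated by the weight ratio $c_{t+1}/c_t$ must be charged against $\alpha_t$ (producing the term $\alpha_t - \epsilon/t$) rather than against the multiplicative error $A_t$, and the noise term must end up with precisely the factor $(t+1)^\epsilon$; this is exactly why hypotheses (iii)--(iv) are stated in terms of $\alpha_t - \epsilon/t$ and $(t+1)^\epsilon B_t$. A secondary delicate point is the closing step, where summability of $\{\alpha_t' W_t\}$ against the divergent series $\sum \alpha_t'$ only yields $\liminf W_t = 0$ and must be paired with the a.s.\ convergence of $W_t$ delivered by Robbins--Siegmund. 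Integrability technicalities and the finitely many initial indices $t < T$ are routine and do not affect the conclusion.
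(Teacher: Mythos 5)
The paper itself does not prove Lemma~\ref{lem almost smg}: it is imported verbatim as Theorem~2 of \citet{karandikar2024convergence}, so the only meaningful comparison is with that source, and your argument --- reweighting to $W_t = t^\epsilon Z_t$, casting the recursion into Robbins--Siegmund form with $A_t' = (1+\epsilon/t)A_t$, $\alpha_t' = \alpha_t - \epsilon/t$, $B_t' = (t+1)^\epsilon B_t$, and then using $\sum_t \alpha_t' = \infty$ together with $\sum_t \alpha_t' W_t < \infty$ and a.s.\ convergence of $W_t$ to conclude $W_\infty = 0$ --- is correct and is essentially the standard proof of that theorem. One cosmetic point: the intermediate inequality $\E[W_{t+1}\mid\mathcal{F}_t] \le (1+\epsilon/t)(1+A_t-\alpha_t)W_t + (t+1)^\epsilon B_t$ as written presumes $1+A_t-\alpha_t \ge 0$ (otherwise multiplying the negative coefficient by $c_{t+1}/c_t \ge 1$ goes the wrong way), but in that case one bounds $(c_{t+1}/c_t)(1+A_t-\alpha_t)W_t \le (1+A_t-\alpha_t)W_t \le (1+A_t'-\alpha_t')W_t$ directly, so the final Robbins--Siegmund form, and hence your proof, still goes through unchanged.
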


\begin{lemma}
  \label{lem bound G}
  There exists some constant $C_\tref{lem bound G}$ such that for any $w, y$,
  \begin{align}
    \norm{G(w, y)} \leq C_\tref{lem bound G}(\norm{w} + 1), \, \norm{G(w, y)}_m \leq C_\tref{lem bound G}(\norm{w}_m + 1).
  \end{align}
\end{lemma}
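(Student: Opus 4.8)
The plan is to prove Lemma~\ref{lem bound G} as a direct consequence of the Lipschitz continuity in Assumption~\ref{assu Lipschitz} together with the boundedness of $H$ at the origin. Recall that $G(w, y) = H(w, y) - w$, so the bound on $\norm{G(w, y)}$ reduces to controlling $\norm{H(w, y)}$ and $\norm{w}$ separately.

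First I would write, using the triangle inequality and Assumption~\ref{assu Lipschitz},
\begin{align}
  \norm{H(w, y)} \leq \norm{H(w, y) - H(0, y)} + \norm{H(0, y)} \leq L_h \norm{w} + \sup_{y' \in \fY}\norm{H(0, y')},
\end{align}
where the last supremum is finite by the second part of Assumption~\ref{assu Lipschitz}. Denoting $C_0 \doteq \sup_{y' \in \fY}\norm{H(0, y')}$, this gives $\norm{H(w,y)} \leq L_h\norm{w} + C_0$. Then
\begin{align}
  \norm{G(w, y)} = \norm{H(w, y) - w} \leq \norm{H(w, y)} + \norm{w} \leq (L_h + 1)\norm{w} + C_0 \leq C_\tref{lem bound G}(\norm{w} + 1)
\end{align}
upon taking $C_\tref{lem bound G} \doteq \max\qty{L_h + 1, C_0}$. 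This establishes the first inequality.

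For the second inequality, I would invoke the norm equivalence between $\norm{\cdot}$ and $\norm{\cdot}_m$ recorded in Lemma~\ref{lem property of M}(iii), namely $l_{cm}\norm{x}_m \leq \norm{x} \leq u_{cm}\norm{x}_m$ for all $x$. Applying this to $x = G(w, y)$ and to $x = w$,
\begin{align}
  \norm{G(w, y)}_m \leq \frac{1}{l_{cm}}\norm{G(w, y)} \leq \frac{C_\tref{lem bound G}}{l_{cm}}(\norm{w} + 1) \leq \frac{C_\tref{lem bound G}}{l_{cm}}(u_{cm}\norm{w}_m + 1),
\end{align}
which is again of the form $C_\tref{lem bound G}'(\norm{w}_m + 1)$ for a suitably enlarged constant; one then simply redefines $C_\tref{lem bound G}$ to be the maximum of the constants appearing in both inequalities so that a single constant works for both. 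There is no real obstacle here — the only mild point of care is bookkeeping the constants so that the \emph{same} $C_\tref{lem bound G}$ serves both statements, and making sure $C_0 < \infty$ is genuinely used (it is the sole place the second half of Assumption~\ref{assu Lipschitz} enters). This lemma is purely a convenience/normalization step feeding into the later noise bounds (Lemmas~\ref{lem bound z1}--\ref{lem bound z3}).
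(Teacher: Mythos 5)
Your proposal is correct and matches the paper's own proof essentially verbatim: the paper also sets $w'=0$ in the Lipschitz condition to get $\norm{G(w,y)} \leq (L_h+1)\norm{w} + \sup_y\norm{H(0,y)}$ and then invokes norm equivalence for the $\norm{\cdot}_m$ version. Your extra care about using a single constant for both inequalities is fine and consistent with what the paper implicitly does.
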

\begin{proof}
  Setting $w'=0$ in~\eqref{eq G Lipschitz} yields
    $\norm{H(w, y) - H(0, y)} \leq L_h \norm{w}$.
  So we have
  \begin{align}
    \textstyle\norm{G(w, y)} \leq (L_h + 1) \norm{w} + \norm{H(0, y)} \leq (L_h + 1) \norm{w} + \sup_y \norm{H(0, y)}.
  \end{align}
  Assumption~\ref{assu Lipschitz} and norm equivalence then complete the proof.
\end{proof}

\begin{lemma}
  \label{lem bound gronwall}
  There exists some constant $C_\text{\ref{lem bound gronwall}}$ such that for any $m$ and any $t \in [t_m, t_{m+1}]$, 
  \begin{align}
    \norm{w_t - w_*} \leq& \textstyle C_\text{\ref{lem bound gronwall}}(\bar \alpha_m  + \norm{w_{t_m} - w_*}), \label{eq ttm gronwall}\\
    \norm{w_t - w_{t_m}} \leq& \bar \alpha_m C_\text{\ref{lem bound gronwall}}\qty(\norm{w_{t_m} - w_*} + 1) \label{eq ttm gronwall 2}.
  \end{align}
  Furthermore, for any $t \geq 0$, it holds that
  \begin{align}
    \norm{w_t - w_*} \leq \textstyle(\norm{w_0 - w_*} + C_\tref{lem bound gronwall}\sum_{\tau=0}^{t-1} \alpha_\tau ) \exp(C_\tref{lem bound gronwall} \sum_{\tau=0}^{t-1} \alpha_\tau) \label{eq ttm gronwall 3}.
  \end{align}
\end{lemma}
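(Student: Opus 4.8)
The plan is to prove the three inequalities in turn, each via a Gronwall-type argument on the iterates~\eqref{eq sa update}, using the Lipschitz bound on $G$ from Lemma~\ref{lem bound G} and the definition $\bar\alpha_m = \sum_{t=t_m}^{t_{m+1}-1}\alpha_t$.

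\textbf{Step 1: the global bound~\eqref{eq ttm gronwall 3}.} First I would write $w_{t+1} - w_* = w_t - w_* + \alpha_t G(w_t, Y_{t+1})$ and take norms, using Lemma~\ref{lem bound G} to get $\norm{w_{t+1} - w_*} \leq \norm{w_t - w_*} + \alpha_t C_\tref{lem bound G}(\norm{w_t} + 1)$. Since $\norm{w_t} \leq \norm{w_t - w_*} + \norm{w_*}$, this becomes $\norm{w_{t+1} - w_*} \leq (1 + C\alpha_t)\norm{w_t - w_*} + C'\alpha_t$ for suitable constants (absorbing $\norm{w_*}$). Unrolling and comparing with the discrete Gronwall inequality (Lemma~\ref{lem discrete_gronwall}), with $x_t = \norm{w_t - w_*}$, $C$ the constant $\norm{w_0 - w_*} + C_\tref{lem bound gronwall}\sum \alpha_\tau$, and $a_i = \alpha_i$, yields~\eqref{eq ttm gronwall 3}. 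A small care point: Lemma~\ref{lem discrete_gronwall} needs $C > x_0$ strictly, so I would take the constant slightly larger, or note the inequality is trivially true at $n = 0$ and apply Gronwall from there; relabeling $C_\tref{lem bound gronwall}$ to be the max of all constants appearing keeps the statement clean.

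\textbf{Step 2: the local bounds~\eqref{eq ttm gronwall} and~\eqref{eq ttm gronwall 2}.} For $t \in [t_m, t_{m+1}]$, telescoping gives $w_t - w_* = w_{t_m} - w_* + \sum_{\tau=t_m}^{t-1}\alpha_\tau G(w_\tau, Y_{\tau+1})$, so $\norm{w_t - w_*} \leq \norm{w_{t_m} - w_*} + C_\tref{lem bound G}\sum_{\tau=t_m}^{t-1}\alpha_\tau(\norm{w_\tau - w_*} + C'')$ with $C''$ absorbing $\norm{w_*} + 1$. Applying the discrete Gronwall inequality \emph{within the block} (indexing by $\tau - t_m$, with the running sum of $\alpha_\tau$ over the block bounded by $\bar\alpha_m$) gives $\norm{w_t - w_*} \leq (\norm{w_{t_m} - w_*} + C_\tref{lem bound G} C'' \bar\alpha_m)\exp(C_\tref{lem bound G}\bar\alpha_m)$. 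Since $\bar\alpha_m \to 0$ (indeed $\bar\alpha_m = \Theta(T_m)$ by Lemmas~\ref{lem lr bounds} and~\ref{lem lr bounds 2}, hence bounded by some constant over all $m$), the factor $\exp(C_\tref{lem bound G}\bar\alpha_m)$ is uniformly bounded, which gives~\eqref{eq ttm gronwall} with an appropriately enlarged $C_\text{\ref{lem bound gronwall}}$. For~\eqref{eq ttm gronwall 2}, I would write $\norm{w_t - w_{t_m}} \leq \sum_{\tau=t_m}^{t-1}\alpha_\tau\norm{G(w_\tau, Y_{\tau+1})} \leq C_\tref{lem bound G}\sum_{\tau=t_m}^{t-1}\alpha_\tau(\norm{w_\tau - w_*} + C'')$, then substitute the just-proved bound~\eqref{eq ttm gronwall} for $\norm{w_\tau - w_*}$ inside the sum, and bound $\sum_{\tau=t_m}^{t-1}\alpha_\tau \leq \bar\alpha_m$, which directly yields the $\bar\alpha_m C_\text{\ref{lem bound gronwall}}(\norm{w_{t_m} - w_*} + 1)$ form.

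\textbf{Main obstacle.} The only genuine subtlety is bookkeeping of constants: the three bounds must share a single constant $C_\text{\ref{lem bound gronwall}}$, so I would define it at the end as the maximum of the finitely many constants that arise (including $\sup_m \exp(C_\tref{lem bound G}\bar\alpha_m)$, finite since $\bar\alpha_m$ is bounded), rather than tracking a fresh constant at each step. There is no deep difficulty here — the lemma is a routine consequence of Lipschitzness plus Gronwall — but one must be slightly careful that the Gronwall constant $C$ in~\eqref{eq ttm gronwall 3} genuinely exceeds $x_0 = \norm{w_0 - w_*}$, which holds as soon as the sum $\sum_{\tau=0}^{t-1}\alpha_\tau$ term contributes anything, and is vacuous otherwise. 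I would also make explicit that all sums over $\tau$ are finite so no convergence issues arise in the finite-horizon statement.
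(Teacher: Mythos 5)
Your proposal is correct and follows essentially the same route as the paper's proof: telescope~\eqref{eq sa update}, apply the linear-growth bound on $G$ from Lemma~\ref{lem bound G}, invoke the discrete Gronwall inequality (globally for~\eqref{eq ttm gronwall 3}, blockwise for the other two), and use the uniform boundedness of $\bar\alpha_m$ to absorb the $\exp(C\bar\alpha_m)$ factor into a single constant. The only (immaterial) difference is that for~\eqref{eq ttm gronwall 2} you substitute the already-proved bound~\eqref{eq ttm gronwall} into the summed increments, whereas the paper runs a second Gronwall argument directly on $\norm{w_t - w_{t_m}}$; both yield the same estimate.
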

\begin{proof}
  \tb{Proof of~\eqref{eq ttm gronwall}.}
  From~\eqref{eq sa update} we have
  \begin{align}
    \label{eq basic sa bound}
    \norm{w_{t+1} - w_*} \leq& \textstyle\norm{w_t - w_*} + \alpha_t \norm{G(w_t, Y_{t+1})} \\
    \leq& \textstyle\norm{w_t - w_*} + \alpha_t C_\tref{lem bound G} \qty(\norm{w_t - w_*} + \norm{w_*} + 1).
  \end{align}
  For any $t \in [t_m, t_{m+1}]$,
  we have, by telescoping, that
  \begin{align}
    \norm{w_t - w_*} \leq& \textstyle\norm{w_{t_m} - w_*} + \sum_{\tau={t_m}}^{t-1}\alpha_\tau C_\tref{lem bound G}\qty(\norm{w_*} + 1) + \sum_{\tau=t_m}^{t-1} \alpha_\tau C_\tref{lem bound G} \norm{w_{\tau} - w_*} \\
    \leq& \textstyle\qty(\norm{w_{t_m} - w_*} + \bar \alpha_m C_\tref{lem bound G}\qty(\norm{w_*} + 1)) + \sum_{\tau=t_m}^{t-1} \alpha_\tau C_\tref{lem bound G} \norm{w_{\tau} - w_*} \\
    \leq& \textstyle\qty(\norm{w_{t_m} - w_*} + \bar \alpha_m C_\tref{lem bound G}\qty(\norm{w_*} + 1)) \exp(\sum_{\tau=t_m}^{t-1} \alpha_\tau C_\tref{lem bound G}),
  \end{align}
  where the last inequality is from the discrete Gronwall inequality.
  It then holds that
  \begin{align}
    \norm{w_t - w_*} \leq \qty(\norm{w_{t_m} - w_*} + \bar \alpha_m C_\tref{lem bound G}\qty(\norm{w_*} + 1)) \exp(\bar \alpha_m C_\tref{lem bound G}).
  \end{align}
  Since $\{T_m\}$ is monotonically decreasing to 0 and $T_m \leq \bar \alpha_m \leq 2T_m$ (Lemma~\ref{lem lr bounds 2}),
  we have $\bar \alpha_m \leq 2T_1$ for all $m \geq 1$.
  Therefore,
  we can take $C_\text{\ref{lem bound gronwall}} = C_\tref{lem bound G}(\norm{w_*} + 1)\exp(2T_1 C_\tref{lem bound G})$ to complete the proof.
  
  \noindent \tb{Proof of~\eqref{eq ttm gronwall 2}.}
  Similarly, for any $t \in [t_m ,t_{m+1}]$, it holds that
  \begin{align}
    \norm{w_{t+1} - w_{t_m}} \leq \norm{w_t - w_{t_m}} + \alpha_t C_\tref{lem bound G} \qty(\norm{w_t - w_{t_m}} + \norm{w_{t_m} - w_*} + \norm{w_*} + 1).
  \end{align}
  Telescoping yields
  \begin{align}
    \norm{w_t - w_{t_m}} \leq&\textstyle \sum_{\tau=t_m}^{t-1} \alpha_\tau C_\tref{lem bound G} \qty(\norm{w_{t_m} - w_*} + \norm{w_*} + 1) + \sum_{\tau = t_m}^{t-1} \alpha_\tau C_\tref{lem bound G} \norm{w_\tau - w_{t_m}} \\
    \leq&\textstyle \bar \alpha_m C_\tref{lem bound G} \qty(\norm{w_{t_m} - w_*} + \norm{w_*} + 1) + \sum_{\tau = t_m}^{t-1} \alpha_\tau C_\tref{lem bound G} \norm{w_\tau - w_{t_m}} \\
    \leq&\textstyle \bar \alpha_m C_\tref{lem bound G} \qty(\norm{w_{t_m} - w_*} + \norm{w_*} + 1) \exp(\bar \alpha_m C_\tref{lem bound G}), 
  \end{align}
  where the last inequality is again from the discrete Gronwall inequality.
  Take the chosen $C_\text{\ref{lem bound gronwall}}$ from above completes the proof. \\
  \tb{Proof of~\eqref{eq ttm gronwall 3}.}
  For any $t \geq 0$,
  Telescoping~\eqref{eq basic sa bound} from $t = 0$ yields
  \begin{align}
    \textstyle\norm{w_t - w_*} \leq \norm{w_0 - w_*} + \sum_{\tau =0}^{t-1} \alpha_\tau C_{\tref{lem bound G}} (\norm{w_\tau - w_*} + \norm{w_*} + 1).
  \end{align}
  Similarly, the discrete Gronwall inequality yields that for any $t$,
  \begin{align}
    \textstyle\norm{w_t - w_*} \leq (\norm{w_0 - w_*} + \sum_{\tau=0}^{t-1} \alpha_\tau C_\tref{lem bound G} (\norm{w_*} + 1)) \exp(C_\tref{lem bound G} \sum_{\tau=0}^{t-1} \alpha_\tau),
  \end{align}
  which completes the proof.
\end{proof}

\subsection{Proof of Lemma~\ref{lem lr bounds}}
\label{sec proof lem lr bounds}
\begin{proof}
  \tb{Case 1: $\nu < 1$ in Assumption~\ref{assu lr}.}
  In this case,
  we have $T_m = \Theta(m^{-\nu_2})$.
  From~\eqref{eq bar alpha m lower bound}, we have for any $m$,
  \begin{align}
    \textstyle \sum_{t=0}^{t_{m+1} - 1} \alpha_t \geq \sum_{i=0}^m T_i. 
  \end{align}
  Using integral approximation, we get
    $\sum_{i=0}^m T_i = \Theta\qty((m+1)^{1-\nu_2})$ and
     $\sum_{t=0}^{t_{m+1} - 1} \alpha_t = \Theta\qty(t_{m+1}^{1- \nu})$.
  This means $m^{1-\nu_2} = \fO(t_m^{1-\nu})$, i.e., $t_m^{-(1-\nu)} = \fO(m^{-(1-\nu_2)})$.
  For any $t \geq t_m$, we have
  \begin{align}
    \textstyle\alpha_t \leq \alpha_{t_m} = \fO\qty(t_m^{-\nu}) = \fO\qty({m^{-\frac{1-\nu_2}{1-\nu} \nu}}).
  \end{align}
  We recall that $T_m^2 = \Theta(m^{-2\nu_2})$.
  So for $\alpha_{t_m} \leq T_{m}^2$ to hold for sufficiently large $m$,
  it is sufficient to have
    $\frac{\nu(1-{\nu_2})}{1 - \nu} > 2{\nu_2}$,
  which is equivalent to ${\nu_2} < \frac{\nu}{2 - \nu}$ in~\eqref{eq def eta}.
  
  \tb{Case 2: $\nu = 1$ in Assumption~\ref{assu lr}.} In this case, we have $T_m = \Theta\qty(\frac{\ln^{\nu_1}m}{m})$.
  A similar integral approximation yields
    $\sum_{i=0}^m T_i = \Theta\qty(\ln^{\nu_1+1}(m+1))$ and $\sum_{t=0}^{t_{m+1} - 1} \alpha_t = \Theta\qty(\ln t_{m+1})$.
  Then we have $\ln^{\nu_1+1}(m) \leq C_{\tref{lem lr bounds},1}\ln t_{m}$ for some constant $C_{\tref{lem lr bounds}, 1}$ and sufficiently large $m$.
  It follows that
  \begin{align}
    \textstyle\alpha_{t_{m}} = \fO\qty(t_m^{-1}) = \fO\qty(\qty(\exp(\qty(\ln^{\nu_1 + 1} m) / C_{\tref{lem lr bounds},1}))^{-1}) = \fO\qty({m^{-\qty(\ln^{\nu_1}m )/ C_{\tref{lem lr bounds}, 1}}}).
  \end{align}
  For $\alpha_{t_m} \leq T_{m}^2$ to hold for sufficiently large $m$,
  it is sufficient to have 
  \begin{align}
    \textstyle\frac{1}{m^{(\ln^{\nu_1}m) / C_{\tref{lem lr bounds}, 1}}} \leq \frac{\ln^{2\nu_1}m}{m^2}.
  \end{align}
  This is true because $(\ln^{\nu_1}m)/C_{\tref{lem lr bounds},1} > 2$ holds for sufficiently large $m$.

  \tb{Case 3: Assumption~\ref{assu lr2}.} In this case, we have $T_m = \Theta(\frac{1}{m})$.
  Using integral approximation, we get $\sum_{i=0}^m T_i = \Theta(\ln (m+1))$ and $\sum_{t=0}^{t_{m+1} - 1} \alpha_t = \Theta(\ln^{1-\nu} t_{m+1})$.
  Then we have $\ln m \leq C_{\tref{lem lr bounds},2} \ln^{1-\nu} t_m$ for some constant $C_{\tref{lem lr bounds}, 2}$ and sufficiently large $m$. It follows that
  \begin{align}
    \textstyle\alpha_{t_m} =&\textstyle \fO\qty(\frac{1}{t_m \ln^\nu t_m}) = \fO\qty( \qty(\exp(\qty(\frac{\ln m}{C_{\tref{lem lr bounds},2}})^{\frac{1}{1-\nu}}) \qty(\frac{\ln m}{C_{\tref{lem lr bounds},2}})^{\frac{\nu}{1-\nu}})^{-1} ) \\
    =&\textstyle \fO\qty( \qty(m^{\qty(\ln^{\nu/(1-\nu)}m)  C_{\tref{lem lr bounds},2}^{-1/(1-\nu)}} \qty(\frac{\ln m}{C_{\tref{lem lr bounds},2}})^{\frac{\nu}{1-\nu}})^{-1}).
  \end{align}
  Since $\qty(\ln^{\nu/(1-\nu)}m) C_{\tref{lem lr bounds}, 2}^{-1/(1-\nu)} > 2$ holds for sufficiently large $m$,
  we conclude that $\alpha_{t_m} \leq T_m^2$ holds for sufficiently large $m$,
  which completes the proof.
\end{proof}

\subsection{Proof of Lemma \ref{lem lr bounds 2}}
\label{sec proof lem lr bounds 2}
\begin{proof}
  By~\eqref{eq def tm}, we must have $t_{m+1} \geq t_m + 1$.
  Let $m$ be sufficiently large such that $C_{\tref{lem lr bounds}} T_m < 1$.
  If $t_{m+1} = t_m + 1$,
  we have
    $\bar \alpha_m = \alpha_{t_m} \leq C_\tref{lem lr bounds}T_m^2 < 2T_m$.
  If $t_{m+1} \geq t_m + 2$,
  we have by the $\min$ operator in~\eqref{eq def tm} that
    $\bar \alpha_m - \alpha_{t_{m+1} - 2} < T_m$.
  This means that $\bar \alpha_m \leq T_m + \alpha_{t_{m+1} - 2} \leq T_m + \alpha_{t_m} \leq T_m + C_\tref{lem lr bounds}T_m^2 \leq 2T_m$.
  This completes the proof.
\end{proof}

\subsection{Proof of Lemma~\ref{lem bound leading term}}
\label{sec proof lem bound leading term}
\begin{proof}
  \begin{align}
    &\textstyle \frac{1}{2} \indot{\nabla \norm{w_{t_m} - w_*}_m^2}{g(w_{t_m})} \\
    =&\textstyle  \frac{1}{2}\indot{\nabla \norm{w_{t_m} - w_*}_m^2}{h(w_{t_m}) - w_*} + \frac{1}{2}\indot{\nabla \norm{w_{t_m} - w_*}_m^2}{w_* - w_{t_m}} \\
    \leq&\textstyle  \norm{w_{t_m} - w_*}_m \norm{h(w_{t_m}) - w_*}_m + \frac{1}{2}\indot{\nabla \norm{w_{t_m} - w_*}_m^2}{w_* - w_{t_m}} \explain{Lemma~\ref{lem property of M}}\\
    =&\textstyle \norm{w_{t_m} - w_*}_m \norm{h(w_{t_m}) - w_*}_m  - \frac{1}{2}\indot{\nabla \norm{w_{t_m} - w_*}_m^2}{w_{t_m} - w_*} \\
    \leq&\textstyle  \norm{w_{t_m} - w_*}_m \norm{h(w_{t_m}) - w_*}_m  - \norm{w_{t_m} - w_*}_m^2 \explain{Lemma~\ref{lem property of M}} \\
    \leq&\textstyle  \norm{w_{t_m} - w_*}_m \frac{u_{cm}}{l_{cm}}\kappa\norm{w_{t_m} - w_*}_m  - \norm{w_{t_m} - w_*}_m^2 \explain{Lemma~\ref{lem property of M}} \\
    =&\textstyle -(1 - \frac{u_{cm}}{l_{cm}}\kappa) \norm{w_{t_m} - w_*}_m^2.
  \end{align}
  which completes the proof.
\end{proof}

\subsection{Proof of Lemma~\ref{lem bound z1}}
\label{sec proof lem bound z1}
\begin{proof}
\begin{align}
  \textstyle\norm{z_{1, m}} \leq&\textstyle \sum_{t=t_m}^{t_{m+1} - 1} \alpha_t (L_h + 1) \norm{w_t - w_{t_m}} \\
  \leq&\textstyle \sum_{t=t_m}^{t_{m+1} - 1} \alpha_t (L_h + 1) \bar \alpha_m C_\text{\ref{lem bound gronwall}}\qty(\norm{w_{t_m} - w_*} + 1) \explain{Lemma~\ref{lem bound gronwall}} \\
  =&\textstyle\bar \alpha_m^2  (L_h + 1) C_\text{\ref{lem bound gronwall}}\qty(\norm{w_{t_m} - w_*} + 1) \\
  \leq &\textstyle 4 T_m^2  (L_h + 1) C_\text{\ref{lem bound gronwall}}\qty(\norm{w_{t_m} - w_*} + 1) \explain{Lemma~\ref{lem lr bounds 2}}.
\end{align}
The equivalence between norms then completes the proof.
\end{proof}
\subsection{Proof of Lemma~\ref{lem bound z2}}
\label{sec proof lem bound z2}
\begin{proof}
  \begin{align}
    \textstyle\norm{z_{2, m}} \leq&\textstyle \sum_{t=t_m}^{t_{m+1}-1} \alpha_t \norm{G(w_{t_m}, Y_{t+1}) - \E_m\qty[G(w_{t_m}, Y_{t+1})]} \\
    \leq&\textstyle \sum_{t=t_m}^{t_{m+1}-1} \alpha_t \norm{G(w_{t_m}, Y_{t+1}) - \E_m\qty[G(w_{t_m}, Y_{t+1})]} \\
    \leq&\textstyle \sum_{t=t_m}^{t_{m+1}-1} \alpha_t 2 \sup_y \norm{G(w_{t_m}, y)} \\
    \leq&\textstyle \sum_{t=t_m}^{t_{m+1}-1} \alpha_t 2 C_\tref{lem bound G} \qty(\norm{w_{t_m}} + 1) \explain{Lemma~\ref{lem bound G}} \\
    \leq&\textstyle \bar \alpha_m 2 C_\tref{lem bound G} \qty(\norm{w_{t_m} - w_*} + \norm{w_*} + 1) \\
    \leq&\textstyle 2 T_m 2 C_\tref{lem bound G} \qty(\norm{w_{t_m} - w_*} + \norm{w_*} + 1) \explain{Lemma~\ref{lem lr bounds 2}}
  \end{align}
  The equivalence between norms then completes the proof.
\end{proof}
\subsection{Proof of Lemma~\ref{lem bound z3}}
\label{sec proof lem bound z3}
\begin{proof}
  \begin{align}
    \textstyle\norm{z_{3, m}} \leq&\textstyle \sum_{t=t_m}^{t_{m+1}-1} \alpha_t \norm{\E_m\qty[G(w_{t_m}, Y_{t+1}) - g(w_{t_m})]} \\
    \leq&\textstyle \sum_{t=t_m}^{t_{m+1}-1} \alpha_t \norm{\int_{y\in \fY} \qty(P^{t+1-t_m}(Y_{t_m}, y) - d_\fY(y)) G(w_{t_m}, y) \dd y} \\
    \leq&\textstyle \sum_{t=t_m}^{t_{m+1}-1} \alpha_t \int_{y\in \fY} \abs{\qty(P^{t+1-t_m}(Y_{t_m}, y) - d_\fY(y))} \norm{G(w_{t_m}, y)} \dd y \\
    \leq&\textstyle \sum_{t=t_m}^{t_{m+1}-1} \alpha_t \sup_y \norm{G(w_{t_m}, y)} \int_{y\in \fY} \abs{\qty(P^{t+1-t_m}(Y_{t_m}, y) - d_\fY(y))} \dd y \\
    \leq&\textstyle \sum_{t=t_m}^{t_{m+1}-1} \alpha_t C_\tref{lem bound G} \qty(\norm{w_{t_m} - w_*} + \norm{w_*} + 1) C_\aref{assu markov chain}\varrho^{t - t_m} \\
    \leq&\textstyle \sum_{t=t_m}^{t_{m+1}-1} C_\tref{lem lr bounds} T_m^2 C_\tref{lem bound G} \qty(\norm{w_{t_m} - w_*} + \norm{w_*} + 1) C_\aref{assu markov chain}\varrho^{t +1 - t_m} \\
    \leq&\textstyle  \frac{T_m^2 C_\tref{lem lr bounds} C_\tref{lem bound G} \qty(\norm{w_{t_m} - w_*} + \norm{w_*} + 1) C_\aref{assu markov chain} \varrho}{1 - \varrho}.
  \end{align}
  The equivalence between norms then completes the proof.
\end{proof}
\subsection{Proof of Lemma~\ref{lem bound all}}
\label{sec proof lem bound all}
\begin{proof}
  We first bound the last two terms in~\eqref{eq main smooth ineq}.
  For the first of the two, we have
  \begin{align}
    &\textstyle \indot{\nabla \norm{w_{t_m} - w_*}_m^2}{z_{1, m} + z_{3, m}} \\
    \leq&\textstyle 2\norm{w_{t_m} - w_*}_m \norm{z_{1, m} + z_{3, m}}_m \explain{Lemma~\ref{lem property of M}} \\
    \leq&\textstyle 2\norm{w_{t_m} - w_*}_m (\norm{z_{1, m}}_m + \norm{z_{3, m}}_m) \\
    \leq&\textstyle 2\norm{w_{t_m} - w_*}_m T_m^2 (C_\tref{lem bound z1} + C_\tref{lem bound z3}) (\norm{w_{t_m} - w_*}_m + 1) \\
    \leq&\textstyle 2T_m^2 (C_\tref{lem bound z1} + C_\tref{lem bound z3}) (\norm{w_{t_m} - w_*}_m^2 + \norm{w_{t_m} - w_*}_m) \\
    \leq&\textstyle 2T_m^2 (C_\tref{lem bound z1} + C_\tref{lem bound z3}) (\norm{w_{t_m} - w_*}_m^2 + \frac{1}{2}(\norm{w_{t_m} - w_*}_m^2+1)) \\
    \leq&\textstyle 3T_m^2 (C_\tref{lem bound z1} + C_\tref{lem bound z3}) (\norm{w_{t_m} - w_*}_m^2 + 1).
  \end{align}
  For the second of the two,
  we notice from Lemma~\ref{lem bound G} that
  \begin{align}
    \textstyle\norm{g(w_{t_m})}_m \leq C_\tref{lem bound G}(\norm{w_{t_m}}_m + 1).
  \end{align}
  With the fact that $\bar \alpha_m \leq 2T_m$, it then follows easily that 
  \begin{align}
    \textstyle\norm{\bar \alpha_m g(w_{t_m})}_m
    \leq 2C_\tref{lem bound G}T_m (\norm{w_{t_m} - w_*}_m + \norm{w_*}_m +1)
  \end{align}
  The equivalence between norms then confirms that 
  \begin{align}
    &\textstyle\norm{\bar \alpha_m g(w_{t_m}) + z_{1, m} + z_{2, m} + z_{3, m}}_s^2 \\
    \leq&\textstyle 4(\norm{\bar \alpha_m g(w_{t_m})}_s^2+\norm{z_{1, m}}_s^2+\norm{z_{2, m}}_s^2+\norm{z_{3, m}}_s^2) \\
    \leq&\textstyle 4\left(u_{cm}/l_{cs}\right)^2(\norm{\bar \alpha_m g(w_{t_m})}_m^2+\norm{z_{1, m}}_m^2+\norm{z_{2, m}}_m^2+\norm{z_{3, m}}_m^2) \\
    \leq&\textstyle 4\left(u_{cm}/l_{cs}\right)^2[4T_m^2C_\tref{lem bound G}^2(\norm{w_{t_m} - w_*}_m + \norm{w_*}_m + 1)^2 \\
  &\textstyle\quad + T_m^2(1 + T_m^2)(C_\tref{lem bound z1}^2 + C_\tref{lem bound z2}^2 + C_\tref{lem bound z3}^2)(\norm{w_{t_m} - w_*}_m^2 + 1)] \\
    \leq&\textstyle C_{\tref{lem bound all},1}T_m^2(\norm{w_{t_m} - w_*}^2_m + 1),
  \end{align}
  where $\textstyle C_{\tref{lem bound all},1} \doteq 4\left(u_{cm}/l_{cs}\right)^2\max\{8C_\tref{lem bound G}^2(\norm{w_*}_m + 1)^2, (1 + T_0^2)(C_\tref{lem bound z1}^2 + C_\tref{lem bound z2}^2 + C_\tref{lem bound z3}^2)\}$.
  Take $C_\tref{lem bound all} \doteq \max\{3(C_\tref{lem bound z1} + C_\tref{lem bound z3}), C_{\tref{lem bound all},1}\}$, we are now ready to refine~\eqref{eq main smooth ineq} as
  \begin{align}
    &\textstyle\norm{w_{t_{m+1}} - w_*}_m^2 \\
    \leq&\textstyle (1 - 2\bar \alpha_m \kappa')\norm{w_{t_m} - w_*}_m^2 + \indot{\nabla \norm{w_{t_m} - w_*}_m^2}{z_{2,m}} + C_\tref{lem bound all}T_m^2(\norm{w_{t_m} - w_*}^2_m + 1)  \\
    \leq&\textstyle (1 - 2 T_m \kappa')\norm{w_{t_m} - w_*}_m^2 + \indot{\nabla \norm{w_{t_m} - w_*}_m^2}{z_{2,m}} + C_\tref{lem bound all}T_m^2(\norm{w_{t_m} - w_*}^2_m + 1).
  \end{align}
  Let $m_0 \doteq \min\{m: T_m \leq \min\{\kappa'/C_\tref{lem bound all}, 1\}\}$. Then for all $m \geq m_0$, we have
  \begin{align}
    \textstyle\norm{w_{t_{m+1}} - w_*}_m^2 \leq (1 - T_m \kappa')\norm{w_{t_m} - w_*}_m^2 + \indot{\nabla \norm{w_{t_m} - w_*}_m^2}{z_{2,m}} + C_\tref{lem bound all}T_m^2,
  \end{align}
  which completes the proof.
  
\end{proof}

\subsection{Proof of Lemma~\ref{lem as rate skeleton sa}}
\label{sec proof lem as rate skeleton sa}
\begin{proof}
We recall that $\E_m\qty[z_{2, m}] = 0$.
Then, taking conditional expectations on both sides of Lemma~\ref{lem bound all}
yields that for $m \geq m_0$, 
\begin{align}
\textstyle\E_m\qty[\norm{w_{t_{m+1}} - w_*}_m^2] \leq (1 - T_m\kappa')\norm{w_{t_m} - w_*}_m^2 + C_\tref{lem bound all}T_m^2.
\end{align}
We now proceed via applying Lemma~\ref{lem almost smg}.
In particular,
we identify
\begin{align}
    \textstyle Z_m \textstyle\leftrightarrow \e{m+m_0}, 
     A_m \textstyle\leftrightarrow 0, 
     B_m \textstyle\leftrightarrow C_\tref{lem bound all}T_{m+m_0}^2,
     \alpha_m \textstyle\leftrightarrow T_{m+m_0}\kappa'.
\end{align}
Then Condition (i) holds.
The definition of $T_m$ in~\eqref{eq def big tm} trivially confirms that Condition (ii) holds.
Let the $\epsilon$ in Lemma~\ref{lem almost smg} be any number in $(0, 2\nu_2 - 1)$.

When $\nu < 1$, for Condition (iii), 
we have $T_m = \frac{C_\alpha}{(m+3)^{\nu_2}}$
with $\nu_2 < 1$.
So when $m$ is sufficiently large,
it holds that $T_{m+m_0} \kappa' \geq \frac{\epsilon}{m}$.
Condition (iii) holds.
For Condition (iv),
we have from $\epsilon < 2\nu_2 - 1$ that
\begin{align}
\textstyle\sum_{m=1}^\infty (m+1)^\epsilon B_m = C_\alpha^2C_\tref{lem bound all}\sum_{m=1}^\infty \frac{(m+1)^\epsilon}{(m + m_0+3)^{2\nu_2}} < \infty,
\end{align}
and from $\nu_2 < 1$ that
\begin{align}
\textstyle\sum_{m=1}^\infty (\alpha_m - \frac{\epsilon}{m}) = \sum_{m=1}^\infty \qty(\frac{C_\alpha\kappa'}{(m+m_0+3)^{\nu_2}} - \frac{\epsilon}{m}) = \infty.
\end{align}
So Condition (iv) holds.

When $\nu=1$,
we have $\nu_2 = 1$ and $\epsilon \in (0, 1)$.
For Condition (iii),
we have $T_m = \frac{C_\alpha \ln^{\nu_1} (m+3)}{m+3}$,
implying that Condition (iii) holds.
For Condition (iv),
we have from $\epsilon < 1$ that
\begin{align}
\textstyle\sum_{m=1}^\infty (m+1)^\epsilon B_m = C_\alpha^2C_\tref{lem bound all}\sum_{m=1}^\infty \frac{(m+1)^\epsilon \ln^{2\nu_1} (m+m_0+3)}{(m + m_0+3)^2} < \infty,
\end{align}
and
\begin{align}
\textstyle\sum_{m=1}^\infty (\alpha_m - \frac{\epsilon}{m}) = \sum_{m=1}^\infty \qty(\frac{C_\alpha \ln^{\nu_1}(m+m_0+3)\kappa'}{m+m_0+3} - \frac{\epsilon}{m}) = \infty.
\end{align}
So Condition (iv) holds.
Having verified all conditions of Lemma~\ref{lem almost smg} for all setup in Assumption~\ref{assu lr},
invoking Lemma~\ref{lem almost smg} and using $\lim_{m\to\infty} \frac{(m+m_0)^\epsilon}{m^\epsilon} = 1$ then completes the proof. 
\end{proof}

\subsection{Proof of Theorem~\ref{thm lil}}
\label{sec proof lem connect skeleton sa and sa}
\begin{proof}
  Having established the convergence rate of~\eqref{eq skeleton sa} in Lemma~\ref{lem as rate skeleton sa},
  we now complete the proof by mapping the convergence rate back to~\eqref{eq sa update}.
  We recall that we proceed under Assumption~\ref{assu lr}
  and recall the $m_0$ in Lemma~\ref{lem lr bounds 2}.
  Then for all $m \geq m_0$, it holds that
  \begin{align}
    \label{eq lem connect big theta}
    \textstyle\sum_{i=m_0}^m T_i \leq \sum_{i=m_0}^{m} \bar \alpha_i = \sum_{t=t_{m_0}}^{t_{m+1}-1} \alpha_t \leq \sum_{i=m_0}^m T_i + C_\tref{lem lr bounds} T_i^2 \leq 2\sum_{i=m_0}^m T_i.
  \end{align}
  From Lemma~\ref{lem bound gronwall}, 
  it is easy to see for any $t \in [t_m, t_{m+1}]$,
  \begin{align}
    \label{eq connection basic relation}
    \textstyle\norm{w_t - w_*}^2 \leq& 2C_\tref{lem bound gronwall}(\bar \alpha_m^2 + \norm{w_{t_m} - w_*}^2) \leq 8C_\tref{lem bound gronwall}(T_m^2 + \norm{w_{t_m} - w_*}^2).
  \end{align}
  For any $\epsilon \in (2\nu_2 - 1)$, we have
  \begin{align}
    \textstyle m^\epsilon \sup_{t\in [t_m, t_{m+1}]} \norm{w_t - w_*}^2 \leq 8C_\tref{lem bound gronwall}(m^\epsilon T_m^2 + m^\epsilon \norm{w_{t_m} - w_*}^2).
  \end{align}
  The definition of $T_m$ in~\eqref{eq def big tm} and Lemma~\ref{lem as rate skeleton sa} then confirms that
  \begin{align}
    \textstyle\lim_{m\to\infty} m^\epsilon \sup_{t\in [t_m, t_{m+1}]} \norm{w_t - w_*}^2=0.
  \end{align}
  For $\nu < 1$,
  using~\eqref{eq lem connect big theta} and integral approximation (cf. Section~\ref{sec proof lem lr bounds}),
  it is easy to get $m^{1-\nu_2} = \Theta(t_m^{1-\nu})$.
  We then have
  \begin{align}
    \textstyle0 =&\textstyle \lim_{m\to\infty} (m-1)^\epsilon \sup_{t\in [t_{m-1}, t_m]} \norm{w_t - w_*}^2 = \lim_{m\to\infty} m^\epsilon \sup_{t\in [t_{m-1}, t_m]} \norm{w_t - w_*}^2 \\
    =&\textstyle\lim_{m\to\infty} t_m^{\frac{1-\nu}{1-\nu_2} \epsilon} \sup_{t\in [t_{m-1}, t_m]} \norm{w_t - w_*}^2 \\
    \geq&\textstyle\lim_{m\to\infty} \sup_{t\in [t_{m-1}, t_m]} t^{\frac{1-\nu}{1-\nu_2} \epsilon} \norm{w_t - w_*}^2 = \lim_{t\to\infty} t^{\frac{1-\nu}{1-\nu_2} \epsilon} \norm{w_t - w_*}^2.
  \end{align} 
  We now optimize the choice of ${\nu_2} \in (1/2, \nu/(2-\nu))$ and $\epsilon \in (0, 2{\nu_2} - 1)$.
  Notice that
    $\sup_{{\nu_2} \in (1/2, \nu/(2-\nu)), \epsilon \in (0, 2{\nu_2} - 1)} {\frac{1-\nu}{1 - {\nu_2}}\epsilon} = \frac{3}{2}\nu - 1$.
  We then conclude that for any $\zeta < \frac{3}{2}\nu - 1$,
  \begin{align}
    \textstyle\lim_{t\to\infty} t^\zeta \norm{w_t - w_*}^2 = 0.
  \end{align}
  For $\nu=1$, using~\eqref{eq lem connect big theta} and integral approximation (cf. Section~\ref{sec proof lem lr bounds}),
  we get
  \begin{align}
      &\textstyle C_\alpha \ln^{1+\nu_1}(m+3) + C_\tref{lem lr bounds}C_\alpha^2 \sum_{i=m_0}^\infty \frac{\ln^{2\nu_1}(i+3)}{(i+3)^2} \geq C_\alpha\qty(\ln (t_{m+1} + 2) - \ln(t_{m_0} + 3)), \\
      &\textstyle\ln(m+3) \geq \qty(\ln (t_{m+1} + 2) - \ln(t_{m_0} + 3) - C_\tref{lem lr bounds}C_\alpha \sum_{i=m_0}^\infty \frac{\ln^{2\nu_1}(i+3)}{(i+3)^2})^{\frac{1}{1+\nu_1}}.
  \end{align}
  To simplify this, we should clarify an inequality first.
  \begin{lemma}
      For any $\nu \in (0,1)$ and $a, b>0$, we have \(a^\nu + b^\nu \geq (a+b)^\nu\).
  \end{lemma}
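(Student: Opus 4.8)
The plan is to exploit the positive homogeneity (of degree $\nu$) of both sides and thereby collapse the two‑variable inequality to a one‑variable one. Since $a,b>0$ we have $a+b>0$, so I would divide through by $(a+b)^\nu$ and set $s \doteq \tfrac{a}{a+b}\in(0,1)$, noting $\tfrac{b}{a+b}=1-s$; the claim then becomes exactly $s^\nu + (1-s)^\nu \ge 1$ for all $s\in(0,1)$. Multiplying back by $(a+b)^\nu$ at the end recovers the stated form, so it suffices to prove this reduced inequality.

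For the reduced inequality the key elementary fact is that $x^\nu \ge x$ whenever $x\in(0,1]$ and $\nu\in(0,1)$. I would justify this by $\ln(x^\nu)=\nu\ln x \ge \ln x$ (using $\ln x\le 0$ and $\nu\le 1$) and the monotonicity of $\exp$, or equivalently by writing $x^\nu = x\cdot x^{\nu-1}$ with $x^{\nu-1}\ge 1$. Applying it to $x=s$ and to $x=1-s$ and adding gives $s^\nu+(1-s)^\nu \ge s+(1-s)=1$, which is precisely what is needed.

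There is no genuine obstacle here; the only point meriting a line of care is the bound $x^\nu\ge x$ on $(0,1]$, which is where the hypothesis $\nu<1$ enters. As an alternative route one could simply invoke that $t\mapsto t^\nu$ is concave on $[0,\infty)$ with value $0$ at the origin and hence subadditive, i.e. $(a+b)^\nu\le a^\nu+b^\nu$ directly; but the homogenization argument above is the most self‑contained and I would present that one.
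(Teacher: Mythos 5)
Your proposal is correct. It does, however, take a slightly different route from the paper's. The paper also reduces to one variable, but by normalizing with $b$: it sets $f(x) = 1 + x^{\nu} - (1+x)^{\nu}$ (with $x = a/b$ implicitly) and shows $f(x) > f(0) = 0$ by a derivative argument, using $f'(x) = \nu x^{\nu-1} - \nu(1+x)^{\nu-1} > 0$ since $\nu - 1 < 0$. You instead normalize by $a+b$, reducing the claim to $s^{\nu} + (1-s)^{\nu} \geq 1$ for $s \in (0,1)$, and then apply the pointwise bound $x^{\nu} \geq x$ on $(0,1]$, which follows from $\nu \ln x \geq \ln x$ when $\ln x \leq 0$. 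Your argument avoids differentiating the target function altogether and is in that sense more elementary; the paper's monotonicity argument is equally short and gives the strict inequality $f(x) > 0$ for $x > 0$, though only the weak inequality is needed where the lemma is used. Your closing remark that concavity of $t \mapsto t^{\nu}$ with $f(0)=0$ yields subadditivity directly is also a valid one-line alternative. No gaps.
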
 
  \begin{proof}
      First, notice that to conclude, we only need to prove $f(x)=1+x^\nu - (1+x)^\nu >0, \forall x>0$. 
      We have
          $f'(x)=\nu x^{\nu -1} - \nu (1+x)^{\nu -1}$.
      It then follows from $\nu - 1 < 0$ that $f'(x) > 0, \forall x \geq 0$.
      Thus, we obtain \(f(x) > f(0)=0, \forall x>0\).
  \end{proof}
  We now identify $b = \ln(t_{m_0} + 3) + C_\tref{lem lr bounds}C_\alpha \sum_{i=m_0}^\infty \frac{\ln^{2\nu_1}(i+3)}{(i+3)^2}$ and $a= \ln(t_{m+1} + 2) - b$.
  When $m$ is sufficiently large, $a$ is positive.
  So we get
  \begin{align}
      \textstyle \ln(m+3) \geq \qty(\ln (t_{m+1} + 2))^{\frac{1}{1+\nu_1}} - \qty(\ln(t_{m_0} + 3) + C_\tref{lem lr bounds}C_\alpha \sum_{i=m_0}^\infty \frac{\ln^{2\nu_1}(i+3)}{(i+3)^2})^{\frac{1}{1+\nu_1}}.
  \end{align}
Define $C_{Th\tref{thm lil}} \doteq \exp(-\zeta \qty(\ln(t_{m_0} + 3) + C_\tref{lem lr bounds}C_\alpha \sum_{i=m_0}^\infty \frac{\ln^{2\nu_1}(i+3)}{(i+3)^2})^{\frac{1}{1+\nu_1}})$,
  then we have
  \begin{align}
    \textstyle0 =&\textstyle \lim_{m\to\infty} (m+2)^\zeta \sup_{t\in [t_{m-1}, t_m]} \norm{w_t - w_*}^2 \\
    \geq&\textstyle \lim_{m\to\infty} C_{{Th}\tref{thm lil}}\exp(\zeta \ln^{1/(1+\nu_1)}t_m) \sup_{t\in [t_{m-1}, t_m]} \norm{w_t - w_*}^2 \\
    \geq&\textstyle C_{{Th}\tref{thm lil}} \lim_{t\to\infty} \exp(\zeta \ln^{1/(1+\nu_1)}t) \norm{w_t - w_*}^2,
  \end{align}
  which completes the proof. 

\end{proof}

\subsection{Proof of Lemma~\ref{lem inductive basis}}
\label{proof of lem inductive basis}
\begin{proof}
    For $m\ge m_0$, where $m_0$ is a sufficiently large constant, we have
    \begin{align}
        \textstyle\norm{w_{t_{m+1}} - w_*}_m \leq &\textstyle\norm{w_{t_m} - w_*}_m + \bar\alpha_m \norm{g(w_{t_m})}_m + \norm{z_m}_m \\
        \leq &\textstyle\norm{w_{t_m} - w_*}_m + \bar\alpha_m C_\tref{lem bound G} (\norm{w_{t_m} - w_*}_m + \norm{w_*}_m + 1) \\
        &\textstyle + (T_m^2 C_\tref{lem bound z1} + T_m C_\tref{lem bound z2} + T_m^2 C_\tref{lem bound z3})\left( \norm{w_{t_m} - w_*}_m +1\right) \\
        \leq &\textstyle(1 + 2T_m C_\tref{lem bound G} + T_m^2 C_\tref{lem bound z1} + T_m C_\tref{lem bound z2} + T_m^2 C_\tref{lem bound z3})\norm{w_{t_m} - w_*}_m \\
        &\textstyle + (2T_m C_\tref{lem bound G} + T_m^2 C_\tref{lem bound z1} + T_m C_\tref{lem bound z2} + T_m^2 C_\tref{lem bound z3})(\norm{w_*}_m + 1)
    \end{align}
    Denote $C_{\tref{lem inductive basis},1} \doteq 2C_\tref{lem bound G} +  T_0 C_\tref{lem bound z1} + C_\tref{lem bound z2} + T_0 C_\tref{lem bound z3}$, 
    $C_{\tref{lem inductive basis}, 2} \doteq (2T_0 C_\tref{lem bound G} + T_0^2 C_\tref{lem bound z1} + T_0 C_\tref{lem bound z2} + T_0^2 C_\tref{lem bound z3})(\norm{w_*}_m + 1)$. 
    Then we have
    \begin{align}
        \textstyle\norm{w_{t_{m+1}} - w_*}_m \leq &\textstyle\norm{w_{t_m} - w_*}_m + C_{\tref{lem inductive basis},1}T_m \norm{w_{t_m} - w_*}_m + C_{\tref{lem inductive basis}, 2} \\
        \leq &\textstyle\norm{w_{t_{m_0}} - w_*}_m + \sum_{i=m_0}^m C_{\tref{lem inductive basis},1}T_i \norm{w_{t_i} - w_*}_m + \sum_{i=m_0}^m C_{\tref{lem inductive basis}, 2}.
    \end{align}
    From~\eqref{eq ttm gronwall 3}, 
    we have for any $t \leq t_{m_0}$ (including $t_m$ with $m \leq m_0$),
    \begin{align}
      \norm{w_t - w_*} \leq C_{\tref{lem inductive basis}, 3} \doteq \textstyle(\norm{w_0 - w_*} + C_\tref{lem bound gronwall}\sum_{\tau=0}^{t_{m_0}-1} \alpha_\tau ) \exp(C_\tref{lem bound gronwall} \sum_{\tau=0}^{t_{m_0}-1} \alpha_\tau).
    \end{align}
    Applying the discrete Gronwall inequality then yields that for all $m\geq m_0$,
    \begin{align}
      \textstyle\norm{w_{t_{m}} - w_*}_m \leq \qty(\frac{u_{cm}}{l_{cm}}C_{\tref{lem inductive basis}, 3} + (m -m_0+1) C_{\tref{lem inductive basis}, 2}) \exp\qty(\sum_{i=m_0}^{m-1} C_{\tref{lem inductive basis},1}T_i).
    \end{align}
    Noticing $T_m = \fO(\frac{1}{m})$ and $\sum_{i=0}^m T_i = \fO(\ln m)$ then completes the proof.
\end{proof}

\subsection{Proof of Lemma~\ref{lem inductive step}}
\label{proof of lem inductive step}
\begin{proof}
For any $\delta \in (0, 1)$, consider the non-decreasing sequence $\qty{B_m(\delta)}$ defined in \eqref{eq prob ineq 1}.
For each $m \geq m_0$,
define an event $E_{m}(\delta) = \qty{\norm{w_{t_k} - w_*}_m^2 \leq B_k(\delta), \, \forall k = m_0, m_0 + 1, \dots, m}$. 
Notably, $\qty{E_m(\delta)}_{m \geq m_0}$ is by definition a sequence of decreasing events, i.e., $E_{m+1}(\delta) \subset E_m(\delta)$.
In addition, according to~\eqref{eq prob ineq 1}, 
we have $\Pr(E_m(\delta)) \geq 1 - \delta$ for any $m \geq m_0$. 
Let $\lambda_m = \theta T_m^{-1} B_m(\delta)^{-1}$
where $\theta$ is a tunable constant yet to be chosen.

We now proceed to show that $\qty{\exp \left(\lambda_m \mathbbm{1}_{E_m(\delta)} \norm{w_{t_m} - w_*}_m^2 \right)}$ is ``almost'' a supermartingale.
First, by first multiplying $\lambda_{m+1} \mathbbm{1}_{E_{m+1}(\delta)}$ and then taking exponential on both sides of Lemma~\ref{lem bound all}, 
\begin{align}
    &\textstyle\exp(\lambda_{m+1} \mathbbm{1}_{E_{m+1}(\delta)}\norm{w_{t_{m+1}} - w_*}_m^2)\\
    \leq&\textstyle \exp(\lambda_{m+1} \mathbbm{1}_{E_{m+1}(\delta)}(1 - T_m \kappa')\norm{w_{t_m} - w_*}_m^2)\\
    &\textstyle\times \exp(\lambda_{m+1} \mathbbm{1}_{E_{m+1}(\delta)}\indot{\nabla \norm{w_{t_m} - w_*}_m^2}{z_{2,m}}) \times \exp(\lambda_{m+1} \mathbbm{1}_{E_{m+1}(\delta)}C_\tref{lem bound all} T_m^2)\\
    \leq&\textstyle \exp(\lambda_{m+1} \mathbbm{1}_{E_{m}(\delta)}(1 - T_m \kappa')\norm{w_{t_m} - w_*}_m^2)\\
    &\textstyle\times \exp(\lambda_{m+1} \mathbbm{1}_{E_{m}(\delta)}\indot{\nabla \norm{w_{t_m} - w_*}_m^2}{z_{2,m}}) \times \exp(\lambda_{m+1} \mathbbm{1}_{E_{m}(\delta)}C_\tref{lem bound all} T_m^2),
\end{align}
where in the last inequality we used the fact that $\qty{E_{m}(\delta)}$ is a decreasing sequence of events. 
Taking expectation conditioned on $\mathcal{F}_m$ on both sides of the previous inequality, we obtain
\begin{align}
    &\textstyle\mathbb{E}\qty[\exp(\lambda_{m+1} \mathbbm{1}_{E_{m+1}(\delta)}\norm{w_{t_{m+1}} - w_*}_m^2) | \mathcal{F}_m]\\
    \leq&\textstyle \exp(\lambda_{m+1} \mathbbm{1}_{E_{m}(\delta)}(1 - T_m \kappa')\norm{w_{t_m} - w_*}_m^2) \times \exp(\lambda_{m+1} \mathbbm{1}_{E_{m}(\delta)}C_\tref{lem bound all} T_m^2)\\
    &\textstyle\times \mathbb{E}\left[\exp(\lambda_{m+1} \mathbbm{1}_{E_{m}(\delta)}\indot{\nabla \norm{w_{t_m} - w_*}_m^2}{z_{2,m}}) | \mathcal{F}_m\right]\label{eq bound exp}.
\end{align}
To bound the last term, we will use Hoeffding's lemma. 
We recall $\E_m\qty[z_{2, m}] = 0$. 
Additionally, we have
\begin{align}
    &\textstyle\abs{\lambda_{m+1} \mathbbm{1}_{E_{m}(\delta)}\indot{\nabla \norm{w_{t_m} - w_*}_m^2}{z_{2,m}}}\\
    \leq&\textstyle 2\lambda_{m+1} \mathbbm{1}_{E_{m}(\delta)}\norm{w_{t_m} - w_*}_m\norm{z_{2,m}}_m \explain{Lemma~\ref{lem property of M}}\\
    \leq&\textstyle 2\lambda_{m+1} \mathbbm{1}_{E_{m}(\delta)}\norm{w_{t_m} - w_*}_m T_m C_{\tref{lem bound z2}} (\norm{w_{t_m} - w_*}_m + 1) \explain{Lemma~\ref{lem bound z2}}\\
    \leq&\textstyle 2\lambda_{m+1} \mathbbm{1}_{E_{m}(\delta)}\norm{w_{t_m} - w_*}_m T_m C_{\tref{lem bound z2}} (B_m(\delta)^{1/2} + 1) \explain{Definition of $E_m(\delta)$}.
\end{align}
Now we can apply Hoeffding's lemma to the last term in \eqref{eq bound exp}, which gives us
\begin{align}
    &\textstyle\mathbb{E}\left[\exp(\lambda_{m+1} \mathbbm{1}_{E_{m}(\delta)}\indot{\nabla \norm{w_{t_m} - w_*}_m^2}{z_{2,m}}) | \mathcal{F}_m\right]\\
    \leq&\textstyle \exp(16\lambda_{m+1}^2 \mathbbm{1}_{E_{m}(\delta)}\norm{w_{t_m} - w_*}^2_m T^2_m C^2_{\tref{lem bound z2}} (B_m(\delta)^{1/2} + 1)^2 / 8) \\
    \leq&\textstyle \exp(4\lambda_{m+1}^2 \mathbbm{1}_{E_{m}(\delta)}\norm{w_{t_m} - w_*}^2_m T^2_m C^2_{\tref{lem bound z2}} (B_m(\delta) + 1)).
\end{align}
Applying this back to \eqref{eq bound exp}, we have 
\begin{align}
    &\textstyle\mathbb{E}\qty[\exp(\lambda_{m+1} \mathbbm{1}_{E_{m+1}(\delta)}\norm{w_{t_{m+1}} - w_*}_m^2) | \mathcal{F}_m]\\
    \leq&\textstyle \exp(\lambda_{m+1} \mathbbm{1}_{E_{m}(\delta)}\norm{w_{t_m} - w_*}_m^2(1 - T_m \kappa'+ 4\lambda_{m+1}T_m^2C_{\tref{lem bound z2}}^2(1+B_m(\delta))))\\
    &\textstyle\times \exp(\lambda_{m+1} \mathbbm{1}_{E_{m}(\delta)}C_\tref{lem bound all} T_m^2).
\end{align}
To simplify notation, define 
\begin{align}
    &\textstyle D_{1,m} \doteq \frac{\lambda_{m+1}}{\lambda_m}(1 - T_m \kappa'+4\lambda_{m+1}T_m^2C_{\tref{lem bound z2}}^2(1+B_m(\delta))),\\
    &\textstyle D_{2,m} \doteq \lambda_{m+1} C_\tref{lem bound all} T_m^2.
\end{align}
Then, the previous inequality reads
\begin{align}
  \label{eq exp bound 2}
    &\textstyle\E\qty[\exp(\lambda_{m+1} \mathbbm{1}_{E_{m+1}(\delta)}\norm{w_{t_{m+1}} - w_*}_m^2)| \fF_m]\\
    \leq&\textstyle \exp(D_{1,m}\lambda_m \mathbbm{1}_{E_{m}(\delta)}\norm{w_{t_m} - w_*}_m^2)\exp(D_{2,m}\mathbbm{1}_{E_{m}(\delta)}).
\end{align}
Next, we bound the terms $D_{1,m}$ and $D_{2,m}$. Since $\lambda_{m} = \theta T_m^{-1}B_m(\delta)^{-1}$, we have
\begin{align}
    \textstyle D_{1,m} \mathbbm{1}_{E_m(\delta)} &\textstyle= \frac{\lambda_{m+1}}{\lambda_m}(1 - T_m \kappa'+ 4\lambda_{m+1}T_m^2C_{\tref{lem bound z2}}^2(1+B_m(\delta)))\mathbbm{1}_{E_m(\delta)}\\
    &\textstyle= \frac{T_mB_m(\delta)}{T_{m+1}B_{m+1}(\delta)}\left(1 - T_m \kappa'+\frac{4\theta T_m^2C_{\tref{lem bound z2}}^2(1+B_m(\delta))}{T_{m+1}B_{m+1}(\delta)}\right)\mathbbm{1}_{E_m(\delta)}\\
    &\textstyle\leq \frac{T_m}{T_{m+1}}\left(1 - T_m \kappa'+\frac{4\theta T_m^2C_{\tref{lem bound z2}}^2}{T_{m+1}}\frac{1+B_m(\delta)}{B_{m}(\delta)}\right)\mathbbm{1}_{E_m(\delta)}\\
    &\textstyle\leq \frac{T_m}{T_{m+1}}\left(1 - T_m \kappa'+\frac{4\theta T_m^2 C_{\tref{lem bound z2}}^2}{T_{m+1}}\frac{1+\norm{w_0-w_*}^2}{\norm{w_0-w_*}^2}\right)\\
    &\textstyle\leq \frac{T_m}{T_{m+1}}\left(1 - T_m \kappa'+8\theta T_mC_{\tref{lem bound z2}}^2\frac{1+\norm{w_0-w_*}^2}{\norm{w_0-w_*}^2}\right).
\end{align}
We recall $C_\alpha > \bar C_\alpha = 1/\kappa'$.
Select any $\kappa'' \in (1/C_\alpha, \kappa')$ and define
$\theta \doteq \frac{\norm{w_0-w_*}^2}{1+\norm{w_0-w_*}^2}\min\{1, \frac{\kappa'-\kappa''}{8 C_{\tref{lem bound z2}}^2}\}$.
Then we have
\begin{align}
  \textstyle D_{1,m} \mathbbm{1}_{E_m(\delta)} \leq \frac{T_m}{T_{m+1}}\left(1 - T_m \kappa''\right) =\frac{m+4}{m+3}\left(1 - \frac{C_\alpha}{m+3} \kappa''\right).
\end{align}
Since $C_\alpha \kappa'' > 1$, it is easy to compute that $D_{1,m} \mathbbm{1}_{E_m(\delta)} < 1$ holds for all $m$.
Now, consider $D_{2,m}$. We have by definition of $\lambda_m$ that
\begin{align}
    \textstyle D_{2,m}\mathbbm{1}_{E_{m}(\delta)} = \frac{\theta C_\tref{lem bound all}T_m^2}{T_{m+1}B_{m+1}(\delta)}\mathbbm{1}_{E_{m}(\delta)}
    \leq \frac{\theta C_\tref{lem bound all}T_m^2}{T_{m+1}\norm{w_0-w_*}^2}
    \leq 2 C_\tref{lem bound all} T_m.
\end{align}
Using the upper bounds we obtained for the terms $D_{1,m}$ and $D_{2,m}$ in \eqref{eq exp bound 2}, 
we have for any $m \geq m_0$, 
\begin{align}
&\textstyle\mathbb{E}\qty[\exp(\lambda_{m+1} \mathbbm{1}_{E_{m+1}(\delta)}\norm{w_{t_{m+1}} - w_*}_m^2) | \mathcal{F}_m] \\ 
\leq&\textstyle \exp(D_{1,m}\lambda_m \mathbbm{1}_{E_{m}(\delta)}\norm{w_{t_m} - w_*}_m^2)\exp(D_{2,m}\mathbbm{1}_{E_{m}(\delta)})\\
    \leq&\textstyle \exp(\lambda_m \mathbbm{1}_{E_{m}(\delta)}\norm{w_{t_m} - w_*}_m^2)\exp\left(2C_\tref{lem bound all}T_m\right).
\end{align}
For $m \geq m_0$, define $Z_m \doteq \exp(\lambda_m\mathbbm{1}_{E_m(\delta)}\norm{w_{t_m} - w_*}^2_m - 2C_\tref{lem bound all}\sum_{i=m_0}^{m-1}T_i)$. 
We next show that $\qty{Z_m}$ is a supermartingale with respect to the filtration $\{\mathcal{F}_m\}$. 
Indeed, it holds that
\begin{align}
  \textstyle\E\qty[Z_{m+1} | \fF_m] =&\textstyle \mathbb{E}\qty[\exp(\lambda_{m+1} \mathbbm{1}_{E_{m+1}(\delta)}\norm{w_{t_{m+1}} - w_*}_m^2) | \mathcal{F}_m] \exp(- 2C_\tref{lem bound all}\sum_{i=m_0}^{m}T_i) \\
  \leq&\textstyle\exp(\lambda_m \mathbbm{1}_{E_{m}(\delta)}\norm{w_{t_m} - w_*}_m^2)\exp\left(2C_\tref{lem bound all}T_m\right)\exp(- 2C_\tref{lem bound all}\sum_{i=m_0}^{m}T_i) \\
  =&\textstyle Z_m.
\end{align}
For any $\delta' \in (0, 1 - \delta)$, 
we now construct $B(\delta, \delta')$.
By Ville's maximal inequality,
we get for any $\epsilon$,
\begin{align}
  &\textstyle\Pr(\sup_{m\geq m_0} \exp(\lambda_m \mathbbm{1}_{E_m(\delta)}\norm{w_{t_m} - w_*}^2_m - 2C_\tref{lem bound all} \sum_{i=m_0}^{m-1} T_i) \geq \exp(\epsilon)) \\
  \leq&\textstyle \E\qty[{\exp(\lambda_{m_0} \mathbbm{1}_{E_{m_0}(\delta)}\norm{w_{t_{m_0}} - w_*}^2_m -\epsilon)}] \\ 
  \leq&\textstyle \E\qty[{\exp({\theta}{T^{-1}_{m_0} B_{m_0}(\delta)^{-1}} \mathbbm{1}_{E_{m_0}(\delta)}\norm{w_{t_{m_0}} - w_*}^2_m -\epsilon)}] \\
  \leq&\textstyle \E\qty[{\exp({\theta}T^{-1}_{m_0} -\epsilon)}] = {\exp({\theta}T^{-1}_{m_0} -\epsilon)}.
\end{align}
Select $\epsilon$ such that $\exp(\theta T_{m_0}^{-1} - \epsilon) = \delta'$,
then it holds, with probability at least $1-\delta'$,
that for all $m \geq m_0$,
\begin{align}
  \textstyle\lambda_m \mathbbm{1}_{E_m(\delta)}\norm{w_{t_m} - w_*}^2_m - 2C_\tref{lem bound all} \sum_{i=m_0}^{m-1} T_i \leq \theta T_{m_0}^{-1} - \ln \delta',
\end{align}
implying
\begin{align}
  \textstyle\lambda_m \mathbbm{1}_{E_m(\delta)}\norm{w_{t_m} - w_*}^2_m \leq \theta T_{m_0}^{-1} + \ln (1/\delta') + 2C_\tref{lem bound all}C_\alpha \ln (m+3).
\end{align}
Next, we complete Lemma~\ref{lem inductive step} by removing $\mathbbm{1}_{E_m(\delta)}$ in the LHS of the above inequality. 
For simplicity of notation, 
we denote the RHS of the above inequality as $\epsilon(m,m_0,\delta')$.
We then have
\begin{align}
&\textstyle\Pr(\lambda_m\|w_{t_m}-w_*\|^2 \leq \epsilon(m,m_0,\delta'), \forall m \geq m_0) \\
=&\textstyle \Pr\left(\bigcap_{m=m_0}^{\infty}\{\lambda_m\|w_{t_m}-w_*\|^2 \leq \epsilon(m,m_0,\delta')\}\right) \\
\geq&\textstyle \Pr\left(\bigcap_{m=m_0}^{\infty}\{\lambda_m\mathbbm{1}_{E_m(\delta)}\|w_{t_m}-w_*\|^2 \leq \epsilon(m,m_0,\delta')\} \cap E_m(\delta)\right).
\end{align}
To proceed, note that for any two events, $A$ and $B$, we have
\begin{align}
\Pr(A \cap B) = 1 - \Pr(A^c \cup B^c) \geq 1 - \Pr(A^c) - \Pr(B^c) = \Pr(A) + \Pr(B) - 1.
\end{align}
Therefore, we have
\begin{align}
&\textstyle\Pr( \lambda_m \|w_{t_m}-w_*\|^2 \leq \epsilon(m,m_0,\delta'), \forall m \geq m_0) \\
\geq&\textstyle \Pr\left(\bigcap_{m=m_0}^{\infty}\{\lambda_m\mathbbm{1}_{E_m(\delta)}\|w_{t_m}-w_*\|^2 \leq \epsilon(m,m_0,\delta')\} \cap E_m(\delta)\right) \\
\geq&\textstyle \Pr\left(\bigcap_{m=m_0}^{\infty}\{\lambda_m\mathbbm{1}_{E_m(\delta)}\|w_{t_m}-w_*\|^2 \leq \epsilon(m,m_0,\delta')\}\right) + \Pr\left(\bigcap_{m=0}^{\infty}E_m(\delta)\right) - 1 \\
=&\textstyle \Pr(\lambda_m\mathbbm{1}_{E_m(\delta)}\|w_{t_m}-w_*\|^2 \leq \epsilon(m,m_0,\delta'), \forall m \geq m_0) + \lim_{m\to\infty}\mathbb{P}(E_m(\delta)) - 1 \\
\geq&\textstyle (1-\delta') + (1-\delta) - 1 \\
=&\textstyle 1-\delta-\delta'.
\end{align}
Using the definitions of $\epsilon(m,m_0,\delta')$ and $\lambda_m$, we arrive at the following result.
For any $\delta' \in (0,1-\delta)$, 
it holds, with probability at least $1-\delta-\delta'$, 
that for any $m \geq m_0$
\begin{align}
\textstyle\|w_{t_m}-w_*\|_m^2 &\leq \frac{T_mB_m(\delta)}{\theta} \qty(\theta T_{m_0}^{-1} + \ln (1/\delta') + 2C_\tref{lem bound all}C_\alpha \ln (m+3)).
\end{align}
Defining $C_{\tref{lem inductive step}}$ accordingly then completes the proof.
\end{proof}

\subsection{Proof of Lemma~\ref{lem concentration in m}}
\label{proof of lem concentration in m}
\begin{proof}
For any $\delta \in (0, 1)$,
we define
$B_m(\delta) = C_\tref{lem inductive basis}'m^{C_\tref{lem inductive basis}}$.
From Lemma~\ref{lem inductive basis},
we conclude that
\[
\Pr(\|x_m-x^*\|^2 \leq B_m(\delta), \forall m \geq m_ 0) = 1 \geq 1 - \delta.
\]
This lays out the antecedent term in the implication relationship specified in Lemma~\ref{lem inductive step}.
We define $C_{\tref{lem concentration in m}} \doteq C_\tref{lem inductive basis} + 1$. Let $\delta_1,\delta_2,\cdots,\delta_{C_{\tref{lem concentration in m}}} > 0$ be such that $\sum_{i=1}^{C_{\tref{lem concentration in m}}} \delta_i \leq 1$. Repeatedly applying Lemma~\ref{lem inductive step} for $C_{\tref{lem concentration in m}}$ times, 
we have that 
it holds, with probability at least $1-\sum_{i=1}^{C_{\tref{lem concentration in m}}} \delta_i$, that for any $m \geq m_0$,
\begin{align}
\textstyle\|w_{t_m}-w_*\|_m^2 \leq {T_m^{C_{\tref{lem concentration in m}}} B_m(\delta)}\prod_{i=1}^{C_{\tref{lem concentration in m}}} C_{\tref{lem inductive step}}\left[\ln(1/\delta_i) + 1 + \ln (m+3) \right].
\end{align}
By choosing $\delta_i = \delta/C_{\tref{lem concentration in m}}$ for all $i \in \{1,2,\cdots,C_{\tref{lem concentration in m}}\}$, 
the previous inequality reads
\begin{align}
\textstyle\|w_{t_m}-w_*\|_m^2 &\textstyle\leq {C_{\tref{lem inductive step}}^{C_{\tref{lem concentration in m}}} T_m^{C_{\tref{lem concentration in m}}} B_m(\delta)}\left[\ln(C_{\tref{lem concentration in m}}/\delta) + 1 + \ln (m+3) \right]^{C_{\tref{lem concentration in m}}} \\
&\textstyle= C_{\tref{lem inductive step}}^{C_{\tref{lem concentration in m}}} C_\alpha^{C_{\tref{lem concentration in m}}} C_\tref{lem inductive basis}' (m+3)^{-1} \left[\ln(C_{\tref{lem concentration in m}}/\delta) + 1 + \ln (m+3) \right]^{C_{\tref{lem concentration in m}}}.
\end{align}
Notably, the above inequality holds only for $m \geq m_0$.
For $m < m_0$,
Lemma~\ref{lem inductive basis} gives a trivial almost sure bound that
  $\norm{w_{t_m} - w_*}_m \leq C_\tref{lem inductive basis}' m_0^{C_\tref{lem inductive basis}}$.
To get a bound for all $m$,
we select a constant $C_{\tref{lem concentration in m}, 1}$ such that
\begin{align}
  C_{\tref{lem concentration in m}, 1} \min_{m=0, \dots, m_0} C_{\tref{lem inductive step}}^{C_{\tref{lem concentration in m}}} C_\alpha^{C_{\tref{lem concentration in m}}} C_\tref{lem inductive basis}' (m+3)^{-1} \left[\ln C_{\tref{lem concentration in m}} + 1 + \ln (m+3) \right]^{C_{\tref{lem concentration in m}}} > 1.
\end{align}
Since $\ln (C_\tref{lem concentration in m} / \delta) \geq \ln C_\tref{lem concentration in m}$,
we have
\begin{align}
  C_{\tref{lem concentration in m}, 1}\min_{m=0, \dots, m_0} C_{\tref{lem inductive step}}^{C_{\tref{lem concentration in m}}} C_\alpha^{C_{\tref{lem concentration in m}}} C_\tref{lem inductive basis}' (m+3)^{-1} \left[\ln (C_{\tref{lem concentration in m}} / \delta) + 1 + \ln (m+3) \right]^{C_{\tref{lem concentration in m}}} > 1.
\end{align}
It then holds that for any $m \geq 0$,
\begin{align}
  \label{eq combining bounds}
  \!\!\!\!\!\!\!\!\! \e{m} \leq C_\tref{lem inductive basis}' m_0^{C_\tref{lem inductive basis}}  C_{\tref{lem concentration in m}, 1} C_{\tref{lem inductive step}}^{C_{\tref{lem concentration in m}}} C_\alpha^{C_{\tref{lem concentration in m}}} C_\tref{lem inductive basis}' (m+3)^{-1} \left[\ln(C_{\tref{lem concentration in m}}/\delta) + 1 + \ln (m+3) \right]^{C_{\tref{lem concentration in m}}}.
\end{align}
Defining $C_{\tref{lem concentration in m}}'$ accordingly then completes the proof.
\end{proof}

\subsection{Proof of Theorem~\ref{thm concentration}}
\label{sec map the concentration bound back}
\begin{proof}
  We recall that we proceed under Assumption~\ref{assu lr2} and have
\begin{align}
    \textstyle\alpha_t = \frac{C_\alpha}{(t+3)\ln^\nu{(t+3)}}, \, T_m = \frac{C_\alpha}{m+3}.
\end{align}
The proof is analogous to the proof of Theorem~\ref{thm lil} in Section~\ref{sec proof lem connect skeleton sa and sa}.
Applying the bound in Lemma~\ref{lem concentration in m} to~\eqref{eq connection basic relation}, we have for any $t \in [t_m, t_{m+1}]$
\begin{align}
    \textstyle\norm{w_t - w_*}^2 \leq 8C_\tref{lem bound gronwall}(T_m^2 + C_{\tref{lem concentration in m}}'\frac{1}{m+3} \left[\ln({C_{\tref{lem concentration in m}}}/{\delta}) + 1 + \ln (m+3) \right]^{C_{\tref{lem concentration in m}}}).
\end{align}
Using the first inequality in~\eqref{eq lem connect big theta} and integral approximation (cf. Section~\ref{sec proof lem lr bounds}), we get
\begin{align}
    \textstyle C_\alpha \frac{\ln^{1-\nu}(t_{m+1}+3) - \ln^{1-\nu}(t_{m_0}+2)}{1-\nu} \ge C_\alpha(\ln(m+3) - \ln(m_0+3))
\end{align}
That is $m+3 \leq (m_0+3)\exp(\frac{\ln^{1-\nu}(t_{m+1}+1)}{1-\nu})$.
Using the second inequality in~\eqref{eq lem connect big theta} and integral approximation, we get
\begin{align}
      \textstyle C_\alpha \frac{\ln^{1-\nu}(t_{m+1}+2) - \ln^{1-\nu}(t_{m_0}+3)}{1-\nu} \leq C_\alpha(\ln(m+2) - \ln(m_0+2)) + \sum_{i=m_0}^m C_{\tref{lem lr bounds}}\frac{C_\alpha}{(i+3)^2}.
\end{align}
This means that there exists some constant $C_{{Th}\tref{thm concentration},1}$ such that $m + 2 \ge C_{{Th}\tref{thm concentration},1} \exp(\frac{\ln^{1-\nu} t_{m+1}}{1-\nu})$.
Then for all $m \geq m_0$ and any $t \in [t_m, t_{m+1})$, we have,
for some proper constants,
\begin{align}
    &\textstyle\norm{w_t - w_*}^2 \\
    \leq &\textstyle 8C_\tref{lem bound gronwall}(\frac{C_\alpha^2}{(m+3)^2} + C_{\tref{lem concentration in m}, 1}\frac{1}{m+3} \left[\ln({C_{\tref{lem concentration in m}}}/{\delta}) + 1 + \ln (m+3) \right]^{C_{\tref{lem concentration in m}}} )\\
    \leq &\textstyle C_{{Th}\tref{thm concentration},2}\frac{1}{m+3}\left[\ln(1/\delta) + C_{{Th}\ref{thm concentration},3} + \ln (m+3)\right]^{C_{\tref{lem concentration in m}}} \\
    \leq &\textstyle C_{{Th}\tref{thm concentration},2}\qty(C_{{Th}\tref{thm concentration},1} \exp(\frac{\ln^{1-\nu} t_{m+1}}{1-\nu}))^{-1}\left[\ln(1/\delta) + C_{{Th}\ref{thm concentration},3} + \ln((m_0 + 3) \exp(\frac{\ln^{1-\nu}(t_{m}+1)}{1-\nu})) \right]^{C_{\tref{lem concentration in m}}} \\
    \leq &\textstyle C_{{Th}\tref{thm concentration}, 4}\exp(\frac{-\ln^{1-\nu} (t+1)}{1-\nu})\left[\ln(1/\delta) +C_{{Th}\tref{thm concentration}, 5} + \frac{\ln^{1-\nu}}{1-\nu} (t+1)\right]^{C_{\tref{lem concentration in m}}}.
\end{align}
For all $t \leq t_{m_0}$, we can use a trivial almost sure bound from~\eqref{eq ttm gronwall 3}. Combining the two bounds (cf.~\eqref{eq combining bounds}) then completes the proof.
\end{proof}

\subsection{Proof of Corollary~\ref{cor lp}}
\label{sec proof cor lp}
\begin{proof}
  For any real random variable $X$ and $p \geq 1$,
  we have the identity $\E\qty[\abs{X}^p] = \int_0^\infty p \epsilon^{p-1} \Pr(\abs{X} \geq \epsilon) \dd \epsilon$. 
  For any fixed $t$,
  define $\eta \doteq \ln(1/\delta) \in (0, \infty)$.
  Then Theorem~\ref{thm concentration} reads
  \begin{align}
    \textstyle&\Pr(\forall t \geq 0, \norm{w_t - w_*}_m^2 \leq a(t)\qty(\eta + b(t))^{C_{\tref{lem concentration in m}}}) \geq 1- \exp(-\eta).
  \end{align}
  This means for any $t$ and any $\epsilon \geq a(t) b(t)^{C_{\tref{lem concentration in m}}}$, we have
  \begin{align}
    \textstyle&\Pr(\norm{w_t-w_*}_m^2 \geq \epsilon) \leq \exp\qty(-\qty(\epsilon/a(t))^{1/C_{\tref{lem concentration in m}}}+b(t)).
  \end{align}
  To bound $\mathbb{E}\qty[\norm{w_t-w_*}_m^{2p}]$, we split the integral into two parts.
  \begin{align}
      \textstyle \mathbb{E}\qty[\norm{w_t-w_*}_m^{2p}] = \int_0^\infty p \epsilon^{p-1} \Pr(\norm{w_t-w_*}_m^2\geq \epsilon) \dd \epsilon \leq I_1 + I_2,
  \end{align}
  where $I_1 = \int_0^{a(t) b(t)^{C_{\tref{lem concentration in m}}}} p \epsilon^{p-1} \dd \epsilon$,
  and $I_2 = \int_{a(t) b(t)^{C_{\tref{lem concentration in m}}}}^\infty p \epsilon^{p-1} \Pr\left(\norm{w_t-w_*}_m^2 \geq \epsilon\right) \dd \epsilon$.
  Then 
  \begin{align}
      I_1 \leq& \textstyle \int_0^{a(t) b(t)^{C_{\tref{lem concentration in m}}}} p \epsilon^{p-1} \, d\epsilon = \left(a(t) b(t)^{C_{\tref{lem concentration in m}}}\right)^p, \\
      \textstyle I_2 \leq &\textstyle\int_{a(t) b(t)^{C_{\tref{lem concentration in m}}}}^\infty p \epsilon^{p-1} \exp\qty(-\qty(\epsilon/a(t))^{1/C_{\tref{lem concentration in m}}}+b(t)) \dd \epsilon \\
      = &\textstyle C_{\tref{lem concentration in m}} p a(t)^p e^{b(t)} \int_{b(t)}^\infty u^{C_{\tref{lem concentration in m}}p - 1} e^{-u} \dd u \quad\qty(\text{change of variable}~u = \qty(\epsilon/a(t))^{1/C_{\tref{lem concentration in m}}})\\
      \leq &\textstyle C_{\tref{lem concentration in m}} p a(t)^p e^{b(t)} \Gamma(C_{\tref{lem concentration in m}}p)
  \end{align}
  where in the last inequality, we use the definition of the gamma function $\Gamma$. Then
  \begin{align}
      \mathbb{E}\qty[\norm{w_t-w_*}^{2p}] \leq \qty(a(t) b(t)^{C_{\tref{lem concentration in m}}})^p +  C_{\tref{lem concentration in m}}p a(t)^p \exp(b(t)) \Gamma(C_{\tref{lem concentration in m}}p),
  \end{align}
  which completes the proof.
\end{proof}




\vskip 0.2in
\bibliography{bibliography.bib}

\begin{thebibliography}{84}
\providecommand{\natexlab}[1]{#1}
\providecommand{\url}[1]{\texttt{#1}}
\expandafter\ifx\csname urlstyle\endcsname\relax
  \providecommand{\doi}[1]{doi: #1}\else
  \providecommand{\doi}{doi: \begingroup \urlstyle{rm}\Url}\fi

\bibitem[Asis et~al.(2018)Asis, Hernandez{-}Garcia, Holland, and
  Sutton]{de2018multi}
Kristopher~De Asis, J.~Fernando Hernandez{-}Garcia, G.~Zacharias Holland, and
  Richard~S. Sutton.
\newblock Multi-step reinforcement learning: {A} unifying algorithm.
\newblock In \emph{Proceedings of the {AAAI} Conference on Artificial
  Intelligence}, 2018.

\bibitem[Azar et~al.(2011)Azar, Munos, Ghavamzadeh, and Kappen]{azar2011speedy}
Mohammad~Gheshlaghi Azar, Remi Munos, Mohammad Ghavamzadeh, and Hilbert Kappen.
\newblock Speedy q-learning.
\newblock In \emph{Advances in neural information processing systems}, 2011.

\bibitem[Beck and Srikant(2012)]{beck2012error}
Carolyn~L Beck and Rayadurgam Srikant.
\newblock Error bounds for constant step-size q-learning.
\newblock \emph{Systems \& control letters}, 2012.

\bibitem[Bellman(1957)]{bellman1957markovian}
Richard Bellman.
\newblock A markovian decision process.
\newblock \emph{Journal of mathematics and mechanics}, 1957.

\bibitem[Benveniste et~al.(1990)Benveniste, M{\'{e}}tivier, and
  Priouret]{benveniste1990MP}
Albert Benveniste, Michel M{\'{e}}tivier, and Pierre Priouret.
\newblock \emph{Adaptive Algorithms and Stochastic Approximations}.
\newblock Springer, 1990.

\bibitem[Bertsekas and Tsitsiklis(1996)]{bertsekas1996neuro}
Dimitri~P Bertsekas and John~N Tsitsiklis.
\newblock \emph{Neuro-Dynamic Programming}.
\newblock Athena Scientific Belmont, MA, 1996.

\bibitem[Bhandari et~al.(2018)Bhandari, Russo, and Singal]{bhandari2018finite}
Jalaj Bhandari, Daniel Russo, and Raghav Singal.
\newblock A finite time analysis of temporal difference learning with linear
  function approximation.
\newblock In \emph{Proceedings of the Conference on Learning Theory}, 2018.

\bibitem[Borkar et~al.(2021)Borkar, Chen, Devraj, Kontoyiannis, and
  Meyn]{borkar2021ode}
Vivek Borkar, Shuhang Chen, Adithya Devraj, Ioannis Kontoyiannis, and Sean
  Meyn.
\newblock The ode method for asymptotic statistics in stochastic approximation
  and reinforcement learning.
\newblock \emph{arXiv preprint arXiv:2110.14427}, 2021.

\bibitem[Borkar(2009)]{borkar2009stochastic}
Vivek~S Borkar.
\newblock \emph{Stochastic approximation: a dynamical systems viewpoint}.
\newblock Springer, 2009.

\bibitem[Chandak and Borkar(2023)]{chandak2023concentration}
Siddharth Chandak and Vivek~S Borkar.
\newblock A concentration bound for td (0) with function approximation.
\newblock \emph{arXiv preprint arXiv:2312.10424}, 2023.

\bibitem[Chandak et~al.(2022)Chandak, Borkar, and
  Dodhia]{chandak2022concentration}
Siddharth Chandak, Vivek~S Borkar, and Parth Dodhia.
\newblock Concentration of contractive stochastic approximation and
  reinforcement learning.
\newblock \emph{Stochastic Systems}, 2022.

\bibitem[Chen et~al.(2020)Chen, Maguluri, Shakkottai, and
  Shanmugam]{chen2020finite}
Zaiwei Chen, Siva~Theja Maguluri, Sanjay Shakkottai, and Karthikeyan Shanmugam.
\newblock Finite-sample analysis of contractive stochastic approximation using
  smooth convex envelopes.
\newblock \emph{arXiv preprint arXiv:2002.00874}, 2020.

\bibitem[Chen et~al.(2021)Chen, Maguluri, Shakkottai, and
  Shanmugam]{chen2021lyapunov}
Zaiwei Chen, Siva~Theja Maguluri, Sanjay Shakkottai, and Karthikeyan Shanmugam.
\newblock A lyapunov theory for finite-sample guarantees of asynchronous
  q-learning and td-learning variants.
\newblock \emph{arXiv preprint arXiv:2102.01567}, 2021.

\bibitem[Chen et~al.(2023)Chen, Maguluri, and Zubeldia]{chen2023concentration}
Zaiwei Chen, Siva~Theja Maguluri, and Martin Zubeldia.
\newblock Concentration of contractive stochastic approximation: Additive and
  multiplicative noise.
\newblock \emph{arXiv preprint arXiv:2303.15740}, 2023.

\bibitem[Chong et~al.(1999)Chong, Wang, and Kulkarni]{chong1999noise}
Edwin~KP Chong, I-Jeng Wang, and Sanjeev~R Kulkarni.
\newblock Noise conditions for prespecified convergence rates of stochastic
  approximation algorithms.
\newblock \emph{IEEE Transactions on Information Theory}, 1999.

\bibitem[Dalal et~al.(2018{\natexlab{a}})Dalal, Sz{\"o}r{\'e}nyi, Thoppe, and
  Mannor]{dalal2018finite}
Gal Dalal, Bal{\'a}zs Sz{\"o}r{\'e}nyi, Gugan Thoppe, and Shie Mannor.
\newblock Finite sample analyses for td(0) with function approximation.
\newblock In \emph{Proceedings of the {AAAI} Conference on Artificial
  Intelligence}, 2018{\natexlab{a}}.

\bibitem[Dalal et~al.(2018{\natexlab{b}})Dalal, Thoppe, Sz{\"o}r{\'e}nyi, and
  Mannor]{dalal2018finitesample}
Gal Dalal, Gugan Thoppe, Bal{\'a}zs Sz{\"o}r{\'e}nyi, and Shie Mannor.
\newblock Finite sample analysis of two-timescale stochastic approximation with
  applications to reinforcement learning.
\newblock In \emph{Conference On Learning Theory}, 2018{\natexlab{b}}.

\bibitem[De~Farias and Van~Roy(2000)]{de2000existence}
Daniela~Pucci De~Farias and Benjamin Van~Roy.
\newblock On the existence of fixed points for approximate value iteration and
  temporal-difference learning.
\newblock \emph{Journal of Optimization Theory and Applications}, 2000.

\bibitem[Duchi et~al.(2012)Duchi, Agarwal, Johansson, and
  Jordan]{duchi2012ergodic}
John~C Duchi, Alekh Agarwal, Mikael Johansson, and Michael~I Jordan.
\newblock Ergodic mirror descent.
\newblock \emph{SIAM Journal on Optimization}, 2012.

\bibitem[Durmus et~al.(2021)Durmus, Moulines, Naumov, Samsonov, Scaman, and
  Wai]{durmus2021tight}
Alain Durmus, Eric Moulines, Alexey Naumov, Sergey Samsonov, Kevin Scaman, and
  Hoi-To Wai.
\newblock Tight high probability bounds for linear stochastic approximation
  with fixed stepsize.
\newblock \emph{Advances in Neural Information Processing Systems}, 2021.

\bibitem[Even-Dar et~al.(2003)Even-Dar, Mansour, and
  Bartlett]{even2003learning}
Eyal Even-Dar, Yishay Mansour, and Peter Bartlett.
\newblock Learning rates for q-learning.
\newblock \emph{Journal of machine learning Research}, 2003.

\bibitem[Godichon-Baggioni(2016)]{godichon2016estimating}
Antoine Godichon-Baggioni.
\newblock Estimating the geometric median in hilbert spaces with stochastic
  gradient algorithms: Lp and almost sure rates of convergence.
\newblock \emph{Journal of Multivariate Analysis}, 2016.

\bibitem[Godichon-Baggioni(2019)]{godichon2019lp}
Antoine Godichon-Baggioni.
\newblock Lp and almost sure rates of convergence of averaged stochastic
  gradient algorithms: locally strongly convex objective.
\newblock \emph{ESAIM: Probability and Statistics}, 2019.

\bibitem[Gosavi(2006)]{gosavi2006boundedness}
Abhijit Gosavi.
\newblock Boundedness of iterates in q-learning.
\newblock \emph{Systems \& control letters}, 2006.

\bibitem[Harutyunyan et~al.(2016)Harutyunyan, Bellemare, Stepleton, and
  Munos]{harutyunyan2016q}
Anna Harutyunyan, Marc~G Bellemare, Tom Stepleton, and R{\'e}mi Munos.
\newblock Q($\lambda$) with off-policy corrections.
\newblock In \emph{Proceedings of the International Conference on Algorithmic
  Learning Theory}, 2016.

\bibitem[Harvey et~al.(2019)Harvey, Liaw, and Randhawa]{harvey2019simple}
Nicholas~JA Harvey, Christopher Liaw, and Sikander Randhawa.
\newblock Simple and optimal high-probability bounds for strongly-convex
  stochastic gradient descent.
\newblock \emph{arXiv preprint arXiv:1909.00843}, 2019.

\bibitem[Hazan and Kale(2014)]{hazan2014beyond}
Elad Hazan and Satyen Kale.
\newblock Beyond the regret minimization barrier: optimal algorithms for
  stochastic strongly-convex optimization.
\newblock \emph{The Journal of Machine Learning Research}, 2014.

\bibitem[Jaakkola et~al.(1993)Jaakkola, Jordan, and
  Singh]{jaakkola1993convergence}
Tommi Jaakkola, Michael Jordan, and Satinder Singh.
\newblock Convergence of stochastic iterative dynamic programming algorithms.
\newblock \emph{Advances in neural information processing systems}, 1993.

\bibitem[Karandikar and Vidyasagar(2024)]{karandikar2024convergence}
Rajeeva~Laxman Karandikar and Mathukumalli Vidyasagar.
\newblock Convergence rates for stochastic approximation: Biased noise with
  unbounded variance, and applications.
\newblock \emph{Journal of Optimization Theory and Applications}, 2024.

\bibitem[Kearns and Singh(1998)]{kearns1998finite}
Michael Kearns and Satinder Singh.
\newblock Finite-sample convergence rates for q-learning and indirect
  algorithms.
\newblock \emph{Advances in neural information processing systems}, 1998.

\bibitem[Kiefer and Wolfowitz(1952)]{kiefer1952Stochastic}
J.~Kiefer and J.~Wolfowitz.
\newblock Stochastic estimation of the maximum of a regression function.
\newblock \emph{Annals of Mathematical Statistics}, 1952.

\bibitem[Korda and La(2015)]{korda2015td}
Nathaniel Korda and Prashanth La.
\newblock On td (0) with function approximation: Concentration bounds and a
  centered variant with exponential convergence.
\newblock In \emph{International conference on machine learning}, 2015.

\bibitem[Kouritzin and Sadeghi(2015)]{kouritzin2015convergence}
Michael~A Kouritzin and Samira Sadeghi.
\newblock Convergence rates and decoupling in linear stochastic approximation
  algorithms.
\newblock \emph{SIAM Journal on Control and Optimization}, 2015.

\bibitem[Koval and Schwabe(2003)]{koval2003law}
Valery Koval and Rainer Schwabe.
\newblock A law of the iterated logarithm for stochastic approximation
  procedures in d-dimensional euclidean space.
\newblock \emph{Stochastic processes and their applications}, 2003.

\bibitem[Kushner and Yin(2003)]{kushner2003stochastic}
Harold Kushner and G~George Yin.
\newblock \emph{Stochastic approximation and recursive algorithms and
  applications}.
\newblock Springer Science \& Business Media, 2003.

\bibitem[Lakshminarayanan and
  Szepesv{\'{a}}ri(2018)]{lakshminarayanan2018linear}
Chandrashekar Lakshminarayanan and Csaba Szepesv{\'{a}}ri.
\newblock Linear stochastic approximation: How far does constant step-size and
  iterate averaging go?
\newblock In \emph{Proceedings of the International Conference on Artificial
  Intelligence and Statistics}, 2018.

\bibitem[Lauand and Meyn(2024)]{lauand2024revisiting}
Caio~Kalil Lauand and Sean Meyn.
\newblock Revisiting step-size assumptions in stochastic approximation.
\newblock \emph{arXiv preprint arXiv:2405.17834}, 2024.

\bibitem[LeCun et~al.(2015)LeCun, Bengio, and Hinton]{lecun2015deep}
Yann LeCun, Yoshua Bengio, and Geoffrey Hinton.
\newblock Deep learning.
\newblock \emph{Nature}, 2015.

\bibitem[Lee and He(2019)]{lee2019unified}
Donghwan Lee and Niao He.
\newblock A unified switching system perspective and ode analysis of q-learning
  algorithms.
\newblock \emph{arXiv preprint arXiv:1912.02270}, 2019.

\bibitem[Li et~al.(2020)Li, Wei, Chi, Gu, and Chen]{li2020sample}
Gen Li, Yuting Wei, Yuejie Chi, Yuantao Gu, and Yuxin Chen.
\newblock Sample complexity of asynchronous q-learning: Sharper analysis and
  variance reduction.
\newblock \emph{Advances in neural information processing systems}, 2020.

\bibitem[Li et~al.(2021)Li, Cai, Chen, Gu, Wei, and Chi]{li2021tightening}
Gen Li, Changxiao Cai, Yuxin Chen, Yuantao Gu, Yuting Wei, and Yuejie Chi.
\newblock Tightening the dependence on horizon in the sample complexity of
  q-learning.
\newblock In \emph{International Conference on Machine Learning}, 2021.

\bibitem[Li et~al.(2024)Li, Cai, Chen, Wei, and Chi]{li2024q}
Gen Li, Changxiao Cai, Yuxin Chen, Yuting Wei, and Yuejie Chi.
\newblock Is q-learning minimax optimal? a tight sample complexity analysis.
\newblock \emph{Operations Research}, 2024.

\bibitem[Liang et~al.(2023)Liang, Xu, Zhang, and Mandic]{liang2023almost}
Yuqing Liang, Dongpo Xu, Naimin Zhang, and Danilo~P Mandic.
\newblock Almost sure convergence of stochastic composite objective mirror
  descent for non-convex non-smooth optimization.
\newblock \emph{Optimization Letters}, 2023.

\bibitem[Liu and Yuan(2022)]{liu2022almost}
Jun Liu and Ye~Yuan.
\newblock On almost sure convergence rates of stochastic gradient methods.
\newblock In \emph{Conference on Learning Theory}, 2022.

\bibitem[Liu and Yuan(2024)]{liu2024almost}
Jun Liu and Ye~Yuan.
\newblock Almost sure convergence rates analysis and saddle avoidance of
  stochastic gradient methods.
\newblock \emph{Journal of Machine Learning Research}, 2024.

\bibitem[Liu et~al.(2024)Liu, Chen, and Zhang]{liu2024ode}
Shuze Liu, Shuhang Chen, and Shangtong Zhang.
\newblock The {ODE} method for stochastic approximation and reinforcement
  learning with markovian noise, 2024.

\bibitem[Maei(2011)]{maei2011gradient}
Hamid~Reza Maei.
\newblock \emph{Gradient temporal-difference learning algorithms}.
\newblock PhD thesis, University of Alberta, 2011.

\bibitem[Mou et~al.(2022)Mou, Khamaru, Wainwright, Bartlett, and
  Jordan]{mou2022optimal}
Wenlong Mou, Koulik Khamaru, Martin~J Wainwright, Peter~L Bartlett, and
  Michael~I Jordan.
\newblock Optimal variance-reduced stochastic approximation in banach spaces.
\newblock \emph{arXiv preprint arXiv:2201.08518}, 2022.

\bibitem[Munos et~al.(2016)Munos, Stepleton, Harutyunyan, and
  Bellemare]{munos2016safe}
R{\'e}mi Munos, Tom Stepleton, Anna Harutyunyan, and Marc Bellemare.
\newblock Safe and efficient off-policy reinforcement learning.
\newblock In \emph{Advances in Neural Information Processing Systems}, 2016.

\bibitem[Nachum et~al.(2019)Nachum, Chow, Dai, and Li]{nachum2019dualdice}
Ofir Nachum, Yinlam Chow, Bo~Dai, and Lihong Li.
\newblock Dualdice: Behavior-agnostic estimation of discounted stationary
  distribution corrections.
\newblock In \emph{Advances in Neural Information Processing Systems}, 2019.

\bibitem[Pelletier(1998)]{pelletier1998almost}
Mariane Pelletier.
\newblock On the almost sure asymptotic behaviour of stochastic algorithms.
\newblock \emph{Stochastic processes and their applications}, 1998.

\bibitem[Prashanth et~al.(2021)Prashanth, Korda, and
  Munos]{prashanth2021concentration}
LA~Prashanth, Nathaniel Korda, and Remi Munos.
\newblock Concentration bounds for temporal difference learning with linear
  function approximation: the case of batch data and uniform sampling.
\newblock \emph{Machine Learning}, 2021.

\bibitem[Precup et~al.(2000)Precup, Sutton, and
  Singh]{precup:2000:eto:645529.658134}
Doina Precup, Richard~S. Sutton, and Satinder~P. Singh.
\newblock Eligibility traces for off-policy policy evaluation.
\newblock In \emph{Proceedings of the International Conference on Machine
  Learning}, 2000.

\bibitem[Puterman(2014)]{puterman2014markov}
Martin~L Puterman.
\newblock \emph{Markov decision processes: discrete stochastic dynamic
  programming}.
\newblock John Wiley \& Sons, 2014.

\bibitem[Qu and Wierman(2020)]{qu2020finite}
Guannan Qu and Adam Wierman.
\newblock Finite-time analysis of asynchronous stochastic approximation and $ q
  $-learning.
\newblock In \emph{Conference on Learning Theory}, 2020.

\bibitem[Rakhlin et~al.(2011)Rakhlin, Shamir, and Sridharan]{rakhlin2011making}
Alexander Rakhlin, Ohad Shamir, and Karthik Sridharan.
\newblock Making gradient descent optimal for strongly convex stochastic
  optimization.
\newblock \emph{arXiv preprint arXiv:1109.5647}, 2011.

\bibitem[Robbins and Monro(1951)]{robbins1951stochastic}
Herbert Robbins and Sutton Monro.
\newblock A stochastic approximation method.
\newblock \emph{The Annals of Mathematical Statistics}, 1951.

\bibitem[Robbins and Siegmund(1971)]{robbins1971convergence}
Herbert Robbins and David Siegmund.
\newblock A convergence theorem for non negative almost supermartingales and
  some applications.
\newblock In \emph{Optimizing methods in statistics}. 1971.

\bibitem[Sebbouh et~al.(2021)Sebbouh, Gower, and Defazio]{sebbouh2021almost}
Othmane Sebbouh, Robert~M Gower, and Aaron Defazio.
\newblock
  robbins1951stochastic,benveniste1990mp,kushner2003stochastic,borkar2009stochasticalmost
  sure convergence rates for stochastic gradient descent and stochastic heavy
  ball.
\newblock In \emph{Conference on Learning Theory}, 2021.

\bibitem[Shah and Xie(2018)]{shah2018q}
Devavrat Shah and Qiaomin Xie.
\newblock Q-learning with nearest neighbors.
\newblock \emph{Advances in Neural Information Processing Systems}, 2018.

\bibitem[Srikant and Ying(2019)]{srikant2019finite}
Rayadurgam Srikant and Lei Ying.
\newblock Finite-time error bounds for linear stochastic approximation andtd
  learning.
\newblock In \emph{Proceedings of the Conference on Learning Theory}, 2019.

\bibitem[Sutton(1988)]{sutton1988learning}
Richard~S. Sutton.
\newblock Learning to predict by the methods of temporal differences.
\newblock \emph{Machine Learning}, 1988.

\bibitem[Sutton and Barto(2018)]{sutton2018reinforcement}
Richard~S Sutton and Andrew~G Barto.
\newblock \emph{Reinforcement Learning: An Introduction (2nd Edition)}.
\newblock MIT press, 2018.

\bibitem[Sutton et~al.(2008)Sutton, Szepesv{\'{a}}ri, and
  Maei]{sutton2009convergent}
Richard~S. Sutton, Csaba Szepesv{\'{a}}ri, and Hamid~Reza Maei.
\newblock A convergent o(n) temporal-difference algorithm for off-policy
  learning with linear function approximation.
\newblock In \emph{Advances in Neural Information Processing Systems}, 2008.

\bibitem[Sutton et~al.(2009)Sutton, Maei, Precup, Bhatnagar, Silver,
  Szepesv{\'{a}}ri, and Wiewiora]{sutton2009fast}
Richard~S. Sutton, Hamid~Reza Maei, Doina Precup, Shalabh Bhatnagar, David
  Silver, Csaba Szepesv{\'{a}}ri, and Eric Wiewiora.
\newblock Fast gradient-descent methods for temporal-difference learning with
  linear function approximation.
\newblock In \emph{Proceedings of the International Conference on Machine
  Learning}, 2009.

\bibitem[Szepesv{\'a}ri(1997)]{szepesvari1997asymptotic}
Csaba Szepesv{\'a}ri.
\newblock The asymptotic convergence-rate of q-learning.
\newblock \emph{Advances in neural information processing systems}, 1997.

\bibitem[Tadic(2002)]{TADIC2002455}
Vladislav~B. Tadic.
\newblock On the almost sure rate of convergence of temporal-difference
  learning algorithms.
\newblock \emph{IFAC Proceedings Volumes}, 2002.

\bibitem[Tadic(2004)]{tadic2004almost}
Vladislav~B Tadic.
\newblock On the almost sure rate of convergence of linear stochastic
  approximation algorithms.
\newblock \emph{IEEE Transactions on Information Theory}, 2004.

\bibitem[Telgarsky(2022)]{telgarsky2022stochastic}
Matus Telgarsky.
\newblock Stochastic linear optimization never overfits with
  quadratically-bounded losses on general data.
\newblock In \emph{Conference on Learning Theory}, 2022.

\bibitem[Thoppe and Borkar(2015)]{thoppe2015concentration}
Gugan Thoppe and Vivek~S Borkar.
\newblock A concentration bound for stochastic approximation via alekseev's
  formula.
\newblock \emph{arXiv preprint arXiv:1506.08657}, 2015.

\bibitem[Tsitsiklis(1994)]{tsitsiklis1994asynchronous}
John~N Tsitsiklis.
\newblock Asynchronous stochastic approximation and q-learning.
\newblock \emph{Machine learning}, 1994.

\bibitem[Tsitsiklis and Roy(1996)]{tsitsiklis1997analysis}
John~N. Tsitsiklis and Benjamin~Van Roy.
\newblock Analysis of temporal-diffference learning with function
  approximation.
\newblock In \emph{IEEE Transactions on Automatic Control}, 1996.

\bibitem[Vidyasagar(2023)]{vidyasagar2023convergence}
M~Vidyasagar.
\newblock Convergence of stochastic approximation via martingale and converse
  lyapunov methods.
\newblock \emph{Mathematics of Control, Signals, and Systems}, 2023.

\bibitem[Wainwright(2019)]{wainwright2019stochastic}
Martin~J Wainwright.
\newblock Stochastic approximation with cone-contractive operators: Sharp
  $l\_\infty$-bounds for $ q $-learning.
\newblock \emph{arXiv preprint arXiv:1905.06265}, 2019.

\bibitem[Watkins and Dayan(1992)]{watkins1992q}
Christopher~JCH Watkins and Peter Dayan.
\newblock Q-learning.
\newblock \emph{Machine Learning}, 1992.

\bibitem[Watkins(1989)]{watkins1989learning}
Christopher John Cornish~Hellaby Watkins.
\newblock \emph{Learning from delayed rewards}.
\newblock PhD thesis, King's College, Cambridge, 1989.

\bibitem[Weissmann et~al.(2024)Weissmann, Klein, Azizian, and
  Döring]{weissmann2024sureconvergenceratesstochastic}
Simon Weissmann, Sara Klein, Waïss Azizian, and Leif Döring.
\newblock Almost sure convergence rates of stochastic gradient methods under
  gradient domination, 2024.

\bibitem[Zhang et~al.(2020{\natexlab{a}})Zhang, Liu, and
  Whiteson]{zhang2020gradientdice}
Shangtong Zhang, Bo~Liu, and Shimon Whiteson.
\newblock Gradient{DICE}: Rethinking generalized offline estimation of
  stationary values.
\newblock In \emph{Proceedings of the International Conference on Machine
  Learning}, 2020{\natexlab{a}}.

\bibitem[Zhang et~al.(2020{\natexlab{b}})Zhang, Liu, Yao, and
  Whiteson]{zhang2019provably}
Shangtong Zhang, Bo~Liu, Hengshuai Yao, and Shimon Whiteson.
\newblock Provably convergent two-timescale off-policy actor-critic with
  function approximation.
\newblock In \emph{Proceedings of the International Conference on Machine
  Learning}, 2020{\natexlab{b}}.

\bibitem[Zhang et~al.(2021)Zhang, Wan, Sutton, and Whiteson]{zhang2020average}
Shangtong Zhang, Yi~Wan, Richard~S. Sutton, and Shimon Whiteson.
\newblock Average-reward off-policy policy evaluation with function
  approximation.
\newblock In \emph{Proceedings of the International Conference on Machine
  Learning}, 2021.

\bibitem[Zhang et~al.(2022)Zhang, des Combes, and
  Laroche]{DBLP:journals/corr/abs-2111-02997}
Shangtong Zhang, Remi~Tachet des Combes, and Romain Laroche.
\newblock Global optimality and finite sample analysis of softmax off-policy
  actor critic under state distribution mismatch.
\newblock \emph{Journal of Machine Learning Research}, 2022.

\bibitem[Zhang et~al.(2023)Zhang, Des~Combes, and
  Laroche]{zhang2023convergence}
Shangtong Zhang, Remi~Tachet Des~Combes, and Romain Laroche.
\newblock On the convergence of sarsa with linear function approximation.
\newblock In \emph{International Conference on Machine Learning}, 2023.

\bibitem[Zhu et~al.(2022)Zhu, Lou, and Wu]{zhu2022beyond}
Wanrong Zhu, Zhipeng Lou, and Wei~Biao Wu.
\newblock Beyond sub-gaussian noises: Sharp concentration analysis for
  stochastic gradient descent.
\newblock \emph{Journal of Machine Learning Research}, 2022.

\bibitem[Zou et~al.(2019)Zou, Xu, and Liang]{zou2019finite}
Shaofeng Zou, Tengyu Xu, and Yingbin Liang.
\newblock Finite-sample analysis for {SARSA} with linear function
  approximation.
\newblock In \emph{Advances in Neural Information Processing Systems}, 2019.

\end{thebibliography}

\end{document}